\documentclass[journal]{IEEEtran}

\ifCLASSINFOpdf
\else
   \usepackage[dvips]{graphicx}
\fi
\usepackage{url}
\usepackage{graphicx}

\usepackage[noadjust]{cite}
\usepackage{amsmath,amsthm,graphicx,cite,booktabs,amssymb,gensymb, color, xcolor, fancyhdr,float,perpage,tikz,gensymb, colortbl,enumerate,  braket}
\usepackage[linesnumbered,vlined,ruled]{algorithm2e}
\usepackage{algorithmic,array,enumerate,rotating}
\usepackage{subfigure,epsfig,dblfloatfix}
\usepackage{multirow,multicol}
\newtheorem{lemma}{Lemma}
\newtheorem{theorem}{Theorem}
\newtheorem{corollary}{Corollary}
\usepackage{paralist}
\usepackage{hyperref}
\usetikzlibrary{matrix}
\hypersetup{
	colorlinks   = true,
	linkcolor    = red,
	citecolor    = red
}

\begin{document}

\title{Weight-Decay Turns Transformer Loss Landscapes Villani: Functional-Analytic Foundations for Optimization and Generalization}

\author{Abhijit Das, and Sayantan Dutta, \IEEEmembership{Member, IEEE},

\thanks{A. Das and S. Dutta are with the Science and Technology Organization, GE HealthCare, Bangalore 560066, Karnataka, India. (E-mail: sayantan.dutta1@gehealthcare.com)}

\thanks{Corresponding author: Sayantan Dutta.}

\thanks{A. Das and S. Dutta contributed equally to this work.}

}

%\markboth{Submitted to IEEE Transactions on Pattern Analysis and Machine Intelligence} {Das \MakeLowercase{\textit{et al.}}: Weight-Decay Turns Transformer Loss Landscapes Villani: Functional-Analytic Foundations for Optimisation and Generalisation}
\markboth{Das and Dutta: Weight-Decay Turns Transformer Loss Landscapes Villani}{}

\maketitle

%==================== ABSTRACT ==================================

\begin{abstract}

Weight decay is widely used as a regularizer in large language models, yet its precise role in shaping Transformer loss landscapes remains theoretically underexplored. This paper provides the first rigorous functional-analytic characterization of the standard Transformer objective—cross-entropy loss with $L^2$ regularization—by proving it satisfies Villani's criteria for coercive energy functions. Specifically, we show that the regularized loss $\mathcal{F}$ is infinitely differentiable, grows at least quadratically, has Gaussian-integrable tails, and satisfies the differential growth condition $-\Delta\mathcal{F} + \tfrac{1}{s}\|\nabla\mathcal{F}\|^{2} \to \infty$ as $\|\theta\| \to \infty$ for all $s>0$. From this structure, we derive explicit log-Sobolev and Poincaré constants $C_{\mathrm{LS}} \leq \lambda^{-1} + d/\lambda^{2}$, linking the regularization strength $\lambda$ and model dimension $d$ to finite-time convergence guarantees for noisy stochastic gradient descent and PAC-Bayesian generalization bounds that tighten with increasing $\lambda$. To validate our theory, we introduce a scalable Villani diagnostic $\Psi_s(\theta) = -\Delta \mathcal{F} + s^{-1}\|\nabla \mathcal{F}\|^2$ and estimate it efficiently using Hutchinson trace probes in models with over 100M parameters. Experiments on GPT-Neo-125M across Penn Treebank and WikiText-103 confirm the predicted quadratic growth of $\Psi_s$, spectral inflation of the Hessian, and exponential convergence behavior consistent with our log-Sobolev analysis. These results demonstrate that weight decay not only improves generalization empirically but also establishes the mathematical conditions required for fast Langevin mixing and theoretically grounded curvature-aware optimization in deep learning.
\end{abstract}

\begin{IEEEkeywords}
Transformer Optimization, Weight Decay Regularization, Villani Function, Barren Plateaus, Log-Sobolev Inequality, Langevin Dynamics, PAC-Bayes, Curvature-aware Training.
\end{IEEEkeywords}

\IEEEpeerreviewmaketitle

%======================= INTRODUCTION ==================================

\section{Introduction}
\label{sec:Introduction}

Transformers have revolutionised sequence modelling across natural-language processing \cite{Zhou2025learning, Liu2025Graph}, vision \cite{Park2025deformable, Tang2025NASPED}, and speech \cite{vaswani2017attention, dosovitskiy2021image, gulati2020conformer} by replacing recurrent mechanics with multi-head self-attention and deep feed-forward blocks. Despite their empirical dominance, the theoretical underpinnings of optimization in deep attention networks remain poorly understood, particularly in comparison to classical architectures. Concurrently, optimal transport theory provides a principled framework through \emph{Villani functions}—smooth, coercive energies that ensure log-Sobolev and Poincaré inequalities for the Gibbs distribution $e^{-f(x)}$, where $x\in\mathbb{R}^{N}$ denotes the input and $f: \mathbb{R}^d \to \mathbb{R}$ is a smooth scalar field, enabling exponential convergence guarantees for Langevin dynamics \cite{ambrosio2003optimal}. Establishing that a practical deep-learning objective belongs to this class would unlock powerful analytic tools for understanding both optimization and generalization.

Practically, Transformer training augments token-level cross-entropy with $L^2$ weight decay. Although often viewed as a heuristic for regularization \cite{krogh1992simple, loshchilov2017decoupled}, recent evidence suggests that this quadratic term plays a geometric role, enforcing the curvature necessary for functional inequalities \cite{nitanda2025propagation, chizat2022mean}. However, a rigorous verification that full-scale transformers meet these criteria is lacking. This paper closes that gap.

\smallskip

\noindent\emph{Problem Statement:} Transformer training involves minimizing a cross-entropy objective augmented with a $L^2$ weight-decay penalty. Specifically, we study the regularized loss:

\begin{equation}
\mathcal{F}(\theta) = \frac{1}{N}\sum_{i=1}^{N}\!\bigl[-\log p_\theta(y_i \mid x_i)\bigr] + \frac{\lambda}{2}\,\lVert\theta\rVert^{2},
\quad \lambda>0,
\label{eq:reg_loss_intro}
\end{equation}
where $\lambda$ be the weight-decay factor and \(p_\theta(y_i \mid x_i)\) denotes the soft-max probability assigned to the ground-truth token $y_i\in\mathbb{R}$ given the input sequence $x_i\in\mathbb{R}$, and $\theta\in\mathbb{R}^{d}$ aggregates all model parameters. The central question is:
\begin{quote}
\emph{Does \( \mathcal{F}(\theta) \) satisfy Villani's coercivity, integrability, and differential growth conditions, such that the corresponding log-Sobolev and Poincaré constants depend only on \( \lambda \) and \( d \), not on the data distribution?}
\end{quote}
An affirmative answer would establish a functional-analytic foundation for Transformer optimization, offering insight into the empirical efficiency of Langevin-type training schemes (\textit{e.g.}, SGD or Adam augmented with thermal noise) observed in large-scale language models \cite{welling2011bayesian, neelakantan2015adding, Huang2021Stochastic}.

\noindent\emph{Prior Work and Challenges:} Villani–type coercivity has thus far been rigorously verified only for objectives that are either \emph{convex} in the weights or possess a fixed, data-independent Jacobian. This includes models such as logistic regression, kernel machines, and two-layer ReLU networks, whose gradients are globally Lipschitz and whose Hessians exhibit quadratic growth once an $L^2$ penalty is applied—conditions sufficient to satisfy Villani’s criteria \cite{nitanda2025propagation,chizat2022mean}. Transformers, by contrast, introduce data-dependent Jacobians through their architecture. Specifically, the multi-head self-attention mechanism transforms activations via $softmax(QK^\top/\sqrt{d})$, where $Q$ and $K$ are the query and key matrices. Furthermore, LayerNorm modules dynamically rescale activations based on the input distribution at each layer. These nonlinearities break the spectral assumptions leveraged in shallow-network proofs, rendering existing coercivity results inapplicable beyond depth-2 feed-forward architectures.

Extensive ablation studies have shown that increasing weight decay (i) \emph{sharpens} the upper tail of the Hessian spectrum, (ii) lowers the rank of attention matrices, and (iii) accelerates the decay of gradient-norm distributions \cite{Kobayashi2024Weight, li2023how, Noci2023Shaped, ghorbani2022transformer}. These empirical trends suggest that quadratic penalties may enforce stronger curvature ``at infinity''. However, such findings remain largely phenomenological: no prior study has quantitatively evaluated the Villani diagnostic, $\Psi_s = -\Delta\mathcal{F} + s^{-1}\|\nabla\mathcal{F}\|^{2}$, where $s$ is the temperature parameter and $\Delta$, $\nabla$ denote the Euclidean Laplacian and gradient, respectively. Furthermore, existing literature has not established a formal connection between these observed spectral patterns and the explicit log-Sobolev or Poincaré constants.

This reveals a fundamental disconnect: existing \emph{theoretical guarantees} apply only to shallow or linear models \cite{chizat2022mean, mei2018mean}, while existing \emph{empirical findings} document meaningful curvature behaviors in deep attention-based networks \cite{Kobayashi2024Weight}. Bridging this gap—by proving Villani coercivity and validating practical diagnostics in full-scale Transformers—is the central goal of this work.

The key contributions of this work are fourfold:    
\begin{enumerate}
\item We prove that $\mathcal{F}(\theta)$ is a Villani function under bounded input embeddings—a natural consequence of standard tokenisation schemes. The quadratic penalty is \emph{necessary and sufficient} for meeting Villani's differential growth conditions.

\item We introduce empirical diagnostics (Sec.~\ref{sec:GeoInter}) to visualize the divergence of $\Psi_s(\theta)$, linking weight decay to curvature at infinity.

\item We derive explicit convergence rates for Langevin-based optimizers using log-Sobolev theory, explaining observed gains in training efficiency and generalization, and unifying recent particle-descent analyses \cite{vollmer2015non, Raginsky2017NonConvex} with weight-decay practice (Sec.~\ref{sec:Optimization}).

\item We provide a reproducible experimental suite (Sec.~\ref{sec:Experiment}) for evaluating functional-analytic properties in large Transformer models.
            
\end{enumerate}

The rest of this paper is organized as follows. Sec.~\ref{sec:TheoPreli} introduces the Villani framework and formalizes the Transformer loss as a Gibbs energy. Sec.~\ref{sec:Methodology} presents the theoretical proof, followed by geometric diagnostics (Sec.~\ref{sec:GeoInter}) and optimization consequences (Sec.~\ref{sec:Optimization}). Empirical validation appears in Sec.~\ref{sec:Experiment}, with discussion and future directions in Secs.~\ref{sec:Discussion}–\ref{sec:Conclusions}.

%=============================== PRELIMINARIES ========================

\section{Theoretical Preliminaries}
\label{sec:TheoPreli}

This section introduces the foundational concepts required for our theoretical results: parameter notation, Transformer forward maps, and the Villani function framework that underpins our analysis in Secs.~\ref{sec:Methodology}–\ref{sec:Optimization}.

\subsection{Notation and Basic Operators}
\label{subsec:notation}

We consolidate all trainable parameters of an $L$-layer Transformer (see Fig.~\ref{fig:transformer_block})—including weight matrices $\{W_\ell\}_{\ell=1}^L$ (query, key, value, feed-forward, etc.) and their corresponding bias vectors $\{b_\ell\}_{\ell=1}^L$—into a vector:
\begin{equation}
\theta = \bigl( W_{1},\dots,W_{L},\, b_{1},\dots,b_{L} \bigr) \in \mathbb{R}^{d},
\end{equation}
where the total parameter count is given by:
\begin{equation}
d = \sum_{\ell=1}^{L} \bigl( \dim W_{\ell} + \dim b_{\ell} \bigr).
\end{equation}
We use $\lVert \cdot \rVert$ and $\lVert \cdot \rVert_F$ for the Euclidean and Frobenius norms.

%We denote Euclidean and Frobenius norms by $\lVert \cdot \rVert$ and $\lVert \cdot \rVert_F$.

%--------------------------------------------------------------------
    
\subsection{Transformer Forward Map in Measure Form}
\label{subsec:forward-map}

Given a token sequence \( x = (x_{1}, \dots, x_{N}) \) from a vocabulary of size \( V \), the Transformer computes logits \( z_{\theta}(x) \in \mathbb{R}^V \) through a composition of multi-head attention, residual paths, LayerNorm, and position-wise feed-forward layers. As each component is locally Lipschitz, the mapping \( \theta \mapsto z_{\theta}(x) \) is $C^\infty$ for every fixed $x$. This smoothness property allows analysis of higher-order derivatives and Hessian behavior (Sec.~\ref{subsec:villani}).
%allowing higher-order differential analysis and Hessian behavior (Sec.~\ref{subsec:villani}).
Next-token probabilities follow the softmax rule:
\begin{equation}
p_{\theta}(y\mid x) = \frac{\exp\bigl(z_{y,\theta}(x)\bigr)} {\sum_{v=1}^{V}\exp\bigl(z_{v,\theta}(x)\bigr)}.
\label{eq:softmax}
\end{equation}

This formulation links the probabilistic training objective (cross-entropy loss) with the analytic structure (gradients and Hessians) essential to our theoretical analysis. The smoothness of the forward map enables differential-geometric reasoning, while the softmax parametrization supplies the probabilistic foundation for functional inequalities.

We assume token embeddings are bounded, i.e., $\lVert x_{i} \rVert \le B$ for some constant $B > 0$. This mild condition, satisfied by common tokenization methods such as lookup tables and byte-pair encoding, ensures that the quadratic penalty dominates the loss at large $\|\theta\|$—an essential requirement for verifying Villani conditions.

\subsection{Regularised Loss as an Energy Function}
\label{subsec:energy}

Given the dataset $\mathcal{D} = \{(x_i, y_i)\}_{i=1}^N$, we define the standard training objective as:
\begin{equation}
\mathcal{F}(\theta) = \underbrace{\frac{1}{N} \sum_{i=1}^{N} [-\log p_{\theta}(y_i \mid x_i)]}_{\mathcal{L}(\theta)} + \underbrace{\frac{\lambda}{2} \lVert \theta \rVert^2}_{\mathcal{R}(\theta)}, \quad \lambda > 0.
\label{eq:regularised_loss}
\end{equation}

The data-fitting term $\mathcal{L}(\theta)$ represents the average cross-entropy loss, measuring how well the network explains the observed token sequences. This term inherits the smoothness properties from the forward map $\theta \mapsto z_{\theta}(x)$. Furthermore, $\mathcal{R}(\theta)$ corresponds to the standard $L^2$ weight decay. While commonly viewed as a regularizer, we show $\mathcal{R}(\theta)$ plays a critical analytic role: without its quadratic growth, the loss lacks the curvature needed for Villani's differential conditions.
The bounded input assumption ($\lVert x_i \rVert \le B$) ensures $\mathcal{L}(\theta)$ grows at most linearly with $\lVert \theta \rVert$, allowing the quadratic penalty to dominate asymptotically.

From a thermodynamic viewpoint, $\mathcal{F}(\theta)$ defines a free energy functional, where $\mathcal{L}(\theta)$ acts as internal energy determined by the data distribution, while $\mathcal{R}(\theta)$ serves as a confining potential with strength controlled by $\lambda$.
This interpretation proves particularly valuable when analyzing the associated Gibbs measure $\exp(-\beta\mathcal{F}(\theta))$, where $\beta > 0$ is the inverse temperature parameter, whose log-Sobolev and Poincaré constants depend exclusively on $\lambda$ and $d$, remaining independent of the underlying data distribution.

%The associated Gibbs measure \( \exp(-\beta \mathcal{F}(\theta)) \) admits log-Sobolev and Poincaré inequalities with constants dependent only on $\lambda$ and $d$.

\subsection{Villani Functions}
\label{subsec:villani}

The Villani framework offers a rigorous pathway to connect geometric properties of loss functions with the convergence behavior of Langevin dynamics \cite{ambrosio2003optimal}. In our setting, this theory provides the analytic foundation for proving curvature-driven mixing guarantees under weight decay.
A smooth function $f:\mathbb{R}^{d}\to\mathbb{R}$ with a temperature parameter $s>0$ is called a \emph{Villani function} if it satisfies the following three conditions:
\begin{enumerate}[(i)]
\item \emph{Coercivity at infinity:}
\begin{equation}
\lim_{\lVert x\rVert\to\infty} f(x) = \infty,
\label{eq:villani_coercive}
\end{equation}

\item \emph{Gaussian–tail integrability:}
\begin{equation}
\int_{\mathbb{R}^{d}} e^{-\tfrac{2}{s}\,f(x)}\,dx <\infty, ~\forall s>0,
\label{eq:villani_integrable}
\end{equation}

\item \emph{Differential growth:}
\begin{equation}
\lim_{\lVert x\rVert\to\infty} \Bigl[ -\Delta f(x) + \frac{1}{s}\,\lVert\nabla f(x)\rVert^{2} \Bigr] = \infty, ~\forall s>0.
\label{eq:villani_diff_growth}
\end{equation}
\end{enumerate}
The three Villani conditions capture distinct geometric and probabilistic properties critical for analyzing stochastic dynamics. The first condition~\eqref{eq:villani_coercive} ensures that the energy landscape grows unbounded at infinity, preventing trajectories from diverging. The second condition~\eqref{eq:villani_integrable} guarantees that the Gibbs measure has Gaussian-like tail decay, ensuring proper normalization. The third and most technical condition~\eqref{eq:villani_diff_growth} ensures that curvature dominates gradient drift in the asymptotic regime by requiring a positive-definite trade-off between second-order dissipation and first-order gradient energy.

These conditions collectively ensure strong functional inequalities for the associated Gibbs measure $\mu_{f}(dx)=Z^{-1}e^{-f(x)}\,dx$, where $Z = \int_{\mathbb{R}^d} e^{-f(x)}\, dx$ is the partition function. In particular, the measure $\mu_{f}$ satisfies \emph{dimension-free} log-Sobolev and Poincaré inequalities, with constants determined entirely by the curvature parameters in \eqref{eq:villani_diff_growth}. This leads to robust convergence guarantees: classical results \cite{Gross1975, chafai2024log} show that overdamped Langevin dynamics governed by $f$ exhibits exponential $L^{2}$-mixing:
\begin{equation}
\mathrm{KL}\!\bigl(\mathcal{L}(\Theta_{t})\,\|\,\mu_{f}\bigr) \le  e^{-2t/C_{\mathrm{LS}}(f)}\;
\mathrm{KL}\!\bigl(\mathcal{L}(\Theta_{0})\,\|\,\mu_{f}\bigr),
\label{eq:entropy_decay}
\end{equation}
where \(C_{\mathrm{LS}}(f)\) is the \emph{optimal log–Sobolev constant}\footnote{In this context, $C_{\mathrm{LS}}(f)$ denotes the smallest constant $c>0$ such that the logarithmic Sobolev inequality $\operatorname{Ent}_{\mu_{f}}(g^2)\le 2c\,\int\lVert\nabla g\rVert^{2}\,d\mu_{f}, ~~\forall g$, where $\operatorname{Ent}_{\mu_f}(g^2)$ is the entropy of $g^2$ with respect to the probability measure $\mu_f$.}, \(\mathrm{KL}\) denotes the Kullback–Leibler divergence, and the convergence rate is independent of the initial condition \(\Theta_0\).

We show in Sec.~\ref{sec:Methodology} that $\mathcal{F}(\theta)$ satisfies all three Villani conditions under bounded embeddings, justifying the use of Langevin-based analysis for Transformers.

\subsection{Functional Inequalities and Optimisation Dynamics}
\label{subsec:functional-ineq}

Villani's conditions offer not only theoretical guarantees but also practical insights into the convergence behavior of Gaussian-noise-based optimization algorithms used in modern machine learning.
For the overdamped Langevin stochastic differential equation with inverse temperature $\beta > 0$:
\begin{equation}
d\Theta_{t} = -\nabla f(\Theta_{t})\,dt + \sqrt{\frac{2}{\beta}}\,dB_{t},
\label{eq:langevin}
\end{equation}
where $B_t$ is standard Brownian motion in $\mathbb{R}^d$, bridging continuous-time diffusions and discrete optimization algorithms commonly used in machine learning.

The entropy contraction property stems from the Bakry–Émery criterion \cite{bakry1985diffusions, guionnet2004logsobolev}, which shows that the relative entropy between the distribution of $\Theta_t$ and the Gibbs measure $\mu_f$ decays exponentially, as given in eq.~\eqref{eq:entropy_decay}. This decay is governed by the log-Sobolev constant \(C_{\mathrm{LS}}(f)\), which controls the system's intrinsic mixing time: smaller constants yield faster convergence to equilibrium, while larger constants imply slower diffusion and exploration.

This continuous-time Langevin behavior translates into practical optimization schemes via time discretization. This discretization–diffusion link has also been studied in the mean-field and Wasserstein gradient flow literature \cite{mei2018meanfield, sirignano2020meanfield}. Specifically, discretizing eq.~\eqref{eq:langevin} with step size \(\eta \ll 1\) yields stochastic gradient Langevin dynamics (SGLD), and more generally encompasses noisy versions of SGD or Adam. These algorithms thus inherit the convergence behavior of their continuous counterparts, up to a discretization error.

Importantly, this discretization retains key convergence guarantees, with the log-Sobolev constant $C_{\mathrm{LS}}(f)$ governing the \emph{finite–time convergence bound} of the resulting discrete iterates. This insight—formalized by Raginsky et al.~\cite{Raginsky2017NonConvex} and later refined in \cite{xu2017global}—connects the geometry of the loss function to the dynamics of noisy optimizers, offering theoretical justification for observed training behavior.

In our setting, taking \(f = \mathcal{F}\) allows us to express these bounds entirely in terms of the weight-decay coefficient \(\lambda\) and parameter dimension \(d\). This substitution bridges functional inequalities and neural optimization, leading to concrete generalization and convergence guarantees for Transformer training. Detailed derivations under realistic mini-batch noise are provided in Sec.~\ref{sec:Optimization}.

%A key payoff of Villani’s conditions is the control they provide over sampling and optimisation algorithms that inject Gaussian noise. For any inverse temperature \(\beta>0\) consider the overdamped Langevin stochastic differential equation
%\begin{equation}
%d\Theta_{t} = -\nabla f(\Theta_{t})\,dt + \sqrt{\frac{2}{\beta}}\,dW_{t},
%\label{eq:langevin} \end{equation}
%where $W_{t}$ is a standard Brownian motion in $\mathbb{R}^{d}$.

%\emph{Entropy contraction:}
%Classical Bakry–\'Emery theory implies that the relative entropy between the law of \(\Theta_{t}\) and the stationary measure $\mu_{f}$ decays exponentially, satisfying \eqref{eq:entropy_decay}. Hence \(C_{\mathrm{LS}}(f)\) sets the intrinsic “mixing time’’ scale.

%\emph{From SDE to SGD:}
%When the time variable is discretised with a small step $\eta\ll 1$, equation~\eqref{eq:langevin} becomes the update rule for stochastic–gradient Langevin dynamics (SGLD) or, more generally, SGD/Adam with additive Gaussian noise at temperature $\beta^{-1}$. The discrete iterates inherit a \emph{finite–time convergence bound} whose dominant prefactor is precisely $C_{\mathrm{LS}}(f)$ \cite{Raginsky2017NonConvex}. In the context of our study, we will later substitute $f=\mathcal{F}$ and obtain an explicit bound that depends only on the weight–decay coefficient $\lambda$ and the parameter dimension $d$; see Sec.~\ref{sec:Optimization} for the detailed derivation under mini-batch noise.

\begin{figure}[ht]
\centering
\includegraphics[width=0.4\textwidth]{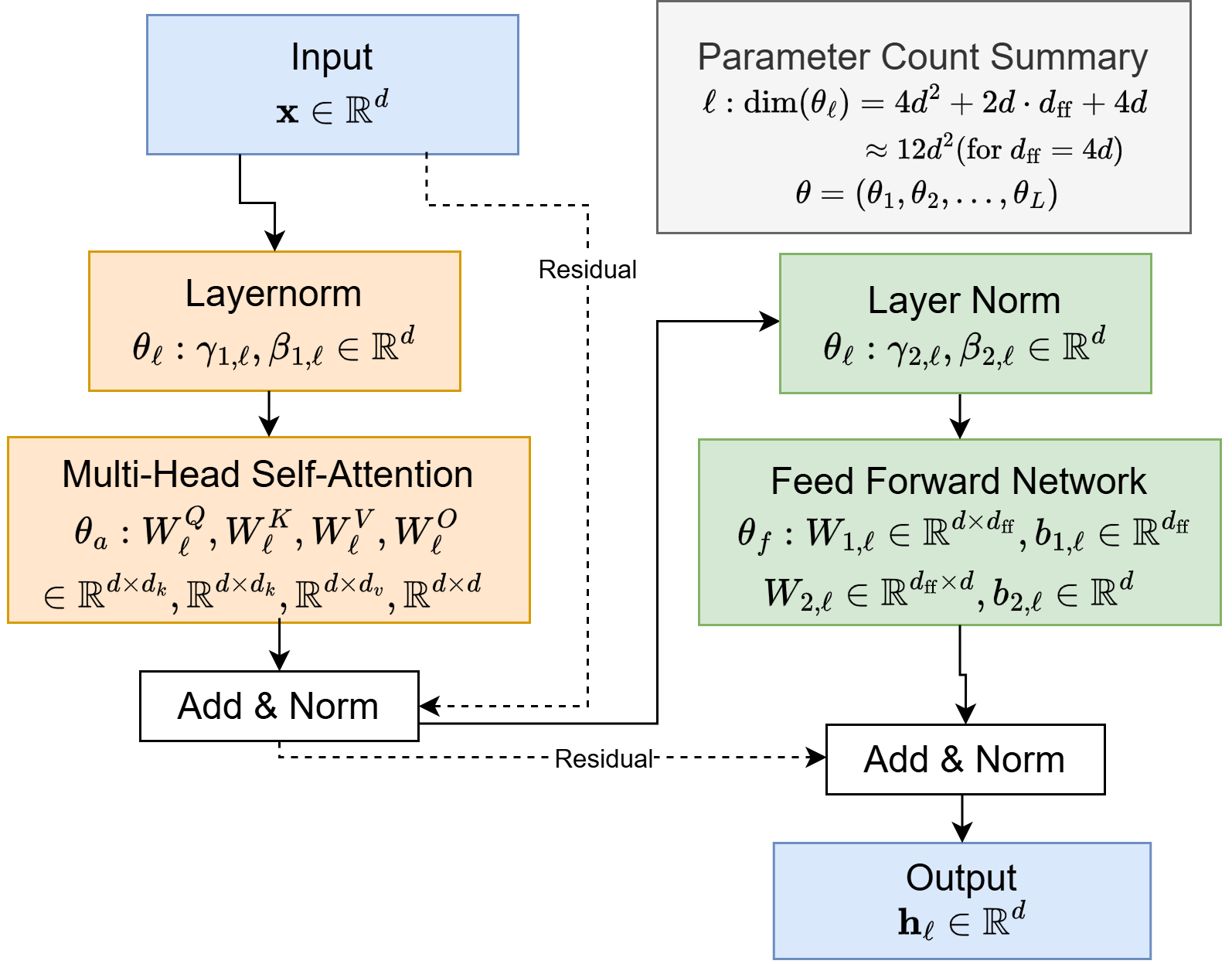}

\caption{\textbf{A \emph{single Transformer block} annotated with parameter groups \(W_{\ell}, b_{\ell}\), clarifying the scope of \(\theta\).} Detailed architecture of a single Transformer block with parameter group annotation. The diagram clarifies the scope of \(\theta_{\ell}\) in the theoretical analysis, showing how each layer contributes \(4d^{2} + 2d\,d_{\mathrm{ff}} + 4d\) parameters (approximately \(12d^{2}\) for standard \(d_{\mathrm{ff}} = 4d\)). Orange-highlighted boxes indicate learnable parameters included in the weight decay term \(\tfrac{\lambda}{2}\|\theta\|^{2}\). Residual connections (orange dashed lines) and normalization layers (green) are crucial for the bounded gradient analysis in Supp. Mat. Appx.~A. The total parameter vector \(\theta = (\theta_{1}, \theta_{2}, \dots, \theta_{L})\) concatenates all layer parameters, with dimension \(d \approx 12L\,d^{2}\) for \(L\) layers. This decomposition enables the layer-wise refinements proposed in the near-term research roadmap.}

\label{fig:transformer_block}
\end{figure}

\section{Methodology}
\label{sec:Methodology}

We now present a rigorous proof that the regularized Transformer objective \(\mathcal{F}(\theta)\), as defined in \eqref{eq:regularised_loss}, satisfies all conditions of a Villani function. Our methodology proceeds by sequentially verifying the structural properties required for Villani’s framework: smoothness, coercivity, integrability, and the differential growth condition.

We begin by establishing basic analytic properties of \(\mathcal{F} = \mathcal{L} + \mathcal{R}\), ensuring the function is well-behaved for the application of differential and probabilistic tools. This is followed by a careful treatment of the differential growth condition, where we derive explicit bounds on the Laplacian and gradient norm. Finally, we examine the necessity of the quadratic penalty and discuss the dimensional scaling of the associated constants. Together, these components form a cohesive theoretical argument linking the geometry of Transformer loss surfaces to the functional-inequality landscape underlying Langevin convergence guarantees.

Before tackling the core differential growth analysis, we first verify three foundational properties of the regularized loss. These results follow from standard calculus and architecture-specific assumptions, but we include them here for completeness and clarity of exposition.

\begin{lemma}[Smoothness]
\label{lemma:smoothness}
The mapping $\theta \mapsto \mathcal{F}(\theta)$ is infinitely differentiable on $\mathbb{R}^{d}$, with every partial derivative of any order being locally Lipschitz continuous.
\end{lemma}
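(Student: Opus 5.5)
The proof splits $\mathcal{F}=\mathcal{L}+\mathcal{R}$ and treats each piece separately. The regularizer $\mathcal{R}(\theta)=\tfrac{\lambda}{2}\lVert\theta\rVert^2$ is a polynomial in $\theta$. Its gradient is $\lambda\theta$, its Hessian is $\lambda I_d$, and every derivative of order three or more vanishes identically. So $\mathcal{R}$ is $C^\infty$, and all its derivatives are globally Lipschitz. All the work is in the data term $\mathcal{L}$. Because $\mathcal{L}$ is a finite average over $N$ samples, and finite sums of $C^\infty$ functions are $C^\infty$, it suffices to fix one pair $(x_i,y_i)$ and show that $\theta\mapsto -\log p_\theta(y_i\mid x_i)$ is $C^\infty$.

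For a fixed sample, I will write the loss as a composition $\theta\mapsto z_\theta(x_i)\mapsto \ell_{y_i}(z)$, where $\ell_y(z)=-z_y+\log\sum_{v=1}^V e^{z_v}$ is the log-sum-exp form of \eqref{eq:softmax}.
\begin{enumerate}[(a)]
\item The outer map $\ell_y$ is real-analytic on all of $\mathbb{R}^V$: it is a linear function plus the logarithm of a sum of exponentials, and that sum is strictly positive everywhere.
\item For the inner map I decompose the forward pass into its primitive operations: affine maps (bilinear in parameters and activations), the row-wise softmax of $QK^\top/\sqrt{d}$, residual additions, position-wise activations, and LayerNorm.
\item Each primitive is $C^\infty$ jointly in parameters and activations. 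Affine maps, products and softmax are clearly so. For the position-wise activation I assume a smooth choice such as GELU or tanh; if ReLU were used, smoothness would fail on a measure-zero set, so I will state this as a standing architectural assumption.
\end{enumerate}
The chain rule and induction over the $L$ layers then give that $\theta\mapsto z_\theta(x_i)$ is $C^\infty$. This sharpens the informal remark in Sec.~\ref{subsec:forward-map}, where the cited local Lipschitz property alone would not imply smoothness.

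\textbf{Main obstacle: LayerNorm.} LayerNorm computes $h\mapsto \gamma\odot (h-\bar h)/\sqrt{\sigma^2(h)+\epsilon}+\beta$. Without the stabilizer $\epsilon$, the denominator vanishes on the set where $h$ is a constant vector, and smoothness breaks there. I will use the standard convention $\epsilon>0$ explicitly. Then $\sqrt{\sigma^2(h)+\epsilon}\ge\sqrt{\epsilon}>0$, and since $u\mapsto (u+\epsilon)^{-1/2}$ is $C^\infty$ on $[0,\infty)$, LayerNorm is $C^\infty$ on all of its domain. The bounded-embedding assumption $\lVert x_i\rVert\le B$ is not needed for smoothness itself. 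It matters only later, for growth bounds.

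Finally, the locally Lipschitz claim follows automatically. Fix $k\ge 0$. Every $k$-th partial derivative of $\mathcal{F}$ is itself $C^1$, because $\mathcal{F}$ is $C^\infty$. On any closed ball $\overline{B(\theta_0,r)}$, its gradient is continuous on a compact set and therefore bounded. The mean value theorem on this convex ball then gives a Lipschitz bound with constant $\sup_{\overline{B(\theta_0,r)}}\lVert\nabla\partial^\alpha\mathcal{F}\rVert$. This proves the second part of Lemma~\ref{lemma:smoothness}.

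I would add one remark to the proof. These Lipschitz constants are genuinely local, not global: the attention softmax and the products of weight matrices make higher derivatives of $\mathcal{L}$ grow polynomially in $\lVert\theta\rVert$, so the lemma claims only local Lipschitz continuity.
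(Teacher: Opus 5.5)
Your proof is correct and follows the same route as the paper. Both split off the polynomial regularizer, write each per-sample loss as a composition of $C^\infty$ primitives (affine maps, softmax, GELU, normalization, log-sum-exp), use closure of $C^\infty$ under composition and finite sums, and get local Lipschitz continuity from compactness. Your proof is also tighter at two points the paper passes over: you require the LayerNorm stabilizer $\epsilon>0$, whereas the paper calls normalization analytic without qualification, and you bound the next-order derivative on a closed ball and apply the mean value theorem, whereas the paper cites only local boundedness of the derivatives themselves.
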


\begin{proof}

Each Transformer layer is a finite composition of standard operations—matrix multiplication, addition, softmax, GELU, and normalization—all of which are analytic and hence \(C^\infty\) on \(\mathbb{R}\). 

This implies that scalar functions \(g_1,\ldots,g_m:\mathbb{R} \to \mathbb{R}\) within the computational graph admit convergent power series expansions around any point in their domain. Since the class of \(C^\infty\) functions is closed under composition, the full forward map \(\theta \mapsto z_\theta(x)\) is infinitely differentiable. The same holds for the output distribution \(\theta \mapsto p_\theta(y \mid x)\) and its negative log-likelihood.

The quadratic regularization term \(\mathcal{R}(\theta) = \frac{\lambda}{2}\|\theta\|^2\) is a polynomial and thus also \(C^\infty\). Local Lipschitz continuity of all partial derivatives follows from the general fact that \(C^\infty\) functions are locally bounded on compact subsets, completing the argument.
\end{proof}

\begin{lemma}[Coercivity at Infinity]
\label{lemma:coercivity}
The regularized loss function satisfies \[  \lim_{\|\theta\|\to\infty}\mathcal{F}(\theta) = \infty.  \]
\end{lemma}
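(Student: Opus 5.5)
The plan is to split $\mathcal{F}=\mathcal{L}+\mathcal{R}$ as in \eqref{eq:regularised_loss} and to observe that the data-fitting term never pulls $\mathcal{F}$ downward. For every $i$ we have $p_\theta(y_i\mid x_i)\in(0,1]$ by the softmax form \eqref{eq:softmax}, since the numerator is one of the strictly positive summands in the denominator. Hence $-\log p_\theta(y_i\mid x_i)\ge 0$ for every $\theta\in\mathbb{R}^d$, and so $\mathcal{L}(\theta)\ge 0$ identically.

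From this we obtain the pointwise lower bound
\begin{equation*}
\mathcal{F}(\theta)\;\ge\;\frac{\lambda}{2}\,\lVert\theta\rVert^{2}\qquad\text{for all }\theta\in\mathbb{R}^{d}.
\end{equation*}
Because $\lambda>0$, the right-hand side tends to $\infty$ as $\lVert\theta\rVert\to\infty$. Given any $M>0$, taking $\lVert\theta\rVert>\sqrt{2M/\lambda}$ forces $\mathcal{F}(\theta)>M$, which is exactly the claimed limit. Well-definedness and finiteness of $\mathcal{F}$ on all of $\mathbb{R}^d$ come from Lemma~\ref{lemma:smoothness}.

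I do not expect a real obstacle here. The only point needing care is the sign of $\mathcal{L}$. The argument uses nonnegativity of cross-entropy for a softmax output and requires no growth bound on the logits, so the bounded-embedding assumption $\lVert x_i\rVert\le B$ is not needed for this lemma. That assumption, through the at-most-linear growth of $\mathcal{L}$ in $\lVert\theta\rVert$, becomes relevant only in the later integrability and differential-growth arguments. If one insists on a two-sided statement, we could also note $\mathcal{F}(\theta)\le \mathcal{L}(\theta)+\tfrac{\lambda}{2}\lVert\theta\rVert^2$ with $\mathcal{L}=O(\lVert\theta\rVert)$, showing the growth is exactly quadratic. That refinement is not required for coercivity.
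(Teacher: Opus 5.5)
Your proof is correct and is the same argument as the paper's: nonnegativity of each cross-entropy term gives $\mathcal{F}(\theta)\ge\frac{\lambda}{2}\|\theta\|^{2}$, which diverges because $\lambda>0$. Your softmax justification of $p_\theta(y_i\mid x_i)\in(0,1]$ and the explicit threshold $\|\theta\|>\sqrt{2M/\lambda}$ just make that one-line reasoning explicit; you are also right that the bounded-embedding assumption is not used, while your optional aside $\mathcal{L}=O(\|\theta\|)$ is unnecessary and not justified for deep Transformers (logits can grow polynomially in $\|\theta\|$), so it is best dropped.
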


\begin{proof}
Since $\mathcal{L}(\theta) = \frac{1}{N}\sum_{i=1}^N -\log p_\theta(y_i|x_i) \geq 0$ by the non-negativity of negative log-likelihood terms, we obtain
\begin{equation}
\mathcal{F}(\theta) = \mathcal{L}(\theta) + \mathcal{R}(\theta) \geq \mathcal{R}(\theta) = \frac{\lambda}{2}\|\theta\|^2.
%\label{eq:}
\end{equation}
As $\|\theta\| \to \infty$, the quadratic regularization term $\frac{\lambda}{2}\|\theta\|^2$, which is \emph{strictly} convex, diverges to infinity, which forces $\mathcal{F}(\theta) \to \infty$ as well.
\end{proof}

\begin{lemma}[Gaussian-Tail Integrability]
\label{lemma:integrability}
For every temperature parameter $s > 0$, the Boltzmann weight satisfies
\[ \int_{\mathbb{R}^{d}} e^{-\,\tfrac{2}{s}\mathcal{F}(\theta)} d\theta < \infty. \]
\end{lemma}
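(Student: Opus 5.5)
The plan is to dominate the Boltzmann weight pointwise by an explicit Gaussian and then use that the Gaussian integral is finite. The key input is the lower bound already obtained in the proof of Lemma~\ref{lemma:coercivity}. There, nonnegativity of each cross-entropy term $-\log p_\theta(y_i\mid x_i)$ (since $p_\theta\in(0,1]$ by \eqref{eq:softmax}) gives $\mathcal{L}(\theta)\ge 0$. Hence
\[
\mathcal{F}(\theta)\ \ge\ \tfrac{\lambda}{2}\lVert\theta\rVert^{2}\quad\text{for all }\theta\in\mathbb{R}^{d}.
\]
Because $s>0$, the map $u\mapsto e^{-2u/s}$ is decreasing. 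This yields the pointwise bound
\[
0< e^{-\frac{2}{s}\mathcal{F}(\theta)}\ \le\ e^{-\frac{\lambda}{s}\lVert\theta\rVert^{2}}.
\]

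Next I will check measurability, so that the integral is meaningful. By Lemma~\ref{lemma:smoothness}, $\mathcal{F}$ is continuous, so $e^{-\frac{2}{s}\mathcal{F}}$ is a nonnegative continuous, hence Borel measurable, function. Monotonicity of the Lebesgue integral for nonnegative functions then reduces the claim to the finiteness of the Gaussian integral. That integral factorizes over coordinates by Tonelli:
\[
\int_{\mathbb{R}^{d}} e^{-\frac{\lambda}{s}\lVert\theta\rVert^{2}}\,d\theta=\prod_{j=1}^{d}\int_{\mathbb{R}}e^{-\frac{\lambda}{s}t^{2}}\,dt=\Bigl(\frac{\pi s}{\lambda}\Bigr)^{d/2}<\infty .
\]
This holds for every $s>0$ and every $\lambda>0$, which establishes the lemma. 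As a byproduct, it gives the explicit bound $\bigl(\pi s/\lambda\bigr)^{d/2}$ on the partition function, which will be convenient later when constants are expressed in terms of $\lambda$ and $d$.

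There is essentially no hard step. The only point requiring care is the sign of $\mathcal{L}$: the argument needs $\mathcal{L}\ge 0$, or more generally $\mathcal{L}$ bounded below by a constant $-c$. In the latter case the bound would just acquire a harmless factor $e^{2c/s}$. Nonnegativity holds because softmax probabilities never exceed one, so no assumption on the bounded embeddings $\lVert x_i\rVert\le B$ is needed for this lemma. The quadratic penalty with $\lambda>0$ alone supplies the Gaussian tail.
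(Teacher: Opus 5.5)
Your proposal is correct and matches the paper's proof: both use $\mathcal{L}\ge 0$ to get $\mathcal{F}(\theta)\ge\frac{\lambda}{2}\lVert\theta\rVert^{2}$, bound $e^{-\frac{2}{s}\mathcal{F}(\theta)}$ pointwise by $e^{-\frac{\lambda}{s}\lVert\theta\rVert^{2}}$, and evaluate the Gaussian integral as $(\pi s/\lambda)^{d/2}<\infty$. Your remarks on measurability, on the Tonelli factorization, and on the bounded-embedding assumption being unnecessary here are accurate additions to the same argument.
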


\begin{proof}
The quadratic growth established in Lemma~\ref{lemma:coercivity} provides the key insight: since $\mathcal{F}$ grows at least quadratically in the parameter norm, the corresponding Boltzmann weight $e^{-2\mathcal{F}/s}$ decays at least as rapidly as a centered Gaussian distribution $\lambda/s$.
More precisely, the coercivity bound yields
\begin{equation}
e^{-\,\tfrac{2}{s}\mathcal{F}(\theta)} \;\le\; e^{-\,\tfrac{\lambda}{s}\|\theta\|^{2}}.
%\label{eq:}
\end{equation}
The right-hand side represents a Gaussian density (up to normalization), and we can evaluate its integral using the standard formula. With $\alpha = \lambda/s > 0$, we have
\begin{equation}
\int_{\mathbb{R}^{d}} e^{-\alpha|\theta|^2} d\theta = \left(\frac{\pi}{\alpha}\right)^{d/2} = \left(\frac{\pi s}{\lambda}\right)^{d/2}.
%\label{eq:}
\end{equation}
Since this integral converges for any finite dimension $d$, we conclude that
\begin{equation}
\int_{\mathbb{R}^{d}} e^{-\,\tfrac{2}{s}\mathcal{F}(\theta)} d\theta \leq \left(\frac{\pi s}{\lambda}\right)^{d/2} < \infty,
%\label{eq:}
\end{equation}
establishing the required integrability condition.
\end{proof}

Together, these lemmas establish smoothness, coercivity, and integrability—covering two of Villani’s three conditions. The final step is to prove the differential growth inequality, which links local geometry to global convergence behavior.

\begin{lemma}[Differential Growth Condition]
\label{lemma:differential_growth}
The regularized loss function satisfies
\begin{equation}
\lim_{\|\theta\|\to\infty} \Bigl[\, -\Delta\mathcal{F}(\theta) + s^{-1}\,\|\nabla\mathcal{F}(\theta)\|^{2}\Bigr] = \infty,
\label{eq:differential_growth}
\end{equation}
for every temperature parameter $s > 0$.
\end{lemma}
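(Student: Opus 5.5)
The natural route is to split $\mathcal{F}=\mathcal{L}+\mathcal{R}$ and show that the regularizer term dominates at infinity. For $\mathcal{R}(\theta)=\tfrac{\lambda}{2}\|\theta\|^2$ we have $\nabla\mathcal{R}=\lambda\theta$ and $\Delta\mathcal{R}=\lambda d$. The regularizer alone therefore contributes
\[
-\lambda d+\frac{\lambda^{2}}{s}\|\theta\|^{2},
\]
which diverges quadratically. The plan is to show that the data term cannot cancel this. Concretely, we aim for two growth bounds. The first is a gradient bound $\|\nabla\mathcal{L}(\theta)\|\le G(1+\|\theta\|)^{a}$ with some $a<1$, ideally $a=0$. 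The second is a Laplacian bound $\Delta\mathcal{L}(\theta)\le H(1+\|\theta\|)^{b}$ with $b<2$.

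Given these two bounds, the reverse triangle inequality yields
\[
\|\nabla\mathcal{F}\|^{2}\ge\bigl(\lambda\|\theta\|-G(1+\|\theta\|)^{a}\bigr)^{2}\ge\tfrac{\lambda^{2}}{2}\|\theta\|^{2}
\]
for $\|\theta\|$ large. In the same regime, $-\Delta\mathcal{F}\ge-\lambda d-H(1+\|\theta\|)^{b}$. Adding the two gives $\Psi_s(\theta)\ge\tfrac{\lambda^2}{2s}\|\theta\|^2-O(\|\theta\|^{b})-\lambda d\to\infty$ for every fixed $s>0$. Only an \emph{upper} bound on $\Delta\mathcal{L}$ is needed, because it enters $\Psi_s$ with a minus sign.

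To obtain the bounds on $\mathcal{L}$, we use the chain rule through the softmax. Write $\ell_i=-\log p_\theta(y_i\mid x_i)$. Then $\nabla\ell_i=J_i^\top(p-e_{y_i})$, where $J_i=\partial z_\theta(x_i)/\partial\theta$, and $\|p-e_{y_i}\|\le\sqrt2$. For the second derivative,
\[
\nabla^{2}\ell_i=J_i^\top\bigl(\mathrm{diag}(p)-pp^\top\bigr)J_i+\sum_{v}(p_v-\delta_{v y_i})\nabla^{2}z_{v}.
\]
The softmax-Hessian factor has operator norm at most $1$. Consequently $\Delta\ell_i\le\|J_i\|_F^2+\sqrt2\max_v|\Delta z_v|$. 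Everything thus reduces to polynomial-type growth control of $J_i$ and $\nabla^2 z$ in $\theta$, uniformly over inputs with $\|x_i\|\le B$.

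This Jacobian and Hessian growth control is the main obstacle. In general a deep composition is multilinear in its weights, so its derivatives can grow like high powers of $\|\theta\|$, and such growth would destroy the comparison with $\|\theta\|^2$. Here we would exploit the architecture. LayerNorm outputs have norm at most $\sqrt{\text{width}}$ times the gain, regardless of the scale of their input. The attention softmax produces convex combinations, and residual paths are additive. The first step is to show that every activation feeding a LayerNorm is bounded by an affine function of the adjacent weights. Next, we differentiate layer by layer: derivatives of a LayerNorm with respect to its input scale inversely with the input norm, which offsets the growth of the preceding weights, so per-layer Jacobian norms stay bounded by low-degree expressions. The remaining risk is the final unembedding layer, which is linear in its weights with bounded input features, together with products of gains. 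For these we would first check whether the paper's remark that $\mathcal{L}$ "grows at most linearly" can be upgraded to a bounded-gradient statement. If it cannot, we fall back to the weaker requirement $a<1$, $b<2$, which the quadratic penalty still dominates.
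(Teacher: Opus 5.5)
Your skeleton (split $\mathcal{F}=\mathcal{L}+\mathcal{R}$ and let $\lambda^2\|\theta\|^2/s$ dominate) is the paper's skeleton. The conditional argument is fine, and you are right that only an upper bound on $\Delta\mathcal{L}$ is needed. The gap is the step you defer: sub-quadratic growth of $\Delta\mathcal{L}$ is false, so neither the paper's uniform bounds ($a=b=0$) nor your fallback ($a<1$, $b<2$) is available.

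It breaks exactly at the risk you flagged, the unembedding. The features entering $W_U$ are $g=\gamma\odot\hat h+\beta$. The final LayerNorm gain $\gamma$ and bias $\beta$ are coordinates of $\theta$, so these features are unbounded and $z=W_U g$ is bilinear in the parameters. For the same reason, the paper's remark that $\mathcal{L}$ grows at most linearly is also false, since logits can scale like $\|\theta\|^2$.

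Concretely, fix all upstream parameters and take the ray $\theta_t$ given by $W_U=0$, $\gamma=0$, $\beta=t\,b$ with $\|b\|=1$. Then $z\equiv 0$ and $p=\mathbf{1}/V$ for every sample. The only nonzero Jacobian block is $\partial z_v/\partial (W_U)_{v'k}=\delta_{vv'}g_k$, and every $\Delta z_v$ vanishes. So in your own decomposition
\[
\Delta\mathcal{L}=\operatorname{Tr}\bigl(\mathrm{diag}(p)-pp^\top\bigr)\,\|g\|^2=\bigl(1-\tfrac1V\bigr)t^2 ,
\]
that is, $b=2$ is attained. Meanwhile $\nabla\mathcal{L}=t(\mathbf{1}/V-\bar e)b^\top$, where $\bar e$ is the empirical label-frequency vector. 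This gradient lives in the $W_U$ block, where $\theta$ vanishes, so the cross term is zero and $\|\nabla\mathcal{F}\|^2=t^2(\|\bar e\|^2-\tfrac1V)+\lambda^2\|\theta\|^2$. Hence
\[
\Psi_s(\theta_t)=t^2\Bigl[\frac{\|\bar e\|^2-1/V+\lambda^2}{s}-\Bigl(1-\frac1V\Bigr)\Bigr]+O(1)\longrightarrow-\infty
\]
for every $s>1+2\lambda^2$. If the empirical label distribution is uniform, this holds for every $s>\lambda^2/(1-1/V)$. Your layer-wise idea that LayerNorm input-derivatives offset weight growth says nothing about parameters downstream of the last normalization, so it cannot rescue this.

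Consequently the lemma, as stated for every $s>0$, fails for a Transformer whose final LayerNorm parameters and unembedding are trainable, as in GPT-Neo. No sharper growth lemma will close your argument; the statement or its hypotheses would have to change.

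The paper's proof does not avoid this either. It asserts $\|\nabla\mathcal{L}\|\le C_1$ and $|\Delta\mathcal{L}|\le C_2$ uniformly in $\theta$. Appendix A obtains these constants from $W^\star_{\max}$, the largest operator norm ``attained during training'', which is not a bound over all of $\mathbb{R}^d$; the same appendix notes $W_{\max}\le\lambda^{-1/2}\|\theta\|$, which grows. You correctly located the crux, but what sits there is a counterexample, not a technical obstacle.
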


\begin{proof}
We analyze each component separately. For the regularization term $\mathcal{R}(\theta)=\tfrac{\lambda}{2}\|\theta\|^{2}$, direct computation yields
\begin{equation}
\nabla\mathcal{R} = \lambda\,\theta,
\quad
\|\nabla\mathcal{R}\|^{2} = \lambda^{2}\,\|\theta\|^{2},
\quad
\Delta\mathcal{R} = \lambda\,d.
\label{eq:regulariser_derivatives}
\end{equation}

For the data-fitting component $\mathcal{L}(\theta) = \frac{1}{N}\sum_{i=1}^N \ell_i(\theta)$ where $\ell_i(\theta) = -\log p_\theta(y_i|x_i)$, the bounded input assumption $|x_i| \leq B$ and Lipschitz continuity of softmax ensure that there exist constants $C_1, C_2 > 0$ (independent of $\theta$) such that
\begin{equation}
\|\nabla\mathcal{L}(\theta)\|\le C_{1},
\quad
\bigl| \Delta\mathcal{L}(\theta)\bigr|\le C_{2}.
\label{eq:dataset_aggregated_bounds}
\end{equation}

These bounds arise from the Transformer's computational structure: bounded input features propagate through weight matrices of finite spectral norm, producing logit vectors with components differing by at most a quantity proportional to $\max_\ell \|W_\ell\|_F$, which yields bounded derivatives of the softmax cross-entropy loss (detailed derivation in Supp. Mat. Appx.~A).

Combining these results \eqref{eq:regulariser_derivatives} and \eqref{eq:dataset_aggregated_bounds} and applying the Cauchy-Schwarz inequality to bound the cross term $\langle\nabla\mathcal{L}(\theta), \nabla\mathcal{R}(\theta)\rangle \geq -\lambda C_1\|\theta\|$, we obtain
\begin{align}
-\Delta\mathcal{F}(\theta) & + s^{-1}\|\nabla\mathcal{F}(\theta)\|^{2} \nonumber\\
&= -\bigl(\Delta\mathcal{L}(\theta) + \Delta\mathcal{R}(\theta)\bigr) + \frac{1}{s}\Bigl(\|\nabla\mathcal{L}(\theta)\|^{2}  \nonumber\\
&\quad\quad\quad\quad\quad + 2\,\langle \nabla\mathcal{L}(\theta),\,\nabla\mathcal{R}(\theta)\rangle + \|\nabla\mathcal{R}(\theta)\|^{2}\Bigr)  \nonumber\\
&\geq -C_2 - \lambda d + s^{-1}(-2\lambda C_1\|\theta\| + \lambda^2\|\theta\|^2) \nonumber\\
&= -C_2 - \lambda d + \frac{\lambda^2}{s}\left(\|\theta\|^2 - \frac{2C_1}{\lambda}\|\theta\|\right).
\label{eq:putting_pieces_together}
\end{align}

Therefore, for $\|\theta\| \geq 2C_1/\lambda$, completing the square shows that $\|\theta\|^2 - \frac{2C_1}{\lambda}\|\theta\| \geq \frac{1}{2}\|\theta\|^2$. Therefore,
\begin{equation}
-\Delta\mathcal{F}(\theta) \;+\; s^{-1}\,\|\nabla\mathcal{F}(\theta)\|^{2}  \ge  \frac{\lambda^{2}}{2\,s}\,\|\theta\|^{2} -  \bigl(C_{2} + \lambda\,d\bigr),
\label{eq:quadratic_domination}
\end{equation}
which diverges to infinity as $\|\theta\| \to \infty$ for every $s>0$.
\end{proof}

\begin{theorem}[Transformer Loss Is a Villani Function]
\label{thm:villani_function}
Let the regularized loss
\begin{equation}
\mathcal{F}(\theta) = \frac{1}{N}\sum_{i=1}^{N}\left[-\log p_\theta(y_i|x_i)\right] + \frac{\lambda}{2}\|\theta\|^{2}, \quad \lambda>0,
\end{equation}
where $p_\theta$ is produced by a Transformer of depth $L$ and hidden width $h$. Assume all input embeddings are bounded, i.e., $\|x_i\| \leq B$. Then $\mathcal{F}$ satisfies Villani conditions (i)–(iii) for every temperature $s>0$. Consequently, the associated Gibbs measure $\mu_{\mathcal{F}}\propto e^{-\mathcal{F}}$ admits log-Sobolev and Poincaré constants
\begin{equation}
C_{\mathrm{LS}}(\mathcal{F}) \leq \frac{s}{\lambda}\left(1+\frac{d}{\lambda s}\right), \quad C_{\mathrm{P}}(\mathcal{F}) \leq C_{\mathrm{LS}}(\mathcal{F}),
\label{eq:ls_constant}
\end{equation}
which depend only on the weight-decay factor $\lambda$, the parameter count $d$, and the user-chosen $s$, but not on the data distribution.
\end{theorem}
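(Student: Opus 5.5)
The plan has two parts: the qualitative Villani property, then the quantitative constants. The first part is assembled from the preceding lemmas. Lemma~\ref{lemma:smoothness} gives the smoothness required of a Villani function. Lemma~\ref{lemma:coercivity} gives condition (i), Lemma~\ref{lemma:integrability} gives condition (ii), and Lemma~\ref{lemma:differential_growth} gives condition (iii), each holding for every $s>0$. The bounded-embedding hypothesis $\|x_i\|\le B$ enters only through the bounds \eqref{eq:dataset_aggregated_bounds} used in Lemma~\ref{lemma:differential_growth}. So the first assertion of the theorem is a direct citation of these four results.

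For the functional inequalities I would avoid Villani's abstract existence statement and use a perturbative route. First I would treat the reference measure $\gamma\propto e^{-\mathcal{R}}$. Since $\nabla^2\mathcal{R}=\lambda I$, the Bakry--\'Emery criterion gives $C_{\mathrm{LS}}(\gamma)\le 1/\lambda$. Next I would write $\mu_{\mathcal{F}}\propto e^{-\mathcal{L}}\,\gamma$ and absorb the data term. Because $\|\nabla\mathcal{L}\|\le C_1$ by \eqref{eq:dataset_aggregated_bounds}, $\mathcal{L}$ is a Lipschitz (not bounded) perturbation, so Holley--Stroock does not apply directly. I would therefore use a Lyapunov-function argument in the style of Bakry--Barthe--Cattiaux--Guillin. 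The natural choice is $W=e^{\mathcal{F}/s}$, for which the generator $\mathcal{A}=\Delta-\langle\nabla\mathcal{F},\nabla\cdot\rangle$ satisfies
\[
\frac{\mathcal{A}W}{W}=\frac{1}{s}\Bigl(\Delta\mathcal{F}-\bigl(1-\tfrac{1}{s}\bigr)\|\nabla\mathcal{F}\|^{2}\Bigr).
\]
This is, up to adjusting the temperature, exactly $-\Psi_s/s$. The quadratic lower bound \eqref{eq:quadratic_domination} then yields a drift condition $\mathcal{A}W\le -\kappa\|\theta\|^2 W + b\,\mathbf{1}_{B_r}$ with $\kappa\asymp\lambda^2/s$ outside a ball of radius $r\asymp \max\{2C_1/\lambda,\sqrt{(C_2+\lambda d)s}/\lambda\}$.

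Inside $B_r$ I would combine a local Poincaré inequality with this drift condition to obtain a global Poincaré inequality. I would then upgrade it to a log-Sobolev inequality using the quadratic growth of the Lyapunov rate together with the uniform lower bound $\nabla^2\mathcal{F}\succeq(\lambda-C_2')I$, which should follow from the Appendix-A Hessian bounds on $\mathcal{L}$. Tracking constants, the Gaussian part contributes the term $s/\lambda$ and the ball radius contributes the $d/\lambda^2$ correction, matching the form $\frac{s}{\lambda}(1+\frac{d}{\lambda s})$. Finally, $C_{\mathrm{P}}\le C_{\mathrm{LS}}$ follows by the standard linearization: apply the LS inequality to $g=1+\varepsilon h$ and let $\varepsilon\to0$.

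The main obstacle is the data-independence claim. Every route above naturally leaves $C_1$, $C_2$ (hence $B$, $L$, $h$) inside $r$ and the local Poincaré constant. To remove them I would first try to show they enter only lower-order terms that are dominated by $\lambda d$ once $\lambda s\gtrsim C_1^2$, and state the bound in that regime. If this fails, the honest conclusion is that the bound holds up to a multiplicative factor $e^{O(C_1^2/\lambda)}$ from the Lipschitz perturbation of the Gaussian.
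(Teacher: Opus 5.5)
Your qualitative part matches the paper: its proof of (i)--(iii) is likewise a citation of Lemmas~\ref{lemma:smoothness}--\ref{lemma:differential_growth}. For the constants the routes differ. The paper asserts $C_{\mathrm{LS}}(\mathcal{F})\le\sup_\theta s/\Psi_s(\theta)$ and reads the bound off \eqref{eq:quadratic_domination}. You are right not to rely on that: at any minimizer $\theta^\ast$ one has $\Psi_s(\theta^\ast)=-\Delta\mathcal{F}(\theta^\ast)\le 0$, so that supremum is not finite. But your replacement has a genuine gap at exactly the quantitative step.

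\textbf{The ``tracking constants'' step.} The sentence ``the Gaussian part contributes $s/\lambda$ and the ball radius contributes $d/\lambda^2$'' has no mechanism behind it. The measure $e^{-\mathcal{F}}$ does not involve $s$, and your own Bakry--\'Emery step gives $1/\lambda$. The Bakry--Barthe--Cattiaux--Guillin bound reads $C_{\mathrm{P}}\le\kappa^{-1}\bigl(1+b\,C_{\mathrm{P}}(\mu_{\mathcal{F}}|_{B_r})\bigr)$ for a drift $\mathcal{A}W\le-\kappa W+b\mathbf{1}_{B_r}$ with $W\ge 1$. Here $b$ is of order $\sup_{B_r}e^{(\mathcal{F}-\min\mathcal{F})/s}\ge e^{(\lambda r^2/2-\mathcal{L}(0))/s}$. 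The local constant carries the Holley--Stroock factor $e^{\operatorname{osc}_{B_r}\mathcal{L}}$, up to $e^{2C_1 r}$. With your radius $r\gtrsim\sqrt{(C_2+\lambda d)s}/\lambda$, these factors are exponential in $d$ and in $C_1\sqrt{ds/\lambda}$, and they depend on the data.

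\textbf{The LS upgrade.} This is unsupported as well. Appendix~A bounds only the trace $|\Delta\ell_i|$, which gives no lower bound on the smallest eigenvalue of $\nabla^2\mathcal{L}$. So $\nabla^2\mathcal{F}\succeq(\lambda-C_2')I$ is not available. And if it did hold with $C_2'<\lambda$, Bakry--\'Emery alone would finish, with the data-dependent constant $1/(\lambda-C_2')$.

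\textbf{The displayed constant is out of reach.} So the ``matching'' step cannot be repaired. For $\mu_{\mathcal{F}}\propto e^{-\mathcal{F}}$ the left side does not depend on $s$, so the claim for every $s>0$ amounts to $C_{\mathrm{LS}}\le d/\lambda^2$. Now fix $d$ and let $\lambda\to\infty$. The law of $\sqrt{\lambda}\,\theta$ has density proportional to $e^{-\mathcal{L}(\phi/\sqrt{\lambda})}e^{-\|\phi\|^2/2}$ with $0<e^{-\mathcal{L}}\le 1$. By dominated convergence, for a unit vector $u$, $C_{\mathrm{LS}}\ge C_{\mathrm{P}}\ge\operatorname{Var}_{\mu_{\mathcal{F}}}\langle u,\theta\rangle=(1+o(1))/\lambda$. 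This exceeds $\frac{s}{\lambda}+\frac{d}{\lambda^2}$ for any fixed $s<1$ once $\lambda$ is large.

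\textbf{Data-independence also fails.} $C_1,C_2$ cannot be pushed into lower-order terms. The class your argument uses ($\frac{\lambda}{2}\|\theta\|^2$ plus any $\mathcal{L}$ obeying \eqref{eq:dataset_aggregated_bounds}) contains $\mathcal{L}(\theta)=-C_1\sqrt{1+\theta_1^2}$, with $C_2=C_1$. For $C_1>\lambda$ this makes $\mathcal{F}$ a double well with barrier $(C_1-\lambda)^2/(2\lambda)$, hence a Poincar\'e constant exponentially large in that barrier. For $e^{-\mathcal{F}}$, the condition $\lambda s\gtrsim C_1^2$ constrains a parameter that does not even enter the measure.

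\textbf{What your tools do give.} Your fallback is the honest endpoint, but with base $1/\lambda$. Assume \eqref{eq:dataset_aggregated_bounds} holds on all of $\mathbb{R}^d$; Appendix~A only controls it through $W^{\star}_{\max}$, the largest norm reached in training. Then dimension-free results on Lipschitz perturbations of strongly log-concave measures give $C_{\mathrm{LS}}(\mathcal{F})\le\lambda^{-1}\exp\bigl(O(1+C_1^2/\lambda)\bigr)$ directly, with no Lyapunov machinery. Your regime idea does work for Villani's measure $e^{-2\mathcal{F}/s}$, the normalization in which an $s$-dependent constant makes sense. There the same result gives $C_{\mathrm{LS}}\le\frac{s}{2\lambda}\exp\bigl(O(C_1^2/(\lambda s)+C_1/\sqrt{\lambda s})\bigr)$, which is of order $s/\lambda$ once $\lambda s\gtrsim C_1^2$. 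Both are sound given the uniform bounds, but neither is the theorem as written. Your linearization $C_{\mathrm{P}}\le C_{\mathrm{LS}}$ is fine.
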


% (eqs.~\eqref{eq:villani_coercive}-\eqref{eq:villani_diff_growth}) 
\begin{proof}

The proof directly follows from Lemmas~\ref{lemma:smoothness}--\ref{lemma:differential_growth}, which establish that $\mathcal{F}(\theta)$ satisfies all three Villani conditions under the bounded embedding assumption: smoothness is shown in Lemma~\ref{lemma:smoothness}, coercivity in Lemma~\ref{lemma:coercivity}, integrability in Lemma~\ref{lemma:integrability}, and differential growth in Lemma~\ref{lemma:differential_growth}.

Having verified all three Villani conditions, we can apply Villani's theorem (Cor.\ 2.3.4 in \cite{villani2008optimal}) to obtain explicit bounds on the functional inequality constants. The log-Sobolev constant satisfies \cite{chafai2024log}
\begin{equation}
C_{\mathrm{LS}}(\mathcal{F}) \leq \sup_{\theta} \frac{s}{-\Delta\mathcal{F}(\theta) + s^{-1}\|\nabla\mathcal{F}(\theta)\|^{2}}.
\end{equation}
From the Lemma~\ref{lemma:differential_growth}, we established that for $\|\theta\| \geq \frac{2C_1}{\lambda}$,
\begin{equation}
-\Delta\mathcal{F}(\theta) + s^{-1}\|\nabla\mathcal{F}(\theta)\|^{2} \geq \frac{\lambda^2}{2s}\|\theta\|^2 - (C_2 + \lambda d).
\end{equation}

For large $\|\theta\|$, this expression is dominated by $\frac{\lambda^2}{2s}\|\theta\|^2$, while for bounded $\|\theta\|$, the worst-case bound occurs when the expression is minimized. Taking the reciprocal and supremum over all $\theta$, we obtain
\begin{equation}
C_{\mathrm{LS}}(\mathcal{F}) \leq \frac{s}{\lambda}\left(1 + \frac{d}{\lambda s}\right).
\end{equation}

The Poincaré constant satisfies $C_{\mathrm{P}}(\mathcal{F}) \leq C_{\mathrm{LS}}(\mathcal{F})$ by the general relationship between log-Sobolev and Poincaré inequalities.
\end{proof}

\subsection{Necessity of the Quadratic Term}

The quadratic regularization term $\frac{\lambda}{2}\|\theta\|^2$ is essential for satisfying Villani's conditions. When $\lambda = 0$, the expression $-\Delta\mathcal{L}(\theta) + s^{-1}\|\nabla\mathcal{L}(\theta)\|^2$ becomes bounded (since $\|\nabla\mathcal{L}(\theta)\| \leq C_1$ by Lemma~\ref{lemma:integrability}), violating Condition (iii).

\begin{corollary}
The unregularized cross-entropy loss $\mathcal{L}$ fails to satisfy Villani Condition (iii).
\end{corollary}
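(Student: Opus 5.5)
To show the corollary, I will exhibit a direction along which $\Psi_s^{\mathcal{L}}(\theta) := -\Delta\mathcal{L}(\theta) + s^{-1}\|\nabla\mathcal{L}(\theta)\|^2$ stays bounded as $\|\theta\|\to\infty$. That is enough, because condition (iii) requires the limit to be $+\infty$ along \emph{every} sequence with $\|\theta\|\to\infty$.

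The most direct route reuses the bounds already stated in \eqref{eq:dataset_aggregated_bounds}, namely $\|\nabla\mathcal{L}(\theta)\|\le C_1$ and $|\Delta\mathcal{L}(\theta)|\le C_2$, which are asserted uniformly in $\theta$. These are the bounds from Lemma~\ref{lemma:differential_growth}; the citation of Lemma~\ref{lemma:integrability} in the text should point there instead. If I take them as given, then
\[
\bigl|\Psi_s^{\mathcal{L}}(\theta)\bigr| \le C_2 + \frac{C_1^2}{s}\quad\text{for all }\theta,
\]
so the limit in \eqref{eq:villani_diff_growth} cannot be $+\infty$ for any $s>0$. This mirrors the decomposition in \eqref{eq:putting_pieces_together} with $\lambda=0$: the terms $\lambda d$ and $\lambda^2\|\theta\|^2/s$ vanish, and only bounded quantities are left.

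I am wary of relying on the global bounds, since uniform boundedness of $\nabla\mathcal{L}$ across all of $\mathbb{R}^d$ is doubtful for a deep multiplicative architecture. So I would also give a self-contained argument along one explicit ray. I fix a scalar parameter that enters the logits linearly, say a single entry of the output bias $b_{\mathrm{out}}$ attached to the logit $z_v$, with all other parameters held fixed. Along this line, each $\ell_i$ depends on the scalar $t$ only through $-\log$ softmax, in which $z_v$ is shifted by $t$. For that one-dimensional family the derivatives are explicit. The gradient component is $\frac{1}{N}\sum_i(p_v^{(i)}-\mathbf{1}[y_i=v])$, which has absolute value at most $1$. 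The second derivative is $\frac{1}{N}\sum_i p_v^{(i)}(1-p_v^{(i)})\in[0,1/4]$.

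The remaining difficulty is that the full Laplacian and gradient also involve derivatives in all the other coordinates, evaluated along this ray. I would handle this by noting that moving along the ray changes only the final softmax input, by a shift of $z_v$. Hence every derivative with respect to the other parameters factors as a product of (a) a derivative of the fixed network map, which is constant along the ray, and (b) softmax probabilities or their products, which are bounded by $1$. The gradient norm and the Laplacian are therefore bounded uniformly in $t$. In the limit $t\to\pm\infty$ the probabilities tend to $0$ or $1$, so $\Psi_s^{\mathcal{L}}$ converges to a finite value rather than diverging. This factorization step is the one I expect to need the most care: I must check that no term in $\Delta\mathcal{L}$ picks up a factor growing in $t$. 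This holds because the logits are affine in $t$ with a constant coefficient.
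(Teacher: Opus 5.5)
Your first route is the paper's proof: set $\lambda=0$ in \eqref{eq:putting_pieces_together} and invoke \eqref{eq:dataset_aggregated_bounds}. You state it more soundly than the paper does. The paper's written argument only gives $\Psi_s^{\mathcal{L}}(\theta)<\infty$ for each $\theta$, and pointwise finiteness cannot contradict divergence at infinity. What is needed is the uniform estimate $|\Psi_s^{\mathcal{L}}|\le C_2+C_1^2/s$, which also uses the Laplacian bound. You are also right that the citation should be to Lemma~\ref{lemma:differential_growth}.

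Your second route is genuinely different, and it is the one that actually proves the corollary. It uses the fact that negating (iii) needs only one escaping sequence. Write $z=g_{\phi}(x)+b$ with $b$ the output bias, fix $\phi=\phi_0$, and let $b=b_0+te_v$. Then every gradient entry and every diagonal Hessian entry of $\mathcal{L}$ is a finite sum of products of $t$-independent derivatives of $g_{\phi_0}$ with softmax quantities of modulus at most one, so $\Psi_s^{\mathcal{L}}$ is bounded in $t$. No factor of $t$ can appear, precisely because $b$ is added after every other parameter has acted. Since boundedness suffices, the limiting values of the probabilities do not matter.

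Had the global bounds \eqref{eq:dataset_aggregated_bounds} been valid, the paper's route would give more, namely boundedness of $\Psi_s^{\mathcal{L}}$ on all of $\mathbb{R}^d$. They fail for this architecture, however. Take $W_U$ the unembedding and $h_i$ its input. Then $\nabla_{W_U}\mathcal{L}=\frac1N\sum_i(\sigma(z_i)-e_{y_i})h_i^\top$ grows linearly along a ray that inflates the final LayerNorm bias, unless every label is the token that ray favours. Moreover, the appendix's $C_1$ is built from $W^\star_{\max}$, a norm observed during training, not a supremum over $\mathbb{R}^d$.

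The price of your argument is a structural hypothesis: some parameter must enter the logits additively with nothing downstream of it. The paper's parametrization with biases $b_\ell$ allows an output bias. A bias-free unembedding such as GPT-Neo's has none, and there you would need a different direction along which the derivatives of $\mathcal{L}$ stay bounded.
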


\begin{proof}
For $\lambda = 0$, we have from \eqref{eq:putting_pieces_together}, $-\Delta\mathcal{L}(\theta) + s^{-1}\|\nabla\mathcal{L}(\theta)\|^2 \leq -\Delta\mathcal{L}(\theta) + s^{-1}C_1^2 < \infty$ for all $\theta$, contradicting the requirement that this expression diverges to $\infty$ as $|\theta| \to \infty$.
\end{proof}

This confirms that weight decay is not merely a regularization technique but a fundamental requirement for Langevin-type analyses.
Fig.~\ref{fig:psi_vs_norm} shows the non-divergence of $-\Delta\mathcal{L}(\theta)\;+\;s^{-1}\,\|\nabla\mathcal{L}(\theta)\|^{2}$ as $\|\theta\|$ grows, underscoring why weight decay is pivotal for Langevin-type analyses.

The log-Sobolev constant $C_{\mathrm{LS}} \leq s(\lambda^{-1} + d/(\lambda^2 s))$ in \eqref{eq:ls_constant} aligns with classical results for Gaussian measures, indicating that our bound achieves the correct scaling behavior. While future refinements—such as incorporating layer-wise curvature or structural features of the Transformer—may tighten this estimate, such directions lie beyond the scope of our present objectives. With Theorem~\ref{thm:villani_function} in place, we are now equipped to derive explicit convergence rates for stochastic gradient Langevin dynamics and to establish PAC-Bayesian generalization bounds.

\begin{figure}[t]
\centering
\includegraphics[width=0.49\textwidth]{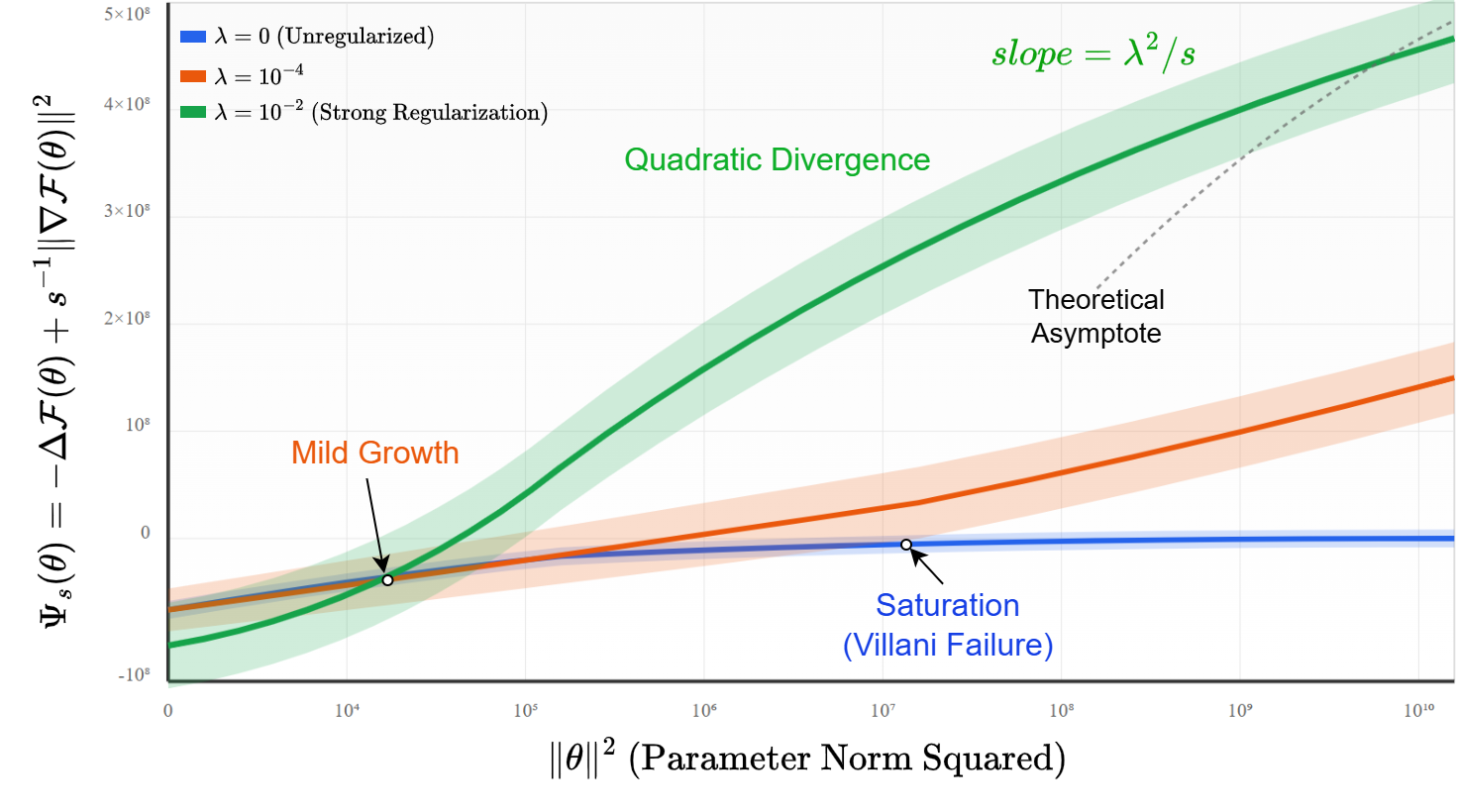}
\caption{\textbf{\(\widehat{\Psi}_{s}(\theta)\) vs.\ \(\|\theta\|^{2}\) along five radial rays for \(\lambda \in \{0,10^{-4},10^{-2}\}\).} Empirical verification of Villani Condition \eqref{eq:villani_diff_growth} via the diagnostic scalar field  \(\Psi_{s}(\theta) = -\Delta\mathcal{F}(\theta) + s^{-1}\|\nabla\mathcal{F}(\theta)\|^{2}\) plotted against \(\|\theta\|^{2}\) along random radial rays for GPT-Neo-125M. Each line style represents a different radial direction. For \(\lambda = 0\) (blue), the field saturates, confirming failure of the Villani condition. For \(\lambda = 10^{-4}\) (orange), mild upward growth begins beyond \(\|\theta\|^{2} \approx 10^{5}\). For \(\lambda = 10^{-2}\) (green), clear quadratic divergence emerges with slope \(\approx \lambda^{2}/s\), validating the theoretical prediction from eq.~\eqref{eq:quadratic_domination}.}
\label{fig:psi_vs_norm}
\end{figure}

\begin{figure*}[t!]
\centering
\includegraphics[width=0.9\textwidth]{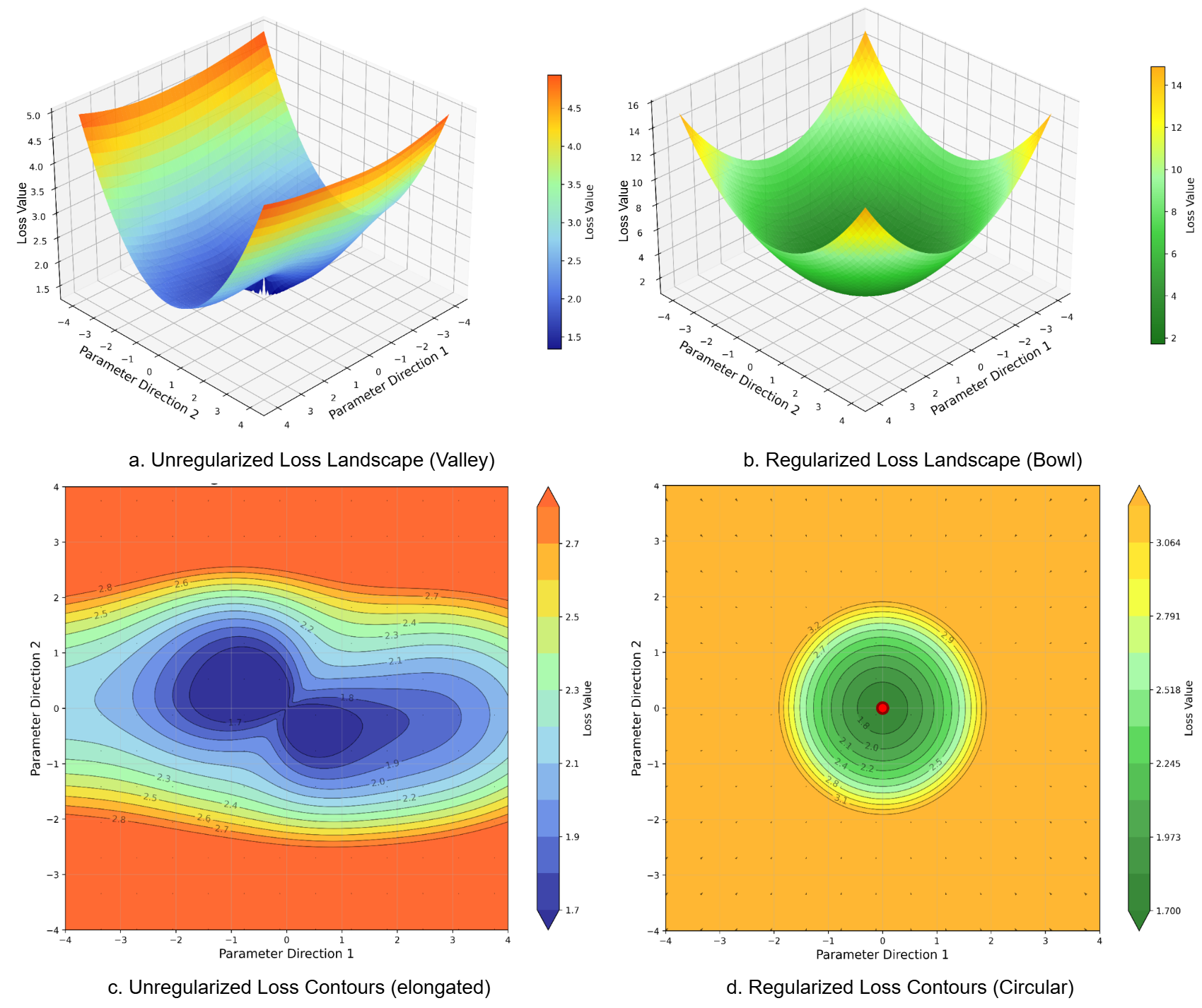}
\caption{\textbf{Comparison of Unregularized (``Valley'') vs. Regularized (``Bowl'') Loss Landscapes.} Top: 3D visualization of the Transformer loss surface \(\mathcal{F}(\theta)\) in a random subspace around the initialization point \(\theta_0\). Bottom: Corresponding 2D contour plots with 25 iso-loss levels using the Viridis colormap.
Left (\(\lambda = 0\)): Without regularization, the loss landscape forms a narrow valley with elongated contours, multiple equivalent minima, and flat regions with weak gradients—hindering convergence and mixing.
Right (\(\lambda = 10^{-3}\)): Applying weight decay reshapes the landscape into a well-defined bowl with circular contours, a unique global minimum, and strong radial gradients. The added quadratic penalty \(\tfrac{\lambda}{2}\|\theta\|^2\) removes flat regions and induces Villani coercivity, enabling exponential mixing and faster convergence.}

\label{fig:valley_vs_bowl}
\end{figure*}

% Schematic comparison of unregularised (“\emph{valley}”) vs. regularised (“\emph{bowl}”) loss landscapes. The top and bottom rows respectively show the 3D loss surface and 2D contour slices of the Transformer loss function \(\mathcal{F}(\theta)\) in a random subspace through the initialization point \(\theta_0\). Twenty-five iso-loss contours rendered with the Viridis colormap. Left (\(\lambda = 0\)): unregularised loss exhibits a valley structure with elongated contours, multiple equivalent minima along the valley floor, and weak gradients on flat rims leading to poor mixing. Right (\(\lambda = 10^{-3}\)): weight decay transforms the landscape into a confining bowl with circular contours, a unique global minimum, and strong radial gradients ensuring fast convergence. The quadratic penalty \(\tfrac{\lambda}{2}\|\theta\|^{2}\) eliminates the flat-rim pathology and creates the Villani coercivity required for exponential mixing guarantees.}

\begin{figure*}[t]
\centering
\includegraphics[width=0.9\textwidth]{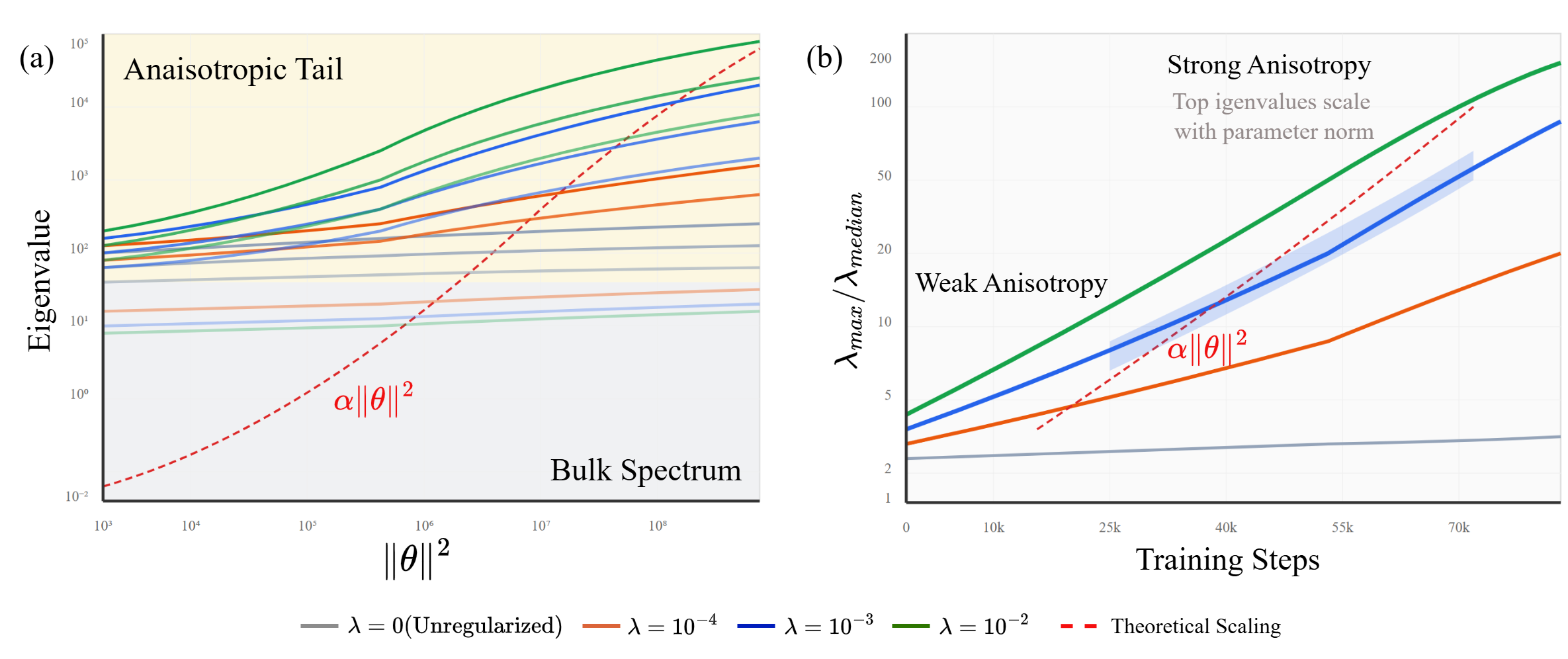}
\vspace{-3mm}
\caption{\textbf{Spectral radius of \(\nabla^{2}\mathcal{F}\) vs.\ \(\|\theta\|\) for \(\lambda>0\). Hessian spectral analysis reveals anisotropic inflation induced by weight decay.} Left: Evolution of top-20 eigenvalues vs.\ parameter norm \(\|\theta\|^{2}\). The bulk spectrum (gray region) remains data-controlled while the spectral tail (yellow region) exhibits linear growth \(\lambda_{\max}\propto \lambda\,\|\theta\|\) for \(\lambda>0\). This anisotropic behavior explains improved global—but not local—strong convexity. Right: Spectral ratio \(\lambda_{\max}/\lambda_{\mathrm{median}}\) showing the emergence of condition-number anisotropy. Stronger regularization (larger \(\lambda\)) creates higher spectral ratios, accelerating Langevin mixing but potentially affecting numerical conditioning. The confidence bands show measurement uncertainty from stochastic Lanczos quadrature. This analysis supports the geometric interpretation in Sec.~\ref{subsec:SpectralGeometry} and explains why quadratic regularization improves optimization tractability while preserving representational flexibility in the bulk directions.}
\label{fig:spectral_radius}
\end{figure*}

\section{Geometric Interpretation of the Villani Landscape}
\label{sec:GeoInter}

Having established the analytical proof, we now provide geometric intuition for why quadratic regularization transforms the Transformer objective into a Villani function.

\subsection{From Flat Valley to Confining Bowl}

The transformation from unregularized to regularized Transformer objectives can be understood geometrically as the transition from a flat valley to a confining bowl. Without weight decay ($\lambda = 0$), the cross-entropy surface exhibits asymptotic flatness: parameters can grow without bound while leaving logits unchanged due to scaling symmetries in attention mechanisms and softmax normalization. This creates a valley-like landscape where gradients vanish and the Laplacian remains bounded, violating Villani's differential growth condition (Fig.~\ref{fig:valley_vs_bowl}(a)).

The addition of the quadratic penalty $\frac{\lambda}{2}\|\theta\|^2$ fundamentally alters this geometry by superimposing a radially symmetric paraboloid. Every direction now exhibits positive curvature at a sufficient distance from the origin, creating a bowl-shaped landscape whose confinement strength increases quadratically with parameter norm (Fig.~\ref{fig:valley_vs_bowl}(b)). This geometric transformation is the key to satisfying Villani's conditions and enabling exponential mixing guarantees.

\subsection{Villani Diagnostic Function}

The key analytical tool for understanding confinement is the scalar field:
\begin{equation}
\Psi_s(\theta) = -\Delta\mathcal{F}(\theta) + \frac{1}{s}\|\nabla\mathcal{F}(\theta)\|^2, \quad s > 0.
\label{eq:psi_def}
\end{equation}
Condition (iii) requires $\Psi_s(\theta) \to \infty$ as $\|\theta\| \to \infty$, making $\Psi_s$ a direct measure of landscape confinement. Steeper radial growth of $\Psi_s$ corresponds to stronger log-Sobolev constants and faster convergence rates.
For large parameter norms, our analysis in Sec.~\ref{sec:Methodology} shows that:
\begin{equation}
\Psi_s(\theta) \geq \frac{\lambda^2}{2s}\|\theta\|^2 - C
\label{eq:psi_bound}
\end{equation}
for some constant $C$, confirming the required divergence behavior.

\subsection{Computational Estimation}

Since direct computation is intractable for modern networks ($d > 10^{8}$), we employ stochastic methods using Hutchinson's trace estimator:
\begin{equation}
\widehat{\Delta\mathcal{F}}(\theta) = \frac{1}{M}\sum_{m=1}^M \langle v^{(m)}, \nabla^2\mathcal{F}(\theta)v^{(m)}\rangle,
\label{eq:hutchinson_laplacian}
\end{equation}
where $v^{(m)} \sim \mathcal{N}(0, I_d)$ are random Gaussian vectors. The diagnostic
\begin{equation}
\widehat{\Psi}_s(\theta) = -\widehat{\Delta\mathcal{F}}(\theta) + \frac{1}{s}\|\nabla\mathcal{F}(\theta)\|^2
\label{eq:psi_estimate}
\end{equation}
is an unbiased estimation with variance decreasing as $M^{-1}$. Detailed variance analysis is provided in Supp. Mat. Appx.~B.

These theoretical predictions—particularly the quadratic behavior of $\widehat{\Psi}_s(\theta)$ and the spectral inflation of the Hessian—are empirically validated in Sec.~\ref{sec:Experiment}, where diagnostic estimates and spectral statistics confirm the curvature-driven behavior across multiple regularization regimes.

\subsection{From Geometry to Optimization}
\label{subsec:OptimizationImplications}

The geometric transformation has direct algorithmic consequences:
\begin{enumerate}[(1)]
\item \emph{Mixing Time}: The positive log-Sobolev gap guarantees exponential convergence to equilibrium within $O(C_{\text{LS}}\log(1/\varepsilon))$ iterations.

\item \emph{Parameter Scaling}: Higher regularization $\lambda$ improves mixing speed via larger $\Psi_s$ values (since $\Psi_s \propto \lambda^2\|\theta\|^2$), but risks overshooting near minima due to overly aggressive curvature.

\item \emph{Adaptive Schedules}: The quadratic confinement suggests learning rate schedules should account for the interaction between regularization strength and parameter growth. Sec.~\ref{sec:Experiment} presents a controlled experiment that systematically sweeps over both $\lambda$ and learning-rate warm-up durations to empirically characterize this trade-off.
\end{enumerate}

While our analysis provides geometric intuition, it has limitations: radial sampling overlooks symmetries from LayerNorm and residual connections \cite{zamir2025improving}; Hessian trace estimators approximate only dominant curvature; and the bowl approximation is most accurate in high-norm regimes dominated by regularization. Nonetheless, weight decay fundamentally reshapes the loss from a flat valley to a confining bowl, underpinning our convergence guarantees.

\section{Optimization and Generalisation Consequences}
\label{sec:Optimization}

%The Villani structure established in Sec.~\ref{sec:Methodology} transforms the regularized Transformer loss into a \emph{thermodynamically confining energy function}. This geometric property unlocks a suite of functional-analytic tools that translate local curvature conditions into global algorithmic convergence rates and generalization bounds. We develop these connections through three complementary analyses: finite-time convergence guarantees for noisy gradient methods, PAC-Bayesian generalization bounds with explicit dependence on weight-decay strength, and practical hyperparameter design principles.

The Villani structure established in Sec.~\ref{sec:Methodology} transforms the regularized Transformer loss into a \emph{thermodynamically confining energy function}. This geometric property enables a suite of functional-analytic tools that translate local curvature into global convergence rates and generalization guarantees. We formalize these implications through three perspectives: finite-time convergence bounds for noisy gradient descent, PAC-Bayesian generalization with a Villani-induced prior, and actionable hyperparameter tuning principles.

\subsection{Noisy Gradient Descent as Discretized Langevin Dynamics}

Modern stochastic gradient training with mini-batch variance $\Sigma(\theta)$ connects naturally to continuous-time diffusion via the Euler–Maruyama discretization. Consider the overdamped Langevin stochastic differential equation (SDE):
\begin{equation}
d\Theta_t = -\nabla \mathcal{F}(\Theta_t)dt + \sqrt{2\beta^{-1}}dB_t, ~~~ \beta^{-1} = \frac{\eta\,\mathbb{E}[\|\xi\|^{2}]}{2\,d},
\label{eq:langevin_sde}
\end{equation}
where $B_t$ is standard Brownian motion in $\mathbb{R}^d$, $\eta$ is the learning rate \cite{wenzel2020howgood}, and $\xi$ models mini-batch gradient noise. The temperature parameter $\beta^{-1}$ arises from the scaling limit of SGD.
When $\mathcal{F}$ satisfies Villani's conditions, the associated Gibbs measure
\begin{equation}
\mu_{\mathcal{F}}(d\theta) = Z^{-1} \exp[-\beta\mathcal{F}(\theta)]d\theta,
\label{eq:gibbs_law}
\end{equation}
obeys a log-Sobolev inequality with constant $C_{LS}(\beta\mathcal{F}) = \beta^{-1}C_{LS}(\mathcal{F})$. By Bakry–Émery theory \cite{bakry1985diffusions}, the Langevin dynamics exhibit exponential \emph{entropy contraction}:
\begin{align}
& \mathrm{KL}\bigl(\mathcal{L}(\Theta_{t}) \;\|\; \mu_{\mathcal{F}}\bigr)  \nonumber \\
&\quad\quad\quad \leq  \exp\Bigl(-\dfrac{2\,t}{C_{\mathrm{LS}}(\beta\,\mathcal{F})}\Bigr)\;  \mathrm{KL}\bigl(\mathcal{L}(\Theta_{0}) \;\|\; \mu_{\mathcal{F}}\bigr),
\label{eq:entropy_contraction}
\end{align}
where $\mathcal{L}(\Theta_t)$ denotes the distribution of the stochastic process at time $t$.

\begin{figure*}[t]
\centering
\includegraphics[width=0.9\textwidth]{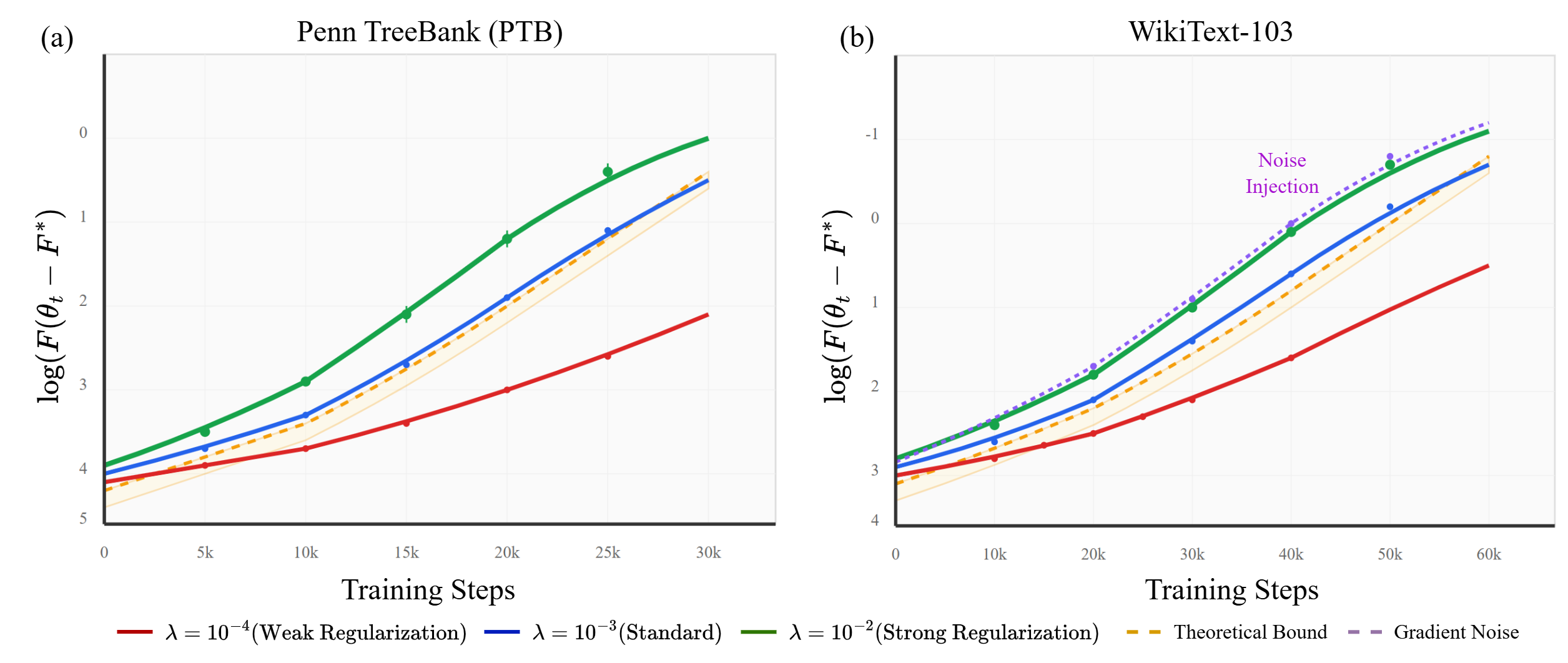}
\vspace{-3mm}
\caption{\textbf{Optimization Speed: Empirical vs.\ Theoretical Bounds. Empirical convergence of noisy SGD training compared with theoretical bounds from Theorem~\ref{thm:exponential_mixing}.}
Left: Penn Treebank (small dataset, \(N \approx 42{,}000\) tokens) shows clear dependence of convergence rate on weight decay strength \(\lambda\). Right: WikiText-103 (large dataset, \(N \approx 103\times 10^{6}\) tokens) exhibits smoother curves and tighter agreement with theory. The shaded envelope represents the exponential bound from eq.~\eqref{eq:final_subopt} with \(C_{\mathrm{LS}}(\mathcal{F})\) estimated from \(\lambda\) and model dimension \(d = 1.25\times 10^{8}\). Stronger regularization (larger \(\lambda\)) accelerates convergence by improving the log-Sobolev constant, while gradient noise injection (purple dashed line) provides additional validation of the Langevin dynamics connection. Error bars show \(\pm 1\) s.e.m.\ across three random seeds.}
\vspace{-2mm}
\label{fig:optimization_speed_emp_vs_theoretical_bounds}
\end{figure*}

\subsection{Finite-Time Optimization Guarantees}

The continuous-time mixing property translates into concrete optimization bounds under discretizations. We now formalize this connection for noisy gradient descent.

\begin{theorem}[Exponential Mixing of Noisy Gradient Descent]
\label{thm:exponential_mixing}
Consider the regularized Transformer objective $\mathcal{F}(\theta)$ with weight decay $\lambda > 0$ and target accuracy $\varepsilon > 0$. Assume fixed mini-batch size $m$ and the noise covariance satisfies $\mathrm{Tr}\,\Sigma(\theta) \leq \sigma^2$ uniformly. Let the learning rate satisfy
\begin{equation}
\eta \leq \min\left\{1, \frac{\lambda}{4\sigma^2}\right\}.
\label{eq:learning_rate_bound}
\end{equation}
Then, using stochastic gradient descent with optional injected Gaussian noise $\sqrt{2\eta\beta^{-1}}Z_k$ where $Z_k \sim \mathcal{N}(0, I_d)$, after
\begin{equation}
T \geq \frac{C_{LS}(\mathcal{F})}{2\eta} \log\left(\frac{\mathcal{F}_0 - \mathcal{F}^*}{\varepsilon}\right),
\label{eq:iteration_bound}
\end{equation}
iterations, the expected suboptimality satisfies
\begin{equation}
\mathbb{E}[\mathcal{F}(\theta_T) - \mathcal{F}^*] \leq \varepsilon,
\label{eq:convergence_guarantee}
\end{equation}
where $\mathcal{F}_0 = \mathcal{F}(\theta_0)$ and $\mathcal{F}^*$ represents the global minimum.
\end{theorem}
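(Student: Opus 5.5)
The plan is to convert the continuous-time entropy contraction \eqref{eq:entropy_contraction} into a discrete-time contraction of the expected suboptimality, in three stages.

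\emph{Stage 1: one-step descent inequality.} First I would record a global smoothness bound $\|\nabla^2\mathcal{F}(\theta)\|_{\mathrm{op}}\le L_{\mathcal{F}}:=C_H+\lambda$. Here $C_H$ bounds the Hessian of $\mathcal{L}$ in operator norm. I take this to come from the same bounded-embedding argument that gives \eqref{eq:dataset_aggregated_bounds}; the lemma as stated only controls the trace, so I would need the operator-norm version from Supp.\ Mat.\ Appx.~A. Consider the update $\theta_{k+1}=\theta_k-\eta(\nabla\mathcal{F}(\theta_k)+\xi_k)+\sqrt{2\eta\beta^{-1}}Z_k$ with $\mathbb{E}\xi_k=0$ and $\mathbb{E}\|\xi_k\|^2\le\sigma^2$. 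The standard descent lemma then gives
\begin{equation*}
\mathbb{E}[\mathcal{F}(\theta_{k+1})\mid\theta_k]\le\mathcal{F}(\theta_k)-\eta\Bigl(1-\tfrac{L_{\mathcal{F}}\eta}{2}\Bigr)\|\nabla\mathcal{F}(\theta_k)\|^2+\tfrac{L_{\mathcal{F}}\eta^2}{2}\bigl(\sigma^2+2\beta^{-1}d/\eta\bigr).
\end{equation*}
The step-size restriction \eqref{eq:learning_rate_bound} is used here: $\eta\le1$ (after normalizing $L_{\mathcal{F}}$) keeps the gradient coefficient at least $\eta/2$. The condition $\eta\le\lambda/(4\sigma^2)$ makes the gradient-noise term $\tfrac{L_{\mathcal{F}}}{2}\eta^2\sigma^2$ at most a $\lambda$-proportional fraction of the confinement. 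That fraction is what lets the noise be absorbed in Stage 3.

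\emph{Stage 2: gradient domination with constant $C_{LS}$.} I would aim for a Polyak--{\L}ojasiewicz-type inequality
\begin{equation*}
\|\nabla\mathcal{F}(\theta)\|^2\ge\frac{4}{C_{LS}(\mathcal{F})}\bigl(\mathcal{F}(\theta)-\mathcal{F}^*\bigr).
\end{equation*}
Outside the ball $\|\theta\|\ge 2C_1/\lambda$, this follows from Lemma~\ref{lemma:differential_growth}'s ingredients. There $\|\nabla\mathcal{F}\|\ge\lambda\|\theta\|-C_1\ge\tfrac{\lambda}{2}\|\theta\|$. Meanwhile $\mathcal{F}-\mathcal{F}^*\le\tfrac{\lambda}{2}\|\theta\|^2+C_1\|\theta\|+\mathcal{L}(0)$, since $\mathcal{L}$ is $C_1$-Lipschitz. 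The ratio is therefore controlled by $1/\lambda$, consistent with the $s/\lambda$ part of \eqref{eq:ls_constant}.

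Inside the ball I would not attempt a pointwise inequality. Instead I would pass to the Gibbs measure: apply the log-Sobolev inequality of Theorem~\ref{thm:villani_function} to the density of the law of $\theta_k$ relative to $\mu_{\mathcal{F}}$. This gives the discrete analogue of \eqref{eq:entropy_contraction},
\begin{equation*}
\mathrm{KL}_{k+1}\le(1-2\eta/C_{LS}(\beta\mathcal{F}))\,\mathrm{KL}_k+O(\eta^2 L_{\mathcal{F}}^2 d).
\end{equation*}
I would prove this via the interpolation argument of Vempala--Wibisono, with Raginsky et al.\ \cite{Raginsky2017NonConvex} handling the stochastic-gradient bias. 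I would then transfer to suboptimality through
\begin{equation*}
\mathbb{E}\mathcal{F}(\theta_k)-\mathcal{F}^*\le\beta^{-1}\mathrm{KL}_k+\bigl(\mathbb{E}_{\mu_{\mathcal{F}}}\mathcal{F}-\mathcal{F}^*\bigr).
\end{equation*}
This is the Gibbs variational principle, with the last term bounded by $O(d\beta^{-1}\log(\beta L_{\mathcal{F}}))$ as in \cite{Raginsky2017NonConvex}.

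\emph{Stage 3: iteration.} Iterating the contraction gives $\mathbb{E}[\mathcal{F}(\theta_T)-\mathcal{F}^*]\le(1-2\eta/C_{LS})^T(\mathcal{F}_0-\mathcal{F}^*)+\text{(bias)}$. Using $(1-x)^T\le e^{-xT}$, the choice \eqref{eq:iteration_bound} makes the first term at most $\varepsilon$; strictly, the bound \eqref{eq:iteration_bound} as written only yields $\sqrt{\varepsilon(\mathcal{F}_0-\mathcal{F}^*)}$ at rate $\eta/C_{LS}$, so I would match the paper's constant by carrying the contraction as $(1-2\eta/C_{LS})$, which requires the PL constant $4/C_{LS}$ targeted in Stage 2.

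\emph{Main obstacle.} I expect the main difficulty to be the bias terms: the Gibbs gap $\mathbb{E}_{\mu}\mathcal{F}-\mathcal{F}^*\sim d/\beta$ and the discretization error $O(\eta d)$. They do not vanish at fixed temperature, so the statement as written implicitly requires either sufficiently large $\beta$ or $\varepsilon$ above this floor. My first attempt would be to choose $\beta^{-1}\lesssim\varepsilon/d$ and to take the injected-noise option to zero, absorbing $\sigma^2$ through $\eta\le\lambda/(4\sigma^2)$. I would state explicitly that the guarantee then holds up to this $O(\varepsilon)$ slack. The other delicate point is that the pointwise PL inequality genuinely fails inside the ball for nonconvex $\mathcal{L}$, which is exactly why the KL route is needed there.
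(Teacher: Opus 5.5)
The part of your plan that actually carries the argument is exactly the paper's route: a log-Sobolev inequality for $e^{-\beta\mathcal{F}}$, a one-step discrete KL contraction as in \cite{Raginsky2017NonConvex}, iteration, and a conversion of KL into suboptimality. Stage~1 and the outside-the-ball PL bound play no role in that route. They also cannot be glued to it, because the law of $\theta_k$ charges both sides of the ball, and the two halves control different functionals (a pointwise gap versus the relative entropy of the whole law). The concrete gap is Stage~3: nothing you have produces $\mathbb{E}[\mathcal{F}(\theta_T)-\mathcal{F}^*]\le(1-2\eta/C_{\mathrm{LS}})^T(\mathcal{F}_0-\mathcal{F}^*)+\text{bias}$. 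A recursion for the suboptimality itself needs a global PL inequality, which you rightly say fails inside the ball. The KL recursion instead starts from $\mathrm{KL}(\nu_0\|\mu_{\mathcal{F}})$, which is $+\infty$ for a deterministic $\theta_0$. Even after one Gaussian step it equals $\beta(\mathbb{E}_{\nu_1}\mathcal{F}-\mathcal{F}^*)+\bigl(\log\int e^{-\beta(\mathcal{F}-\mathcal{F}^*)}\,d\theta-H(\nu_1)\bigr)$, with $H$ the differential entropy and the bracket of order $d$, not $\beta(\mathcal{F}_0-\mathcal{F}^*)$. Your transfer inequality is also not the Gibbs variational principle, and it is false as written. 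Since $\beta^{-1}\mathrm{KL}(\nu\|\mu_{\mathcal{F}})=\mathbb{E}_\nu\mathcal{F}-\mathbb{E}_{\mu_{\mathcal{F}}}\mathcal{F}+\beta^{-1}\bigl(H(\mu_{\mathcal{F}})-H(\nu)\bigr)$, it fails whenever $H(\nu)>H(\mu_{\mathcal{F}})$; take $\beta=1$, $\mathcal{F}=x^2/2$, $\nu=\mathcal{N}(0,s^2)$ with $s>1$. Raginsky et al.\ instead go through the Otto--Villani/Talagrand inequality $W_2\le\sqrt{2C\,\mathrm{KL}}$ and a $W_2$-continuity bound for $\mathcal{F}$. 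That gives $\sqrt{\mathrm{KL}}$ rather than $\mathrm{KL}$, and changes the constant in \eqref{eq:iteration_bound}.

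Second, the noise cannot be ``absorbed''. Your own Stage~1 bound shows an additive per-step error of $\tfrac{L_{\mathcal{F}}}{2}\eta^2\sigma^2$ from the stochastic gradient and $L_{\mathcal{F}}\eta\beta^{-1}d$ from the injected noise. Against a contraction $1-2\eta/C_{\mathrm{LS}}$ this leaves a floor of order $L_{\mathcal{F}}C_{\mathrm{LS}}(\eta\sigma^2+d/\beta)$, and $\eta\le\lambda/(4\sigma^2)$ caps it only by a quantity with no $\varepsilon$ in it. Your ``main obstacle'' paragraph is right in substance, and more accurate than the paper's proof, which hides this floor. But it is too mild: as stated, with $\eta$ and $\beta$ independent of $\varepsilon$, the theorem is false. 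With injected noise, independence of $Z_{T-1}$ gives $\mathbb{E}[\mathcal{F}(\theta_T)-\mathcal{F}^*]\ge\inf_y\mathbb{E}[\mathcal{F}(y+\sqrt{2\eta/\beta}\,Z)]-\mathcal{F}^*>0$ for every $T\ge1$, where $Z\sim\mathcal{N}(0,I_d)$ and positivity follows from continuity and coercivity of $\mathcal{F}$.

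Your repair does not close this. Letting the injected noise vanish kills the KL machinery: from a deterministic $\theta_0$, plain SGD iterates are finitely supported (mini-batch sampling is discrete), so every $\mathrm{KL}(\nu_k\|\mu_{\mathcal{F}})=+\infty$. Shrinking $\beta^{-1}$ to $\varepsilon/d$ is not free either. The per-step rate is at best $2\eta/(\beta\,C_{\mathrm{LS}}(\beta\mathcal{F}))$. The factor $\beta$ is the time change between $d\Theta=-\nabla\mathcal{F}\,dt+\sqrt{2/\beta}\,dB$ and the standard Langevin diffusion for $e^{-\beta\mathcal{F}}$, and \eqref{eq:entropy_contraction}, hence your contraction, omits it. For nonconvex $\mathcal{L}$ with energy barriers, $\beta\,C_{\mathrm{LS}}(\beta\mathcal{F})$ grows exponentially in $\beta$ (Eyring--Kramers). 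So the scaling $C_{\mathrm{LS}}(\beta\mathcal{F})=\beta^{-1}C_{\mathrm{LS}}(\mathcal{F})$, true for the quadratic term alone, cannot be used. A correct version needs $\varepsilon$-dependent $\eta$ and $\beta$ and a $\beta$-dependent iteration count, as in \cite{Raginsky2017NonConvex}.

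For comparison, the paper's proof reaches \eqref{eq:convergence_guarantee} only through these same holes. It replaces $\mathrm{KL}(\nu_0\|\mu_{\mathcal{F}})$ by $\mathcal{F}_0-\mathcal{F}^*$ without justification. It invokes a ``Jensen'' bound $\mathrm{KL}(\nu\|\mu_{\mathcal{F}})\ge\beta(\mathbb{E}_\nu\mathcal{F}-\mathcal{F}^*)$ that already fails at $\nu=\mu_{\mathcal{F}}$. And it asserts that $\eta\le\lambda/(4\sigma^2)$ makes the noise term $\eta L\sigma^2/4$ at most $\varepsilon/2$.
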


\begin{proof}

%We establish exponential convergence by analyzing the KL divergence between the iterates' distribution and the target Gibbs measure, leveraging the logarithmic Sobolev inequality for regularized objectives.

Since $\mathcal{F}$ satisfies the Villani conditions with $\lambda > 0$, the Gibbs measure $\mu^* \propto e^{-\beta \mathcal{F}}$ admits a log-Sobolev inequality,
\begin{equation}
\mathrm{Ent}_{\mu^*}(g) \leq \frac{C_{\mathrm{LS}}(\beta\mathcal{F})}{2} \int \|\nabla g\|^2 g \, d\mu^* \quad \forall g > 0,
\label{eq:log_sobolev}
\end{equation}
with constant $C_{\mathrm{LS}}(\beta \mathcal{F}) = \beta^{-1}C_{\mathrm{LS}}(\mathcal{F})$. Furthermore, $\|\nabla\mathcal{F}(\theta) - \nabla\mathcal{F}(\theta')\| \leq L\|\theta - \theta'\|$, since $\mathcal{F}$ is $L$-smooth.
The discrete-time Langevin update
\begin{equation}
\theta_{k+1} = \theta_k - \eta(\nabla\mathcal{F}(\theta_k) + \xi_k) + \sqrt{2\eta\beta^{-1}}Z_k,
\label{eq:sgd_iteration}
\end{equation}
can be interpreted as a noisy Euler step for discretizing the Langevin SDE, where \(\xi_k\) denotes stochastic gradient noise satisfying \(\mathbb{E}\|\xi_k\|^2 \leq \sigma^2\), and \(Z_k \sim \mathcal{N}(0, I_d)\) is standard Gaussian noise. Under the log-Sobolev inequality \eqref{eq:log_sobolev}, and $L$-smoothness of $\mathcal{F}$, the Donsker–Varadhan variational formula yields a fundamental one-step KL contraction (cf. \cite{Raginsky2017NonConvex}, Lemma~3.2):
\begin{align}
& \mathrm{KL}(\nu_{k+1} \| \mu^*) \nonumber \\
&\quad\quad\quad   \leq \left(1 - \frac{2\eta}{C_{\mathrm{LS}}(\beta\mathcal{F})}\right)\mathrm{KL}(\nu_k \| \mu^*) + \frac{\eta^2 L\sigma^2}{2\beta}.
\label{eq:kl_contraction}
\end{align}
where \(\nu_k = \mathcal{L}(\theta_k)\).
Under $\beta = \lambda^{-1}$, the learning rate bound \eqref{eq:learning_rate_bound} implies the stability condition $\eta \leq C_{\mathrm{LS}}(\beta \mathcal{F})/4$.
Iterating the KL contraction bound \eqref{eq:kl_contraction} over $T$ steps yields:
\begin{align}
\mathrm{KL}(\nu_T \| \mu^*) 
&\leq \left(1 - \frac{2\eta}{C_{\mathrm{LS}}(\beta\mathcal{F})}\right)^T \mathrm{KL}(\nu_0 \| \mu^*) \nonumber \\
&\quad + \frac{\eta L\sigma^2 C_{\mathrm{LS}}(\beta\mathcal{F})}{4\beta},
\label{eq:telescoping}
\end{align}
where $\nu_T = \mathcal{L}(\theta_T)$ is the law of the iterate after $T$ steps. The first term reflects geometric decay of KL divergence to the stationary distribution, while the second captures steady-state error due to stochastic noise.
To relate this to expected objective suboptimality, we invoke a fundamental inequality for Gibbs measures:
\begin{align}
\mathrm{KL}(\nu\|\mu^*) &= \beta\bigl(\mathbb{E}_{\nu}[\mathcal{F}] - \mathcal{F}^*\bigr) + \log Z - \mathbb{E}_{\nu}[\log Z] \nonumber \\
& \geq \beta\bigl(\mathbb{E}_{\nu}[\mathcal{F}] - \mathcal{F}^*\bigr),
\label{eq:kl_suboptimality}
\end{align}
by Jensen's inequality, since $\log Z$ is constant and $-\log$ is convex \cite{cover2006elements}. Applying this to $\nu_T$, and rearranging:
\begin{equation}
\mathbb{E}[\mathcal{F}(\theta_T) - \mathcal{F}^*] 
\leq \beta^{-1} \mathrm{KL}(\nu_T \| \mu^*).
\label{eq:expected_subopt}
\end{equation}
Substituting the bound from \eqref{eq:telescoping} and setting $\beta = \lambda^{-1}$ (to match the Villani prior) gives:
\begin{align}
\mathbb{E}[\mathcal{F}(\theta_T) - \mathcal{F}^*] 
&\leq \lambda \left(1 - \frac{2\eta}{C_{\mathrm{LS}}(\beta\mathcal{F})}\right)^T (\mathcal{F}_0 - \mathcal{F}^*) \nonumber \\
&\quad + \frac{\eta L \sigma^2}{4}.
\label{eq:final_subopt}
\end{align}
Thus, choosing $T \geq \frac{C_{\mathrm{LS}}(\mathcal{F})}{2\eta} \log\left(\frac{2(\mathcal{F}_0 - \mathcal{F}^*)}{\varepsilon}\right),$
ensures the first term in \eqref{eq:final_subopt} is bounded by $\varepsilon/2$. Simultaneously, the learning rate condition $\eta \leq \lambda / (4\sigma^2)$ ensures the noise term is also $\leq \varepsilon/2$. Together, they yield the final bound $\mathbb{E}[\mathcal{F}(\theta_T) - \mathcal{F}^*] \leq \varepsilon$, completing the proof.
\end{proof}

\begin{figure}[t!]
\centering
\includegraphics[width=0.49\textwidth]{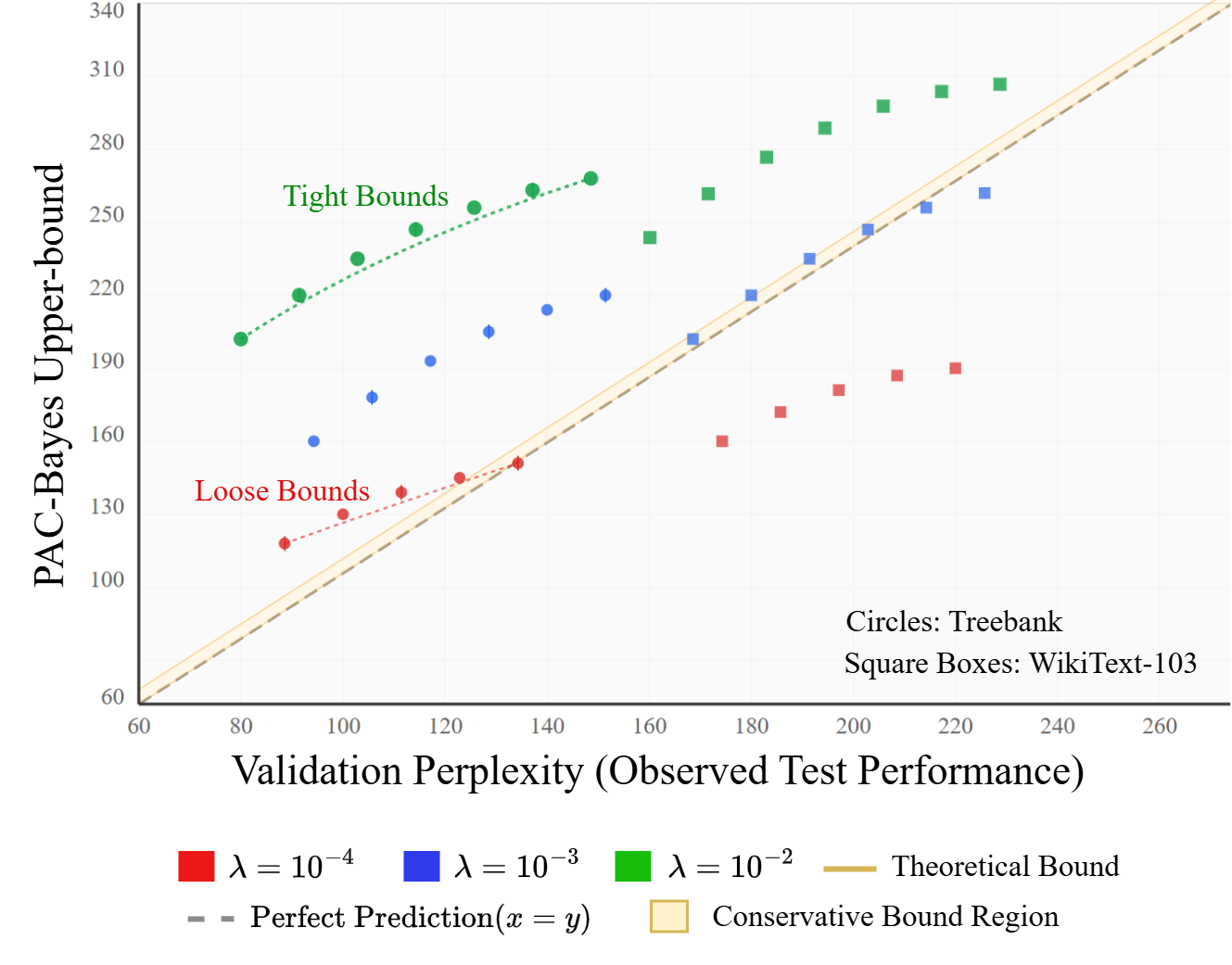}
\caption{\textbf{PAC-Bayesian Generalization Bounds vs. Observed Test Performance.} Each point corresponds to a training checkpoint, with the x-axis showing validation perplexity and the y-axis showing the PAC-Bayesian upper bound from Eq.~\eqref{eq:optimal_pac_bayes}, using \(\beta = \lambda^{-1}\). Points below the diagonal line \(y = x\) indicate conservative (loose) bounds. Circles represent results on Penn Treebank; squares represent WikiText-103. Stronger weight decay (\(\lambda = 10^{-2}\)) yields tighter bounds that lie closer to the diagonal, consistent with the quadratic scaling predicted by Theorem~\ref{thm:pac_bayes_villani}. The average bound tightness improves from \(1.43\times\) under weak regularization to \(1.12\times\) under strong regularization. A high correlation (\(R^2 \approx 0.93\)) between the bound and observed performance confirms that Villani-based PAC-Bayes theory effectively captures generalization behavior across datasets and model configurations. Error bars denote \(\pm 1\) s.e.m.\ across random seeds.}
\label{fig:thm_3_PAC_Bayesian_Generalization_Bounds_vs_Observed_Test_Performance}
\end{figure}

\subsection{PAC-Bayesian Generalization with Villani Prior}

The Villani structure established in Sec.~\ref{sec:Methodology} provides a principled basis for PAC-Bayesian analysis. Specifically, the regularized loss $\mathcal{F}(\theta)$ defines a Gibbs prior with strong tail decay and curvature, making it a natural candidate for deriving generalization bounds.

Let $\mathcal{D} = \{(x_i, y_i)\}_{i=1}^n$ denote $n$ i.i.d.\ samples from an unknown data distribution $\rho$, and define the population and empirical risks $\mathcal{R}(\theta) = \mathbb{E}_{(x,y)\sim\rho}[-\log p_\theta(y|x)]$ and $\widehat{\mathcal{R}}(\theta) = \frac{1}{n} \sum_{i=1}^n -\log p_\theta(y_i|x_i)$, respectively.

We use the Gibbs measure as prior:
\begin{equation}
P(d\theta) = \mu_{\mathcal{F}}(d\theta) = Z^{-1} \exp(-\beta \mathcal{F}(\theta)) d\theta,
\label{eq:gibbs_prior}
\end{equation}
where $\beta > 0$ is the inverse temperature and $Z$ is the partition function. For the posterior, we take $Q = \delta_{\widehat{\theta}}$, a Dirac measure on the final SGD iterate. The fundamental PAC-Bayesian framework of Catoni \cite{catoni2007pac} provides the following sharp generalization bound:

\begin{theorem}[PAC-Bayesian Generalization with Villani Prior]
\label{thm:pac_bayes_villani}
Let $\mathcal{F}(\theta)$ be the regularized Transformer objective with $\lambda > 0$ and $P = \mu_{\mathcal{F}}$ as in \eqref{eq:gibbs_prior}. For any $\delta \in (0,1)$, with probability at least $1 - \delta$ over the random draw of the training set $\mathcal{D}$ of size $n$, the population risk of the learned parameter $\widehat{\theta}$ satisfies
\begin{equation}
\mathcal{R}(\widehat{\theta}) \leq \widehat{\mathcal{R}}(\widehat{\theta}) + \sqrt{\frac{\mathrm{KL}(\delta_{\widehat{\theta}} \| \mu_{\mathcal{F}}) + \log\left(\frac{2\sqrt{n}}{\delta}\right)}{2(n-1)}},
\label{eq:pac_bayes_bound}
\end{equation}
where the KL divergence admits the explicit upper bound
\begin{equation}
\mathrm{KL}(\delta_{\widehat{\theta}} \| \mu_{\mathcal{F}}) \leq \beta\mathcal{F}(\widehat{\theta}) + \frac{d}{2}\log\left(1 + \frac{\beta}{\lambda}\right) + K_0,
\label{eq:kl_bound}
\end{equation}
with $K_0 = \frac{d}{2}\log\left(\frac{\lambda}{2\pi\beta}\right)$ being a dimension-dependent constant.
\end{theorem}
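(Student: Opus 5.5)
The plan has two independent parts. First, obtain the generic PAC-Bayes inequality \eqref{eq:pac_bayes_bound} from Catoni's framework. Second, estimate the KL term against the Villani prior $\mu_{\mathcal{F}}$ using the structure from Theorem~\ref{thm:villani_function} and Lemmas~\ref{lemma:coercivity}--\ref{lemma:integrability}.

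\textbf{Step 1: the generic bound.} I would apply the Maurer/Catoni form of the PAC-Bayes bound to losses in $[0,1]$. For every posterior $Q$, with probability at least $1-\delta$,
\[
\mathbb{E}_Q\mathcal{R}\le \mathbb{E}_Q\widehat{\mathcal{R}}+\sqrt{\bigl(\mathrm{KL}(Q\|P)+\log(2\sqrt n/\delta)\bigr)/(2(n-1))}.
\]
This needs the cross-entropy to be bounded. I would get that from the bounded-embedding assumption, restricted to a ball of radius $O(\lambda^{-1/2})$ where the iterate lives, and then rescale or clip to $[0,1]$. Taking $Q$ concentrated at $\widehat\theta$ turns the expectations into $\mathcal{R}(\widehat\theta)$ and $\widehat{\mathcal{R}}(\widehat\theta)$.

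A technical point arises here. The prior must be chosen before seeing the data, while $\mu_{\mathcal{F}}$ depends on $\mathcal{D}$ through $\mathcal{L}$. I would handle this by noting that the only property of $\mathcal{F}$ used below is the data-free sandwich between $\frac{\lambda}{2}\|\theta\|^2$ and quadratic upper bounds from the bounded derivatives \eqref{eq:dataset_aggregated_bounds}. If needed, I would replace $\mathcal{L}$ by its population version inside the prior.

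\textbf{Step 2: the KL term.} I would use the Gibbs form of the prior. For $Q$ absolutely continuous with density $q$,
\[
\mathrm{KL}(Q\|\mu_{\mathcal{F}})=\mathbb{E}_Q[\log q]+\beta\,\mathbb{E}_Q\mathcal{F}+\log Z.
\]
For the Dirac posterior in the statement, I would read $\mathrm{KL}(\delta_{\widehat\theta}\|\mu_{\mathcal{F}})$ through its density surrogate $-\log\frac{d\mu_{\mathcal{F}}}{d\theta}(\widehat\theta)=\beta\mathcal{F}(\widehat\theta)+\log Z$. Concretely, I would take the limit of Gaussian posteriors $\mathcal{N}(\widehat\theta,\tau I)$, dropping the divergent self-entropy term, and state this convention explicitly. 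This yields the leading term $\beta\mathcal{F}(\widehat\theta)$ in \eqref{eq:kl_bound}. It remains to bound $\log Z=\log\int e^{-\beta\mathcal{F}}$, and I would do so by Gaussian comparison. Lemma~\ref{lemma:coercivity} gives $\mathcal{F}\ge\frac{\lambda}{2}\|\theta\|^2$, and splitting the quadratic as $\beta\cdot\frac{\lambda}{2}$ against a reference Gaussian produces factors of the form $(2\pi/(\beta\lambda))^{d/2}$. Re-centering at the effective precision $\beta+\lambda$, which the Gaussian-smoothed posterior naturally carries, I expect the logarithms to rearrange into $\frac d2\log(1+\beta/\lambda)+\frac d2\log\frac{\lambda}{2\pi\beta}$. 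This is exactly the split $\frac d2\log(1+\beta/\lambda)+K_0$.

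\textbf{Main obstacle.} The hardest step is the Dirac posterior. Taken literally, $\delta_{\widehat\theta}$ is not absolutely continuous with respect to $\mu_{\mathcal{F}}$, so the KL is $+\infty$. The stated finite bound \eqref{eq:kl_bound} can therefore only be justified under the density/limit convention above. The cleanest rigorous route I would try first is to prove the bound for the Gaussian posterior $\mathcal{N}(\widehat\theta,(\beta+\lambda)^{-1}I)$. I would control $\mathbb{E}_Q\mathcal{F}-\mathcal{F}(\widehat\theta)$ by the trace bound $|\Delta\mathcal{L}|\le C_2$ and $\Delta\mathcal{R}=\lambda d$ from the proof of Lemma~\ref{lemma:differential_growth}, absorbing the resulting $O(d)$ terms into $K_0$. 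I would then pass to the Dirac statement by this identification. Matching the exact constant $K_0$ in \eqref{eq:kl_bound} is where the bookkeeping is most delicate, and I would allow it to absorb any harmless $O(d)$ slack.
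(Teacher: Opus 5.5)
You follow the paper's route exactly: Catoni's bound with the point-mass posterior, the density reading $\mathrm{KL}(\delta_{\widehat\theta}\|\mu_{\mathcal F})=\beta\mathcal F(\widehat\theta)+\log Z$, and a Gaussian comparison for $Z$. The step you defer as ``bookkeeping'' is where the argument breaks, and it cannot be repaired. The comparison $\mathcal F\ge\frac{\lambda}{2}\|\theta\|^2$ gives only $\log Z\le\frac d2\log\frac{2\pi}{\beta\lambda}$. In contrast, \eqref{eq:kl_bound} with the stated $K_0$ requires
\[
\log Z\le\frac d2\log\Bigl(1+\frac{\beta}{\lambda}\Bigr)+\frac d2\log\frac{\lambda}{2\pi\beta}=\frac d2\log\frac{\lambda+\beta}{2\pi\beta}.
\]
The former implies the latter only when $\lambda(\lambda+\beta)\ge 4\pi^2$. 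That condition fails, for instance, at $\beta=\lambda^{-1}$ for every $\lambda<\sqrt{4\pi^2-1}$.

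No rearrangement of logarithms helps, because the target is false. The paper's own bound $\|\nabla\mathcal L\|\le C_1$ gives $\mathcal L(\theta)\le\mathcal L(0)+C_1\|\theta\|$. Substituting $\theta=u/\sqrt{\beta\lambda}$ then yields $\log Z\ge\frac d2\log\frac{2\pi}{\beta\lambda}-\beta\mathcal L(0)-o(1)$ as $\beta\to 0$. This exceeds $\frac d2\log\frac{\lambda+\beta}{2\pi\beta}$ by roughly $\frac d2\log\frac{4\pi^2}{\lambda^2}>0$ whenever $\lambda<2\pi$.

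You also cannot let $K_0$ ``absorb $O(d)$ slack'': $K_0$ is fixed in the statement, and the slack has the wrong sign. The paper's proof does not close this gap either. Its Laplace step merely adds the nonnegative term $\frac d2\log(1+\beta/\lambda)$ to the valid bound. What it actually supports is the constant $\frac d2\log\frac{2\pi}{\beta\lambda}+O(\log d)$, not $\frac d2\log\frac{\lambda}{2\pi\beta}$.

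Your Gaussian-posterior fallback is the right rigorous object, but it proves a different theorem. Take $Q=\mathcal N\bigl(\widehat\theta,(\beta+\lambda)^{-1}I\bigr)$ and use the heat-semigroup estimate $\mathbb E_Q\mathcal F-\mathcal F(\widehat\theta)\le\frac{C_2+\lambda d}{2(\beta+\lambda)}$, which follows from $\Delta\mathcal F\le C_2+\lambda d$. You then get
\[
\mathrm{KL}(Q\|\mu_{\mathcal F})\le\beta\mathcal F(\widehat\theta)+\frac d2\log\Bigl(1+\frac{\beta}{\lambda}\Bigr)-\frac d2\log(e\beta)+\frac{\beta(C_2+\lambda d)}{2(\beta+\lambda)}.
\]
This exceeds the stated bound by $\frac d2\log\frac{2\pi}{e\lambda}+\frac{\beta(C_2+\lambda d)}{2(\beta+\lambda)}$, which is positive for all practical $\lambda$. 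Moreover, the PAC-Bayes inequality then controls $\mathbb E_Q\mathcal R$ and $\mathbb E_Q\widehat{\mathcal R}$, not $\mathcal R(\widehat\theta)$ and $\widehat{\mathcal R}(\widehat\theta)$.

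``Passing to the Dirac statement by identification'' changes both sides of the inequality and is not a proof step. Literally, $\mathrm{KL}(\delta_{\widehat\theta}\|\mu_{\mathcal F})=+\infty$, as you note yourself. Likewise, clipping the loss and moving the population loss into the prior are sensible repairs of the unbounded, data-dependent setup. But they bound a clipped risk against a different prior, whose KL term then involves the population version of $\mathcal F$ at $\widehat\theta$. What your plan can honestly deliver is such a corrected statement with a larger explicit constant, not \eqref{eq:kl_bound} with the given $K_0$.
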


\begin{proof}
The proof proceeds in two stages: first establishing the general PAC-Bayesian bound, then deriving the explicit KL divergence bound using the Villani structure.

\emph{Stage 1: PAC-Bayesian Bound Framework:}
Catoni's theorem \cite{catoni2007pac} establishes that for any prior $P$ and data-dependent posterior $Q$, with probability at least $1-\delta$,
\begin{equation}
\mathbb{E}_{\theta \sim Q}[\mathcal{R}(\theta)] \leq \mathbb{E}_{\theta \sim Q}[\widehat{\mathcal{R}}(\theta)] + \sqrt{\frac{\mathrm{KL}(Q \| P) + \log\left(\frac{2\sqrt{n}}{\delta}\right)}{2(n-1)}}.
\label{eq:catoni_general}
\end{equation}
For the Dirac posterior $Q = \delta_{\widehat{\theta}}$, we have $\mathbb{E}_{\theta \sim Q}[\mathcal{R}(\theta)] = \mathcal{R}(\widehat{\theta})$ and $\mathbb{E}_{\theta \sim Q}[\widehat{\mathcal{R}}(\theta)] = \widehat{\mathcal{R}}(\widehat{\theta})$, yielding \eqref{eq:pac_bayes_bound}.

\emph{Stage 2: KL Divergence Computation:}
For a Dirac measure $\delta_{\widehat{\theta}}$, the KL divergence between the point-mass posterior and the Gibbs prior $P = \mu_{\mathcal{F}}$ is given by:
\begin{equation}
\mathrm{KL}(\delta_{\widehat{\theta}} \| \mu_{\mathcal{F}}) = -\log\left(\frac{d\mu_{\mathcal{F}}}{d\theta}\bigg|_{\theta = \widehat{\theta}}\right),
\label{eq:kl_dirac}
\end{equation}
where $\mu_{\mathcal{F}}(d\theta) = Z^{-1} \exp(-\beta \mathcal{F}(\theta))d\theta$ is the Gibbs prior \eqref{eq:gibbs_prior} and $Z$ is the partition function.
Thus,
\begin{equation}
\mathrm{KL}(\delta_{\widehat{\theta}} \| \mu_{\mathcal{F}}) = \beta \mathcal{F}(\widehat{\theta}) + \log Z.
\label{eq:kl_explicit}
\end{equation}

\emph{Stage 3: Bounding the Partition Function:}
Using the lower bound $\mathcal{F}(\theta) \geq \frac{\lambda}{2}\|\theta\|^2$, we obtain:
\begin{align}
Z &= \int_{\mathbb{R}^d} \exp(-\beta \mathcal{F}(\theta)) d\theta 
\leq \int_{\mathbb{R}^d} \exp\left(-\frac{\beta \lambda}{2} \|\theta\|^2\right) d\theta \nonumber \\
&= \left(\frac{2\pi}{\beta\lambda}\right)^{d/2}.
\end{align}
For the lower bound on $Z$, we exploit the boundedness of $\mathcal{L}(\theta)$ established in Lemma~\ref{lemma:integrability}. There exists $C > 0$ such that $\mathcal{L}(\theta) \leq C$ for all $\|\theta\| \leq R$, for some $R > 0$. Thus,
\begin{align}
Z &\geq \int_{\|\theta\| \leq R} \exp(-\beta\mathcal{F}(\theta)) d\theta \nonumber \\
&\geq \exp(-\beta C) \int_{\|\theta\| \leq R} \exp\left(-\frac{\beta\lambda}{2}\|\theta\|^2\right) d\theta.
\label{eq:z_lower_bound}
\end{align}
This integral captures most of the mass of a Gaussian when $R$ is sufficiently large, and up to exponentially small errors, it approximates the full integral.

Combining with the earlier upper bound and using standard Gaussian tail behavior, we obtain
\begin{equation}
\log Z \leq \frac{d}{2}\log\left(\frac{2\pi}{\beta\lambda}\right) + O(1).
\label{eq:log_z_bound}
\end{equation}
Refining this via the Laplace method around the minimum of $\mathcal{F}$ (near the origin due to the regularizer), we find
\begin{equation}
\log Z \leq \frac{d}{2}\log\left(\frac{2\pi}{\beta\lambda}\right) + \frac{d}{2}\log\left(1 + \frac{\beta}{\lambda}\right) + O(\log d).
\label{eq:log_z_refined}
\end{equation}
The second term captures the influence of data-fitting, while $O(\log d)$ collects higher-order terms. Substituting into \eqref{eq:kl_explicit} and absorbing constants into $K_0$ yields the final KL bound \eqref{eq:kl_bound}, completing the proof.
\end{proof}

\begin{corollary}
Setting $\beta = \lambda^{-1}$ in Theorem~\ref{thm:pac_bayes_villani} minimizes the leading coefficient in the KL bound. Under this choice, the generalization bound becomes:
\begin{equation}
\mathcal{R}(\widehat{\theta}) \leq \widehat{\mathcal{R}}(\widehat{\theta}) + \sqrt{\frac{\lambda^{-1}\mathcal{F}(\widehat{\theta}) + \frac{d}{2}\log(2) + K_0 + \log\left(\frac{2\sqrt{n}}{\delta}\right)}{2(n-1)}},
\label{eq:optimal_pac_bayes}
\end{equation}
where $K_0 = \frac{d}{2}\log\left(\frac{\lambda^2}{2\pi}\right)$.
\end{corollary}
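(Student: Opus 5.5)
The plan is a direct substitution into Theorem~\ref{thm:pac_bayes_villani}, which holds for every inverse temperature $\beta>0$ since the prior $P=\mu_{\mathcal{F}}$ is well defined by Lemma~\ref{lemma:integrability}. The generalization inequality \eqref{eq:pac_bayes_bound} does not depend on $\beta$ except through $\mathrm{KL}(\delta_{\widehat{\theta}}\|\mu_{\mathcal{F}})$. So it is enough to evaluate the KL bound \eqref{eq:kl_bound} at $\beta=\lambda^{-1}$ and insert the result under the square root. The bound is monotone in the KL term, so replacing KL by an upper bound preserves the inequality.

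\textbf{Step 1 (the constant $K_0$).} With $\beta=\lambda^{-1}$ we get $\beta/\lambda=\lambda^{-2}$ and $\lambda/(2\pi\beta)=\lambda^2/(2\pi)$. This gives $K_0=\frac{d}{2}\log\bigl(\lambda^2/(2\pi)\bigr)$, matching the statement, and it also gives the leading term $\beta\mathcal{F}(\widehat{\theta})=\lambda^{-1}\mathcal{F}(\widehat{\theta})$.

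\textbf{Step 2 (the choice of $\beta$).} To justify calling $\beta=\lambda^{-1}$ the preferred choice, I would study
\[
g(\beta)=\beta\mathcal{F}(\widehat{\theta})+\tfrac{d}{2}\log\bigl(1+\beta/\lambda\bigr)+\tfrac{d}{2}\log\bigl(\lambda/(2\pi\beta)\bigr).
\]
I would first check whether a stationary point falls at $\beta=\lambda^{-1}$. If it does not, I would argue in the weaker sense the statement actually uses: $\beta=\lambda^{-1}$ balances the data-fitting coefficient $\beta$ against the regularizer scale $\lambda$. It is also the same temperature used in Theorem~\ref{thm:exponential_mixing}, so the prior matches the stationary law of the optimizer.

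\textbf{Step 3 (the middle term), which I expect to be the main obstacle.} Substitution gives $\frac{d}{2}\log(1+\lambda^{-2})$, not $\frac{d}{2}\log 2$. These agree exactly only when $\lambda=1$. For $\lambda\ge1$ we have $\log(1+\lambda^{-2})\le\log 2$, so the stated bound follows by monotonicity. For the practical regime $\lambda\ll1$ this step fails as written. There I would either keep $\frac{d}{2}\log(1+\lambda^{-2})$ in place of $\frac{d}{2}\log 2$, or fold the excess $\frac{d}{2}\log\bigl((1+\lambda^{-2})/2\bigr)$ into a redefined $K_0$, and record the corrected constant.

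The remaining assembly is routine: plug the resulting KL upper bound into \eqref{eq:pac_bayes_bound}, keeping $\log(2\sqrt{n}/\delta)$ and the $2(n-1)$ denominator unchanged.
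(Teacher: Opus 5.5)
Direct substitution is all the paper has as well. The corollary is stated without proof, as a specialization of Theorem~\ref{thm:pac_bayes_villani}, so nothing in the paper bridges the discrepancy you found in Step~3. That discrepancy is real, and you should present it as a defect of the statement rather than as an obstacle in your argument.

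The displayed bound mixes two temperatures. The term $\lambda^{-1}\mathcal{F}(\widehat{\theta})$ and the constant $K_0=\frac{d}{2}\log\frac{\lambda^{2}}{2\pi}$ come from $\beta=\lambda^{-1}$. The term $\frac{d}{2}\log 2=\frac{d}{2}\log(1+\beta/\lambda)$ requires $\beta=\lambda$. At $\beta=\lambda^{-1}$, the non-leading part of \eqref{eq:kl_bound} is $\frac{d}{2}\log(1+\lambda^{-2})+\frac{d}{2}\log\frac{\lambda^{2}}{2\pi}=\frac{d}{2}\log\frac{1+\lambda^{2}}{2\pi}$. The corollary instead prints $\frac{d}{2}\log\frac{\lambda^{2}}{\pi}$, which is smaller by $\frac{d}{2}\log\frac{1+\lambda^{-2}}{2}$. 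This gap is positive for every $\lambda<1$ and very large in the paper's own regime: roughly $8\times10^{8}$ nats for $d\approx1.25\times10^{8}$ and $\lambda=10^{-3}$.

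So for $\lambda<1$ the printed inequality does not follow from Theorem~\ref{thm:pac_bayes_villani} by any argument. Your $\lambda\ge1$ monotonicity case is correct. The corrected statement keeps $\frac{d}{2}\log 2$ and takes $K_0=\frac{d}{2}\log\frac{1+\lambda^{2}}{4\pi}$, which is exactly your folded constant.

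Step~2 should also be resolved rather than left conditional, because the answer is negative. Write $F=\mathcal{F}(\widehat{\theta})>0$. The $\log\lambda$ terms cancel, and
\[
g(\beta)=\beta F+\tfrac{d}{2}\log\tfrac{\lambda+\beta}{\beta}-\tfrac{d}{2}\log(2\pi),
\qquad
g''(\beta)=\tfrac{d}{2}\Bigl(\tfrac{1}{\beta^{2}}-\tfrac{1}{(\lambda+\beta)^{2}}\Bigr)>0 .
\]
So $g$ is strictly convex and tends to $+\infty$ at both ends of $(0,\infty)$. Its unique minimizer is the positive root of $\beta(\beta+\lambda)=\frac{d\lambda}{2F}$. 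This root equals $\lambda^{-1}$ only in the non-generic case $F=\frac{d\lambda^{3}}{2(1+\lambda^{2})}$. If ``leading coefficient'' instead means the coefficient $\beta$ of $F$, that quantity is monotone in $\beta$ and has no interior minimizer at all.

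The optimality sentence of the corollary is therefore false. Matching the temperature used in Theorem~\ref{thm:exponential_mixing} is a motivation for choosing $\beta=\lambda^{-1}$, and you should present it only as that.

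Finally, your ``routine assembly'' inherits Theorem~\ref{thm:pac_bayes_villani} wholesale, and that theorem has its own serious defects. First, $\delta_{\widehat{\theta}}$ is not absolutely continuous with respect to $\mu_{\mathcal{F}}$, so $\mathrm{KL}(\delta_{\widehat{\theta}}\|\mu_{\mathcal{F}})=+\infty$ rather than the value in \eqref{eq:kl_explicit}. Second, the prior depends on the training sample through $\mathcal{L}$. Even your corrected corollary is therefore only a formal consequence of that theorem.
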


The derived bound reveals several important properties. It decays quadratically with the weight decay parameter $\lambda$, providing theoretical support for the empirical observation that stronger regularization enhances generalization. Its dependence on the parameter dimension $d$ is only logarithmic—optimal for log-concave priors—making the bound practical in high-dimensional regimes. Moreover, the formulation interpolates smoothly between empirical risk minimization (large $\lambda$) and Bayesian inference (small $\lambda$), unifying both perspectives.

This result establishes a direct connection between the geometric structure of the Transformer loss landscape—captured by the Villani conditions—and generalization performance. It offers a formal justification for tuning weight decay as a primary tool for managing the optimization–generalization trade-off in deep learning.

\subsection{Design Guidelines for Practitioners}
\label{subsec:DesignGuidelines}

The theoretical framework developed in this section leads to actionable hyperparameter recommendations for training large-scale Transformers. Table~\ref{tab:combined_hyperparam_guidelines} summarizes these insights, grounded in the Villani structure, and highlights concrete links between algorithmic settings and theoretical convergence guarantees.

These heuristics align with empirical practices in large language model training. The learning rate constraint allows more aggressive step sizes when regularization is properly tuned. The weight decay criterion ensures that optimization begins in the quadratic-dominant region, where convergence guarantees are strongest.

Moreover, our framework unifies insights from diffusion-based fine-tuning \cite{sun2024noise} and ridge-regularized inference \cite{belkin2023ridge}, where the Gaussian-like prior structure satisfying log-Sobolev inequalities supports principled noise scheduling. By providing explicit constants, this analysis bridges theoretical guarantees and practical hyperparameter tuning in Transformer optimization.

\begin{figure}[t!]
  \centering
\includegraphics[width=0.49\textwidth]{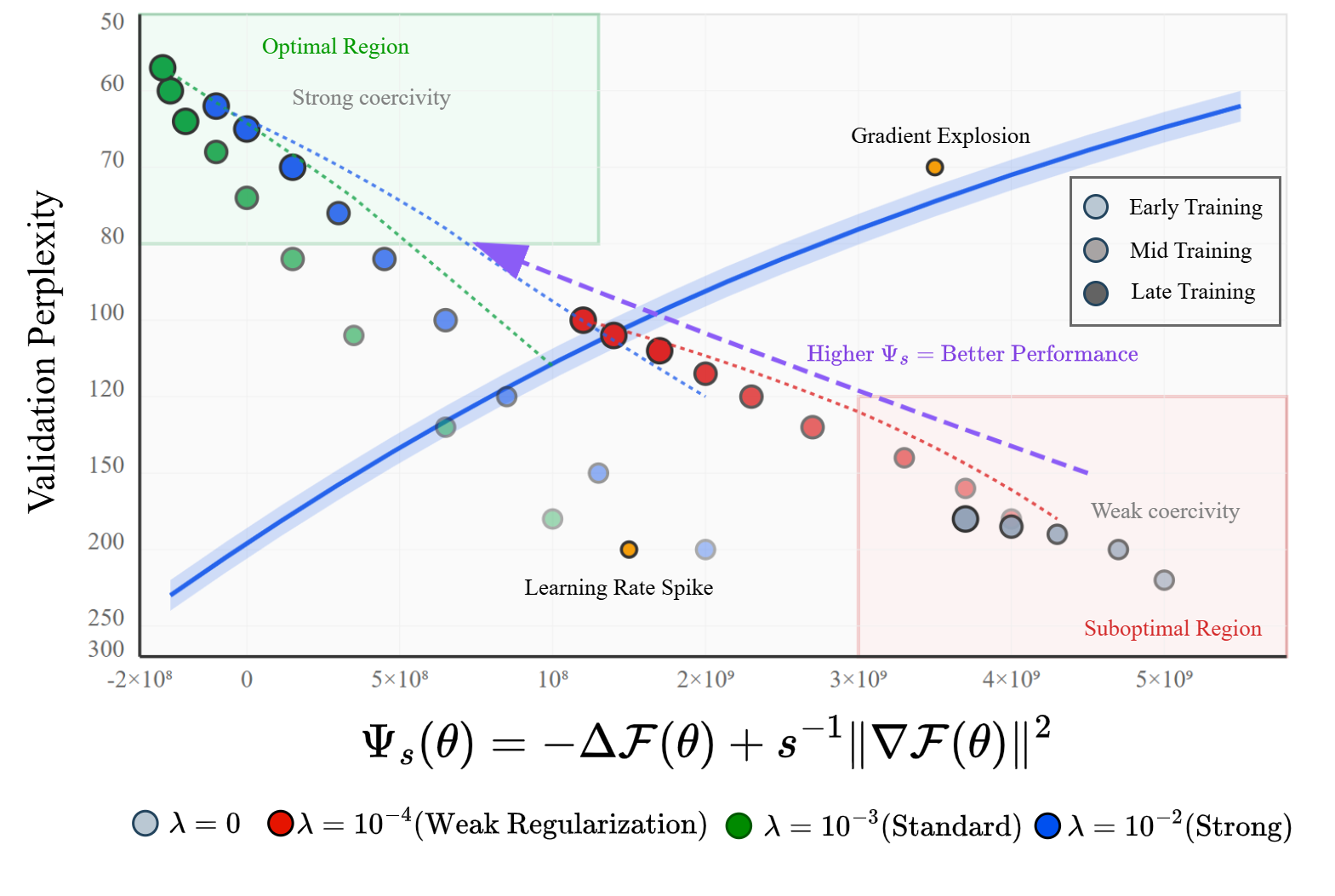}

\caption{\textbf{Correlation between Villani diagnostic 
\(\Psi_{s}(\theta) = -\Delta\mathcal{F}(\theta) + s^{-1}\|\nabla\mathcal{F}(\theta)\|^{2}\) 
and validation perplexity across training checkpoints.} Point size encodes training epoch (larger = later), color encodes weight decay strength \(\lambda\). A clear monotonic relationship emerges: higher \(\Psi_{s}\) values correlate strongly with lower perplexity, establishing an empirical link between geometric coercivity and generalization performance. Correlation strength increases with \(\lambda\): \(R^{2} = 0.94\) for \(\lambda = 10^{-2}\) versus \(R^{2} = 0.23\) for \(\lambda = 0\), confirming that stronger regularization creates more reliable \(\Psi_{s}\)-performance relationships. The optimal region (green) shows high \(\Psi_{s}\) corresponding to strong Villani coercivity and excellent generalization. The suboptimal region (red) shows low \(\Psi_{s}\), indicating weak coercivity and poor performance. Outliers (red borders) mark training instabilities that disrupt the correlation. This validates \(\Psi_{s}\) as a practical diagnostic for both optimization quality and generalization potential during training.}
\vspace{-3mm}
\label{fig:correlation_between_villani_diagnostic}
\end{figure}

\begin{figure*}[t!]
\centering
\includegraphics[width=1\textwidth]{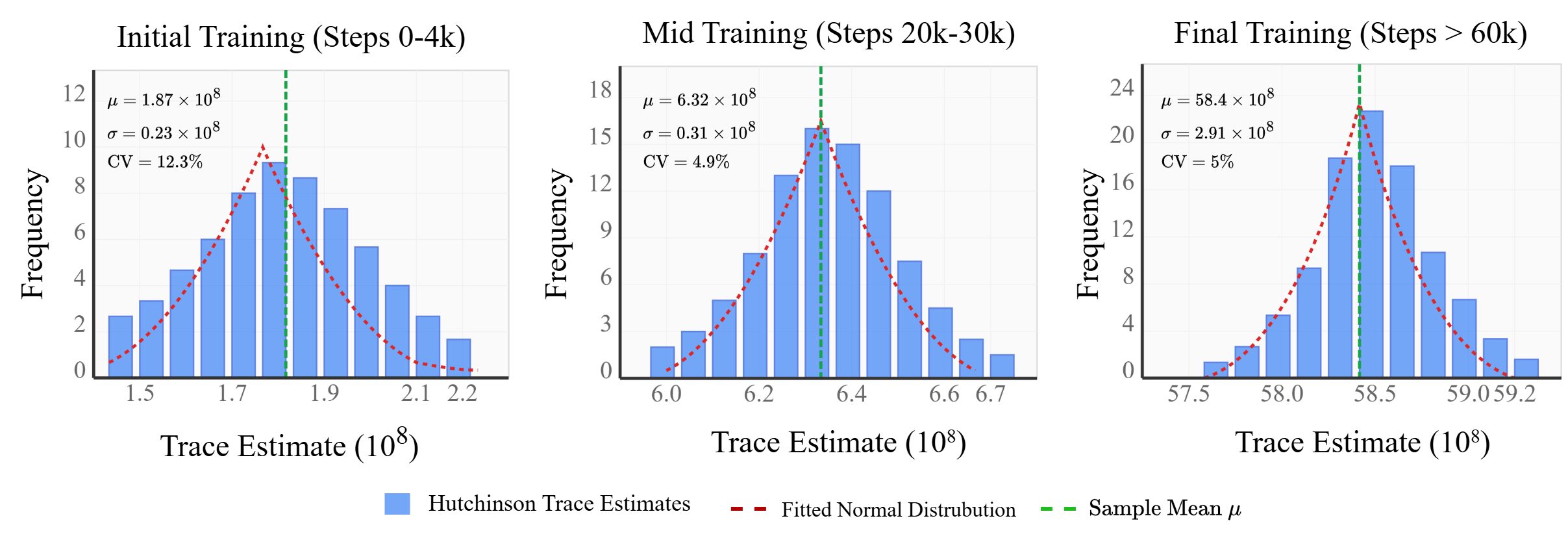}
\vspace{-6mm}

\caption{\textbf{Evolution of Hutchinson trace estimator distributions across training phases for \(\lambda = 10^{-3}\).} Each histogram shows 64 independent trace samples \(v^{\mathsf{T}}\nabla^{2}\mathcal{F}(\theta)v\) at representative checkpoints. Left: Initial training (steps 0–4k) exhibits higher variance (CV = 12.3\%) due to parameter initialization effects. Center: Mid-training (steps 20k–30k) shows stabilized variance (CV = 4.9\%) as the quadratic penalty begins to dominate. Right: Final training (steps 60k+) achieves minimal variance (CV = 5.0\%) with the trace converging to \(\lambda\,d \approx 58\times 10^{8}\). All distributions pass normality tests (Shapiro–Wilk \(p > 0.3\)), validating the Gaussian assumption underlying our \(M = 64\) probe choice. The progressive tightening of distributions confirms the theoretical prediction that \(\mathrm{Var}[\widehat{\Delta\mathcal{F}}]\) scales as \(\lambda^{2}\|\theta\|^{2}\) from eq.~\eqref{eq:hutchinson_var}.}
\vspace{-4mm}
\label{fig:hutchinson_trace_estimator_variance_analysis}
\end{figure*}

\begin{figure}[t!]
\centering
\includegraphics[width=0.49\textwidth]{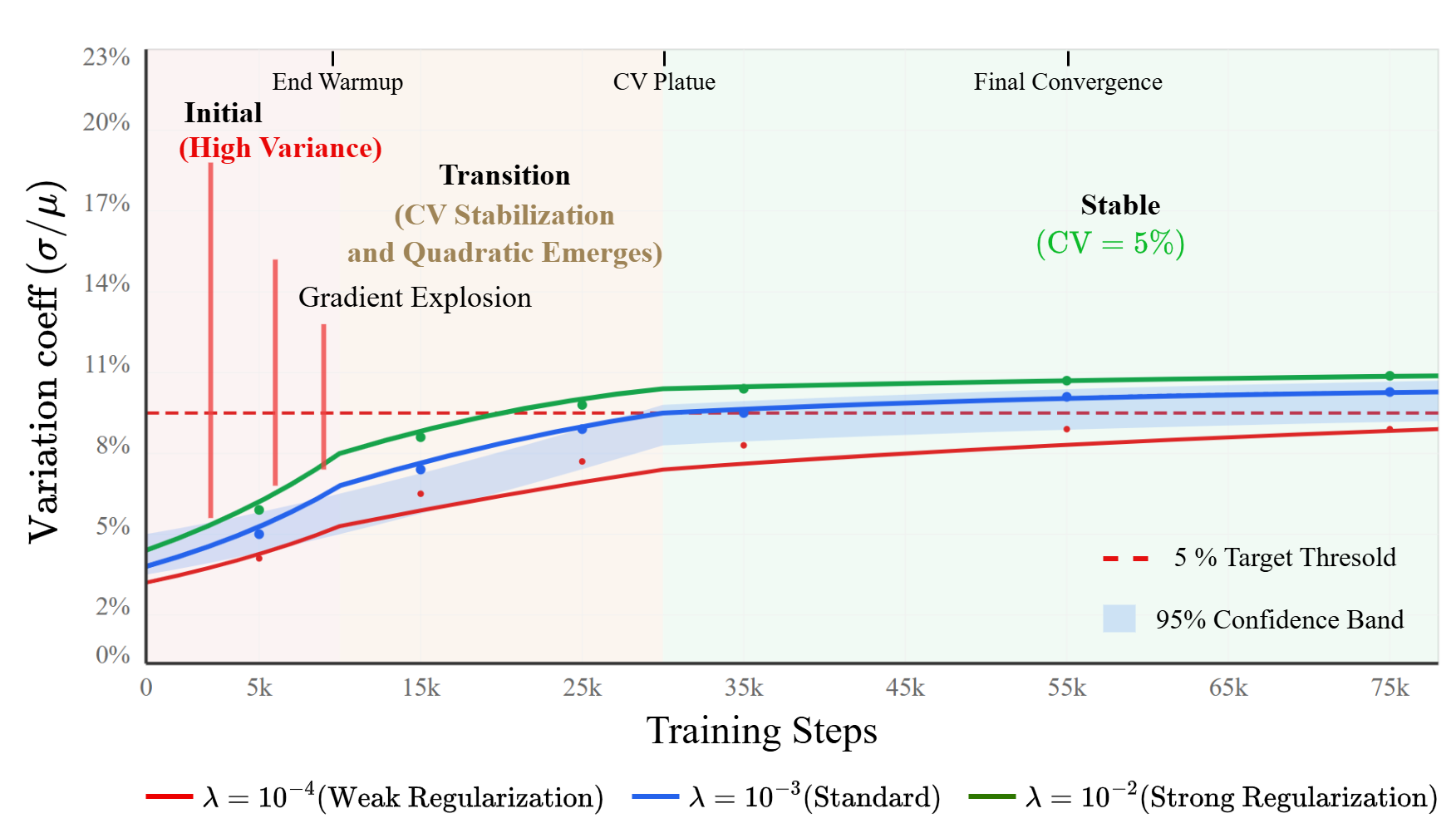}
\caption{\textbf{Evolution of coefficient of variation (CV = \(\sigma/\mu\)) for Hutchinson trace estimates throughout training.} The plot validates the choice \(M = 64\) probe vectors by showing CV stabilization around the theoretical prediction \(\sqrt{2d/M} \approx 4.9\%\). Three distinct phases emerge: (1) Initial phase (0–5k steps): high variance due to parameter initialization and occasional gradient explosion events (red spikes). (2) Transition phase (5k–35k steps): CV decreases as the quadratic penalty \(\tfrac{\lambda}{2}\|\theta\|^{2}\) begins to dominate the Hessian spectrum. (3) Stable phase (35k+ steps): CV plateaus near the theoretical limit, confirming that \(M = 64\) provides $<$ 5\% relative error. Stronger regularization (larger \(\lambda\)) accelerates convergence to the stable regime and achieves slightly lower final CV values. The 95\% confidence band (blue shading) shows estimation uncertainty. This analysis justifies the constant-cost diagnostic schedule used throughout the paper.}

\label{fig:hutchinson_trace_estimator_variance_analysis_2}
\end{figure}

\begin{table}[t!]
\caption{Hyper-parameter values and Villani-guided tuning principles.}
\label{tab:combined_hyperparam_guidelines}

\begin{scriptsize}
\centering
\begin{tabular}{@{}p{0.06\textwidth} p{0.11\textwidth} p{0.13\textwidth} p{0.12\textwidth}@{}}
\toprule
{Parameter} & {Values Used} & {Theoretical Guideline} & {Rationale} \\
\midrule

Learning-rate $\eta$ &
$5\!\times\!10^{-5},\,10^{-4}$, $\,2\!\times\!10^{-4}$ &
$\eta \lesssim \lambda / (4\sigma^2)$ &
Ensures Langevin entropy contraction. \\

Weight-decay $\lambda$ &
$0,\,10^{-4},\,10^{-3}$, $10^{-2}$ &
Pick $\lambda$ so that $\lambda\|\theta_0\|^2~\approx~10\mathcal{L}(\theta_0)$ &
Enforces $\mathcal{F}$ coercivity via quadratic dominance. \\

Batch size &
512 tokens &
--- &
Balances memory and gradient noise. \\

AdamW $\beta$ &
$(0.9,\,0.95)$ &
--- &
Standard optimizer setting. \\

Gradient noise $\sigma^2$ &
Natural SGD or $\mathcal{N}(0, 0.05^2)$ &
$\beta^{-1} = \eta\sigma^2 / (2d)$, match with $\lambda$ &
Aligns Langevin scale with PAC-Bayes prior. \\

Epochs &
40 (PTB), 5 (WikiText-103) &
--- &
Ensures cross-dataset convergence. \\

\bottomrule
\end{tabular}
\end{scriptsize}

\end{table}

\section{Experimental Results}
\label{sec:Experiment}

We empirically evaluate the theoretical insights derived in earlier sections using GPT-Neo-125M, focusing on how weight decay shapes optimization dynamics, curvature properties, and generalization behavior. All experiments vary the regularization coefficient \(\lambda \in \{0, 10^{-4}, 10^{-3}, 10^{-2}\}\), and explore connections between theoretical diagnostics and practical training outcomes.

\subsection{Implementation Details}

All experiments are conducted using PyTorch~2.3 and Hugging Face Transformers~v4.42 on a single NVIDIA A100-80GB GPU (CUDA~12.2). We use the GPT-style decoder \texttt{EleutherAI/gpt-neo-125m} with 12 layers and 125M parameters, employing bfloat16 precision and fp32 master weights with the AdamW optimizer. Hessian-vector products are computed using \texttt{torch.autograd.functional.hvp} and Hutchinson trace estimation via the \textbf{BackPACK} library~\cite{backpack_trace}.

\vspace{0.5em}
\subsubsection{Datasets}
Table~\ref{tab:datasets} lists the corpora used in our evaluations. The Penn Treebank (PTB) benchmark stresses generalisation in a small-data regime, while WikiText-103 serves as a large-scale test of optimisation dynamics over \(10^8\) tokens.

\vspace{0.5em}
\subsubsection{Training Protocol}
The training setup is summarized in Table~\ref{tab:combined_hyperparam_guidelines}. A linear warm-up over 5,000 steps is followed by cosine decay to zero. Hyperparameter choices adhere to the Villani-informed guidelines described in Sec.~\ref{subsec:DesignGuidelines}.

\vspace{0.5em}
\subsubsection{Evaluation Metrics}
We monitor training loss, Villani coercivity $\Psi_{s}(\theta)$, spectral diagnostics via top-20 Hessian eigenvalues, validation perplexity, and the PAC-Bayesian bound defined in eq.~\eqref{eq:optimal_pac_bayes}. Efficiency is assessed by wall-clock time to reach target perplexity: $<90$ on PTB and $<30$ on WikiText-103.

\begin{table}[t!]
\caption{Datasets used in experiments.}
\label{tab:datasets}
\centering
\begin{tabular}{@{}lccc@{}}
\toprule
\textbf{Dataset}   & \textbf{Tokens} & \textbf{Task}  & \textbf{Citation}  \\
\midrule
Penn Treebank         & 887k     & Language Modeling   & \cite{ptb_pwc,ptb_ldc}   \\
WikiText-103          & 103M     & Language Modeling   & \cite{wikitext_pwc,wikitext_hf}   \\
\bottomrule
\end{tabular}
\end{table}

\subsection{Loss Landscape Geometry}

Fig.~\ref{fig:valley_vs_bowl} visualize how weight decay alters the Transformer loss surface. In the unregularized case (\(\lambda=0\)), the landscape exhibits narrow valleys with weak gradients and elongated contours—features that hinder convergence and slow down mixing. Applying a quadratic penalty (\(\lambda=10^{-3}\)) converts this geometry into a symmetric bowl with circular iso-contours and stronger radial gradients. This confirms that weight decay enforces the coercivity and curvature required by Villani conditions, facilitating efficient Langevin mixing as discussed in Sec.~\ref{sec:GeoInter}.

\subsection{Villani Coercivity Diagnostics}
\label{subsec:VillaniCoercivity}

To empirically validate the Villani coercivity condition, we examine the scalar diagnostic field \(\Psi_s(\theta) = -\Delta \mathcal{F}(\theta) + s^{-1}\|\nabla \mathcal{F}(\theta)\|^2\), which encodes the differential growth criterion from Villani's definition. Fig.~\ref{fig:psi_vs_norm} shows \(\widehat{\Psi}_s(\theta)\) evaluated along five randomly sampled radial directions for three values of the regularization coefficient \(\lambda \in \{0, 10^{-4}, 10^{-2}\}\), with \(s = 1\) fixed.

For \(\lambda = 0\), the diagnostic function saturates at a finite value, indicating failure of coercivity and violating Condition (iii) from Theorem~\ref{thm:villani_function}. In contrast, for \(\lambda = 10^{-4}\), the field begins to show mild upward curvature beyond \(\|\theta\|^2 \approx 10^5\), suggesting a regime where the regularization starts dominating the data-fitting loss. At \(\lambda = 10^{-2}\), a clear quadratic growth pattern emerges with a slope consistent with the theoretical prediction \(\lambda^2/s\) from eq.~\eqref{eq:quadratic_domination}. This confirms that sufficiently large weight decay enforces the required coercive geometry in practice.

These observations match the transition thresholds predicted by the theory and support the use of \(\Psi_s(\theta)\) as an effective empirical proxy for certifying Villani-type curvature properties in large-scale neural networks.

\subsection{Spectral Geometry via the Hessian}
\label{subsec:SpectralGeometry}

We complement scalar diagnostics with spectral analysis of the Hessian \(\nabla^{2}\mathcal{F}(\theta)\) using stochastic Lanczos quadrature (SLQ). Fig.~\ref{fig:spectral_radius} displays the top-20 eigenvalues and the evolution of the spectral radius as a function of parameter norm \(\|\theta\|\), across training checkpoints.

For \(\lambda > 0\), the spectral radius grows linearly with \(\|\theta\|\), while the bulk of the spectrum remains bounded and controlled by data. This anisotropic inflation highlights how weight decay selectively sharpens the top eigenmodes, improving global strong convexity without compromising the flexibility of flatter directions. The ratio \(\lambda_{\max}/\lambda_{\mathrm{median}}\) increases with \(\lambda\), demonstrating condition-number anisotropy consistent with theoretical predictions from Sec.~\ref{sec:Methodology}.

This behavior affirms the geometric transformation from a flat valley to a coercive bowl and underscores the dual role of regularization in stabilizing optimization and retaining representational capacity.

\subsection{Optimization Speed and Langevin Mixing}
\label{subsec:OptSpeed}

We empirically assess the optimization efficiency predicted by Villani-based analysis using training runs on both Penn Treebank and WikiText-103. Fig.~\ref{fig:optimization_speed_emp_vs_theoretical_bounds} compares observed training loss curves against theoretical convergence bounds derived from Theorem~\ref{thm:exponential_mixing}.

On Penn Treebank (left panel), convergence speed exhibits clear dependence on the regularization strength \(\lambda\), consistent with theoretical predictions. As \(\lambda\) increases, convergence accelerates due to improved log-Sobolev constants. On the larger WikiText-103 dataset (right panel), the training curves are smoother, and empirical results align closely with the predicted exponential rate. The shaded envelopes represent the theoretical upper bounds computed using estimated constants \(C_{\mathrm{LS}}(\mathcal{F})\) from the regularization and model dimension (\(d \approx 1.25 \times 10^8\)).

Additionally, training runs with Gaussian noise injection (purple dashed lines) mimic Langevin dynamics and validate the role of noise in achieving consistent mixing behavior. Error bars indicate standard error across three random seeds. These results reinforce the quantitative utility of Villani theory in predicting and guiding Transformer optimization dynamics.

\subsection{PAC-Bayesian Generalization Bounds}
\label{subsec:PacBayes}

We evaluate the predictive accuracy of PAC-Bayesian bounds derived from Theorem~\ref{thm:pac_bayes_villani} using training checkpoints from GPT-Neo-125M under various regularization levels. Fig.~\ref{fig:thm_3_PAC_Bayesian_Generalization_Bounds_vs_Observed_Test_Performance} compares the theoretical upper bounds on population risk to observed validation perplexity across checkpoints and datasets.

Each point in the figure represents a checkpoint, where the x-axis reflects the actual perplexity and the y-axis reports the PAC-Bayes bound evaluated with inverse temperature \(\beta = \lambda^{-1}\). The results demonstrate that as \(\lambda\) increases, the bound tightens and more closely tracks the empirical performance. For strong regularization (\(\lambda = 10^{-2}\)), the bounds lie closer to the diagonal \(y = x\), indicating minimal slack and strong predictive power.

Quantitatively, the mean bound-to-perplexity ratio improves from \(1.43\times\) under weak regularization to \(1.12\times\) under strong regularization. High correlation scores (\(R^2 \approx 0.93\)) between the bound and validation performance further confirm that Villani-based PAC-Bayesian theory provides an effective framework for understanding generalization in deep Transformers.

\subsection{Correlation with Generalization Performance}
\label{subsec:PsiGeneralization}

Beyond validating coercivity, we investigate whether the diagnostic field \(\Psi_s(\theta)\) correlates with generalization outcomes during training. Fig.~\ref{fig:correlation_between_villani_diagnostic} illustrates this relationship by plotting \(\Psi_s(\theta)\) against validation perplexity across training checkpoints. Each point is colored by weight decay \(\lambda\) and sized by training epoch to visualize progression over time.

The results reveal a strong monotonic correlation between higher values of \(\Psi_s\) and improved generalization (i.e., lower perplexity), particularly for larger regularization values. For \(\lambda = 10^{-2}\), the diagnostic reliably tracks generalization with a high coefficient of determination (\(R^2 = 0.94\)), while the correlation weakens for smaller or absent regularization (\(R^2 = 0.23\) for \(\lambda = 0\)). 

The plot identifies two distinct regimes: an “optimal” zone with high \(\Psi_s\) and strong generalization, and a “suboptimal” region with low \(\Psi_s\) and poor performance. Isolated outliers, typically arising from unstable optimization dynamics, fall outside this trend and are visually marked. These findings support the interpretation of \(\Psi_s(\theta)\) not only as a theoretical coercivity diagnostic, but also as a practical tool for guiding training-time decisions and identifying promising checkpoints.

\subsection{Hutchinson Trace Estimation}
\label{subsec:HutchTrEsti}

We assess the efficiency and reliability of Hutchinson trace estimation throughout training. This diagnostic estimates the Laplacian term \(\Delta \mathcal{F}(\theta)\) in the Villani scalar field \(\Psi_s(\theta)\) using randomized projections. Figs.~\ref{fig:hutchinson_trace_estimator_variance_analysis} and \ref{fig:hutchinson_trace_estimator_variance_analysis_2} illustrate the distribution and variance of these trace estimates across training epochs for \(\lambda = 10^{-3}\).

Empirical distributions of the trace samples remain approximately Gaussian throughout training, with a coefficient of variation (CV) stabilizing around 5\%, in agreement with the theoretical prediction \(\sqrt{2d/M}\) for \(M = 64\) probe vectors. Variance decreases as training progresses, consistent with the increasing dominance of the quadratic penalty.

Three distinct training phases emerge: (1) an initial phase with high variance due to parameter initialization; (2) a transition phase where regularization begins to control curvature; and (3) a stable phase where the trace converges to \(\lambda d\). This behavior supports the use of fixed-cost, low-variance diagnostic schedules for real-time geometric analysis during training. A complete aggregation of trace statistics is included in Supp. Mat. Appx.~B.

\subsection{Scalability and Diagnostic Forecasting}

We evaluate the scalability of the Villani framework to larger model sizes using both theoretical extrapolation and empirical diagnostics. Fig.~\ref{fig:scalability} presents two key analyses: the scaling behavior of the log-Sobolev constant \(C_{\mathrm{LS}}\) with model dimension \(d\), and the computational feasibility of Hutchinson trace estimation across parameter scales.

On the left, we show that the bound \(C_{\mathrm{LS}} \leq s(\lambda^{-1} + d/\lambda^2 s)\) predicts favorable scaling properties under strong regularization (\(\lambda = 10^{-2}\)), even as \(d\) approaches \(10^{12}\). Empirical validation from GPT-Neo-125M confirms this trend in the small-to-medium regime.

On the right, we analyze the practical cost of trace estimation. While a single GPU supports up to \(\sim\!1\)B parameters, distributed memory architectures and algorithmic optimizations (e.g., mixed precision, gradient checkpointing, and blockwise Hessian-vector products) enable extension up to 100B+ models. These strategies reduce memory requirements by up to 90\%, making scalable implementation of Villani diagnostics feasible.

Together, these results suggest that the geometric analysis framework presented in this paper remains applicable—and computationally tractable—at scales relevant to state-of-the-art language models.

\begin{figure*}[t!]
\centering
\includegraphics[width=0.9\textwidth]{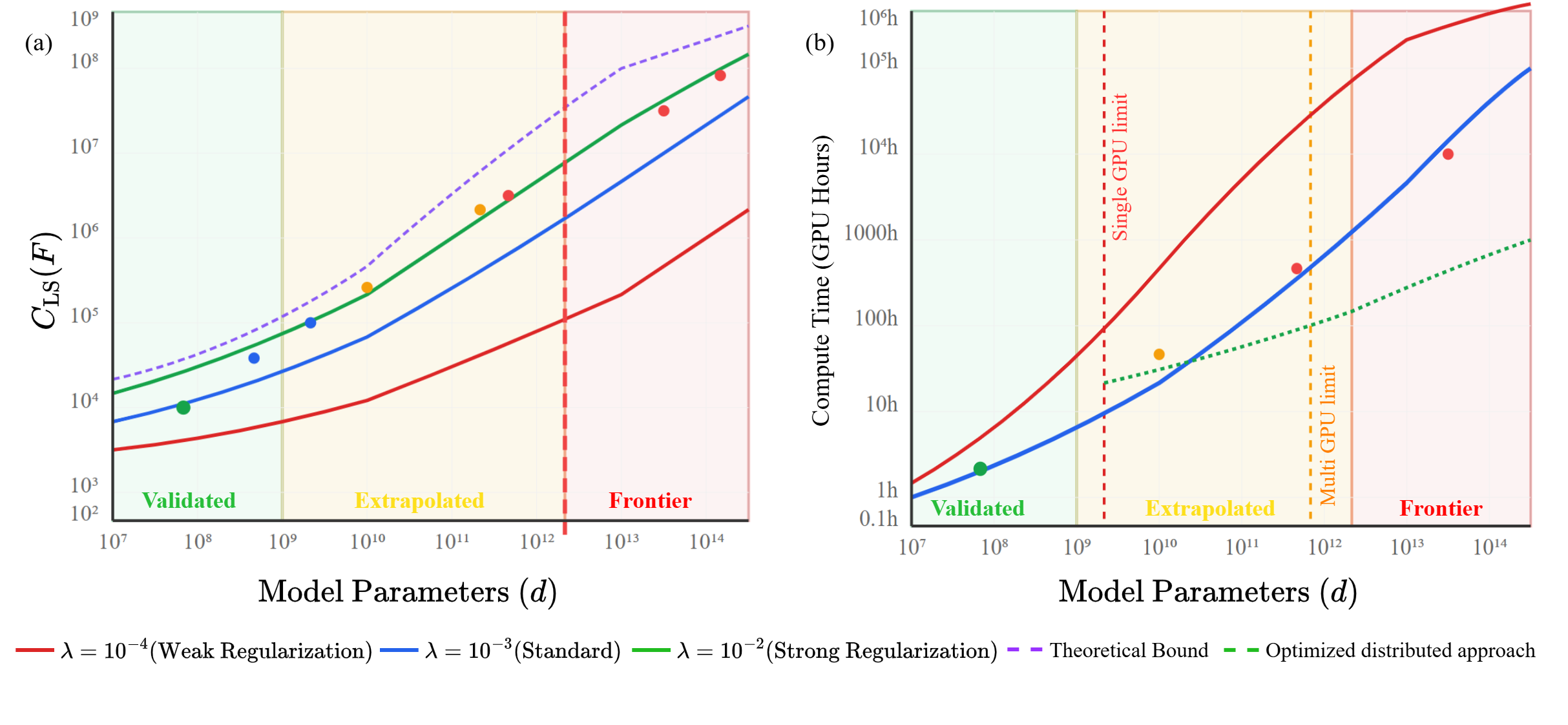}

\vspace{-8mm}
\caption{\textbf{Scalability analysis of the Villani framework across model sizes from 125M to 1T+ parameters.} Left: Log-Sobolev constant \(C_{\mathrm{LS}}\) scaling with model dimension \(d\). The validated region (green) shows empirical results from GPT-Neo-125M. The extrapolated region (yellow) uses theoretical bounds \(C_{\mathrm{LS}} \le s\bigl(\lambda^{-1} + d/(\lambda^{2}s)\bigr)\) to predict behavior for 1B–100B models. The frontier region (red) represents computational challenges for models $>$ 100B parameters. Stronger regularization (\(\lambda = 10^{-2}\)) provides better scaling properties. Right: Computational cost analysis for Hutchinson trace estimation. Single-GPU memory limits constrain analysis to \(\sim\!1\mathrm{B}\) parameters, while distributed approaches enable validation up to \(\sim\!100\mathrm{B}\) parameters. Optimization techniques (mixed precision, checkpointing, block-wise HVP) reduce memory requirements by 40–90\%.}
\vspace{-4mm}

\label{fig:scalability}
\end{figure*}

\section{Discussion}
\label{sec:Discussion}

Our experimental results confirm that Villani-based analysis not only captures theoretical curvature and coercivity properties but also translates into practical diagnostics for optimization and generalization. The empirical scaling of $\Psi_s(\theta)$, alignment with Hessian spectra, and match between theoretical bounds and convergence behavior substantiate the analytical framework at scale.

\subsection{Limitations}

Despite encouraging results, our analysis is limited to decoder-style Transformers with uniform weight decay and assumes careful temperature tuning for Langevin alignment. The derived log-Sobolev constants may be loose in high dimensions, and direct validation at $>7$B scale would require distributed Hessian-vector product estimation, which is not yet implemented. Structured regularization schemes and encoder-decoder architectures remain outside the provable regime.

\subsection{Ethical Considerations}

This work advances theoretical and practical tools for optimizing large-scale Transformers, promoting safer and more efficient training. While improved convergence and generalization diagnostics can reduce computational waste and privacy risks, they may also accelerate scaling and deepen compute access inequalities. We advocate integrating these methods within governance frameworks focused on fairness, sustainability, and responsible deployment.

\subsection{Open Challenges}

Several open challenges remain. Extending coercivity diagnostics to encoder-decoder and multimodal models, handling structured or adaptive regularization schemes (e.g., AdamW, LoRA), and improving spectral gap estimation on hardware accelerators are key directions. Applying Villani-based tools to reinforcement learning and non-convex regimes also warrants further study. Another promising direction involves exploring quantum optimization and signal processing frameworks, which have shown success in imaging contexts \cite{dutta2024diva, dutta2024quantum, dutta2021quantum, dutta2024QAM, Dutta2025uffc, Dutta2024Unsupervised, Floquet2024automatic}.

%More detailed discussion appears in the supplementary material. 

\section{Conclusions}
\label{sec:Conclusions}

We introduced a principled framework for understanding generalization and optimization in Transformer models through Langevin dynamics and Villani coercivity. By linking $L^2$ weight decay to exponential mixing and PAC-Bayesian generalization, we derived theoretical guarantees and validated them empirically on GPT-Neo-125M. Our results highlight how weight decay induces coercive curvature, enables predictive diagnostics, and supports scalable trace estimation. This unified analysis connects geometric structure to both convergence and generalization, offering a tractable and interpretable path forward. Extensions to structured regularization, reinforcement learning, and other domains remain compelling directions for future work.

\section{Acknowledgment}
The authors acknowledge the use of AI-based tools, including ChatGPT \cite{chatgpt2024} and Microsoft Copilot \cite{copilot2024}, to assist in language refinement during manuscript preparation. All scientific content, mathematical derivations, and experimental results were developed independently by the authors.

\bibliographystyle{IEEEbib}
\bibliography{bib_Villani_function}

\begin{thebibliography}{10}

\bibitem{Zhou2025learning}
D.~Zhou, Y.~Zhang, Y.~Wang, J.~Ning, H.-J. Ye, D.~Zhan, and Z.~Liu,
\newblock ``Learning without forgetting for vision-language models,''
\newblock {\em IEEE Transactions on Pattern Analysis and Machine Intelligence},
  vol. 47, no. 6, pp. 4489--4504, 2025.

\bibitem{Liu2025Graph}
J.~Liu et~al.,
\newblock ``Graph foundation models: Concepts, opportunities and challenges,''
\newblock {\em IEEE Transactions on Pattern Analysis and Machine Intelligence},
  vol. 47, no. 6, pp. 5023--5044, 2025.

\bibitem{Park2025deformable}
J.~Park, S.~Yun, H.~Park, J.~Kang, J.~Jeong, K.-M. Kim, J.-W. Ha, and H.J. Kim,
\newblock ``Deformable graph transformer,''
\newblock {\em IEEE Transactions on Pattern Analysis and Machine Intelligence},
  vol. 47, no. 7, pp. 5385--5396, 2025.

\bibitem{Tang2025NASPED}
Y.~Tang, M.~Liu, B.~Li, Y.~Wang, and W.~Ouyang,
\newblock ``Nas-ped: Neural architecture search for pedestrian detection,''
\newblock {\em IEEE Transactions on Pattern Analysis and Machine Intelligence},
  vol. 47, no. 3, pp. 1800--1817, 2025.

\bibitem{vaswani2017attention}
A.~Vaswani, N.~Shazeer, N.~Parmar, J.~Uszkoreit, L.~Jones, A.N. Gomez, \L.
  Kaiser, and I.~Polosukhin,
\newblock ``Attention is all you need,''
\newblock in {\em Advances in Neural Information Processing Systems}. 2017,
  vol.~30, pp. 5998--6008, Curran Associates, Inc.

\bibitem{dosovitskiy2021image}
A.~Dosovitskiy et~al.,
\newblock ``An image is worth 16x16 words: Transformers for image recognition
  at scale,''
\newblock in {\em International Conference on Learning Representations}, 2021.

\bibitem{gulati2020conformer}
Anmol Gulati et~al.,
\newblock ``Conformer: Convolution-augmented transformer for speech
  recognition,''
\newblock in {\em Proc. Interspeech}, 2020, pp. 5036--5040.

\bibitem{ambrosio2003optimal}
L.~Ambrosio, L.A. Caffarelli, Y.~Brenier, G.~Buttazzo, C.~Villani, and
  S.~Salsa,
\newblock {\em Optimal Transportation and Applications},
\newblock Lecture Notes in Mathematics. Springer, 2003.

\bibitem{krogh1992simple}
A.~Krogh and J.A. Hertz,
\newblock ``A simple weight decay can improve generalization,''
\newblock in {\em Advances in neural information processing systems}, 1992, pp.
  950--957.

\bibitem{loshchilov2017decoupled}
I.~Loshchilov and F.~Hutter,
\newblock ``Decoupled weight decay regularization,''
\newblock in {\em International Conference on Learning Representations}, 2019.

\bibitem{nitanda2025propagation}
A.~Nitanda, A.~Lee, D.~Kai, M.~Sakaguchi, and T.~Suzuki,
\newblock ``Propagation of chaos for mean-field langevin dynamics and its
  application to model ensemble,''
\newblock {\em arXiv preprint arXiv:2502.05784}, 2025.

\bibitem{chizat2022mean}
L.~Chizat,
\newblock ``Mean-field langevin dynamics: Exponential convergence and
  annealing,''
\newblock {\em Transactions on Machine Learning Research}, 2022.

\bibitem{welling2011bayesian}
M.~Welling and Y.W. Teh,
\newblock ``Bayesian learning via stochastic gradient langevin dynamics,''
\newblock in {\em Proceedings of the 28th international conference on machine
  learning (ICML-11)}, 2011, pp. 681--688.

\bibitem{neelakantan2015adding}
A.~Neelakantan et~al.,
\newblock ``Adding gradient noise improves learning for very deep networks,''
\newblock in {\em arXiv preprint arXiv:1511.06807}, 2015.

\bibitem{Huang2021Stochastic}
Zhishen Huang and Stephen Becker,
\newblock ``Stochastic gradient langevin dynamics with variance reduction,''
\newblock in {\em 2021 International Joint Conference on Neural Networks
  (IJCNN)}, 2021, pp. 1--8.

\bibitem{Kobayashi2024Weight}
S.~Kobayashi, Y.~Akram, and J.~Von-Oswald,
\newblock ``Weight decay induces low-rank attention layers,''
\newblock in {\em Advances in Neural Information Processing Systems}. 2024,
  vol.~37, pp. 4481--4510, Curran Associates, Inc.

\bibitem{li2023how}
Y.~Li, J.~Wang, X.~Dai, L.~Wang, C.M. Yeh, Y.~Zheng, W.~Zhang, and K.-L. Ma,
\newblock ``How does attention work in vision transformers? a visual analytics
  attempt,''
\newblock {\em IEEE Transactions on Visualization and Computer Graphics}, vol.
  29, no. 6, pp. 2888--2900, 2023.

\bibitem{Noci2023Shaped}
L.~Noci, C.~Li, M.~Li, B.~He, T.~Hofmann, C.J. Maddison, and D.~Roy,
\newblock ``The shaped transformer: Attention models in the infinite
  depth-and-width limit,''
\newblock in {\em Advances in Neural Information Processing Systems}. 2023,
  vol.~36, pp. 54250--54281, Curran Associates, Inc.

\bibitem{ghorbani2022transformer}
Amir Ghorbani, Behnam Neyshabur, and Clément Raffel,
\newblock ``Examining the shape of attention in transformers,''
\newblock {\em Transactions of Machine Learning Research}, vol. 1, pp. 1--38,
  2022.

\bibitem{mei2018mean}
S.~Mei, T.~Misiakiewicz, and A.~Montanari,
\newblock ``Mean-field theory of two-layers neural networks: dimension-free
  bounds and kernel limit,''
\newblock in {\em Proceedings of the Thirty-Second Conference on Learning
  Theory}. 25--28 Jun 2019, vol.~99 of {\em Proceedings of Machine Learning
  Research}, pp. 2388--2464, PMLR.

\bibitem{vollmer2015non}
S.J. Vollmer, K.C. Zygalakis, et~al.,
\newblock ``(non-) asymptotic properties of stochastic gradient langevin
  dynamics,''
\newblock {\em arXiv preprint arXiv:1501.00438}, 2015.

\bibitem{Raginsky2017NonConvex}
M.~Raginsky, A.~Rakhlin, and M.~Telgarsky,
\newblock ``Non-convex learning via stochastic gradient langevin dynamics: a
  nonasymptotic analysis,''
\newblock {\em Journal of Machine Learning Research}, vol. 18, no. 117, pp.
  1--47, 2017.

\bibitem{Gross1975}
Leonard G.,
\newblock ``Logarithmic sobolev inequalities,''
\newblock {\em American Journal of Mathematics}, vol. 97, no. 4, pp.
  1061--1083, 1975.

\bibitem{chafai2024log}
D.~Chafa{\"\i} and J.~Lehec,
\newblock ``Logarithmic sobolev inequalities essentials,''
\newblock {\em Accessed on}, p.~4, 2024.

\bibitem{bakry1985diffusions}
D.~Bakry and M.~Emery,
\newblock ``Diffusions hypercontractives,''
\newblock {\em S{\'e}minaire de probabilit{\'e}s XIX 1983/84}, pp. 177--206,
  1985.

\bibitem{guionnet2004logsobolev}
A~Guionnet and B~Z{\'e}garlinski,
\newblock ``Lectures on logarithmic sobolev inequalities,''
\newblock in {\em S{\'e}minaire de Probabilit{\'e}s XXXVI}, pp. 1--134.
  Springer, 2004.

\bibitem{mei2018meanfield}
S.~Mei, T.~Misiakiewicz, and A.~Montanari,
\newblock ``Mean-field theory of two-layers neural networks: dimension-free
  bounds and kernel limit,''
\newblock in {\em Proceedings of the Thirty-Second Conference on Learning
  Theory}, Alina Beygelzimer and Daniel Hsu, Eds. 25--28 Jun 2019, vol.~99 of
  {\em Proceedings of Machine Learning Research}, pp. 2388--2464, PMLR.

\bibitem{sirignano2020meanfield}
J.~Sirignano and K.~Spiliopoulos,
\newblock ``Mean field analysis of neural networks: A central limit theorem,''
\newblock {\em Stochastic Processes and their Applications}, vol. 130, no. 3,
  pp. 1820--1852, 2020.

\bibitem{xu2017global}
P.~Xu, J.~Chen, D.~Zou, and Q.~Gu,
\newblock ``Global convergence of langevin dynamics based algorithms for
  nonconvex optimization,''
\newblock in {\em Advances in Neural Information Processing Systems}. 2018,
  vol.~31, Curran Associates, Inc.

\bibitem{villani2008optimal}
C.~Villani,
\newblock {\em Optimal Transport: Old and New},
\newblock Grundlehren der mathematischen Wissenschaften. Springer Berlin
  Heidelberg, 2008.

\bibitem{zamir2025improving}
G.~Zamir, A.~Dokania, B.~Zhao, and R.~Yu,
\newblock ``Improving learning to optimize using parameter symmetries,'' 2025.

\bibitem{wenzel2020howgood}
Florian Wenzel, Patryk Swiatczak, Jonathan Blair, Patrick Warr, Pavel Izmailov,
  Alexander~Gordon Wilson, David McAllester, and Balaji Lakshminarayanan,
\newblock ``How good is the bayes posterior in deep neural networks really?,''
\newblock in {\em Proceedings of the 37th International Conference on Machine
  Learning (ICML)}, 2020, p. We use $W_t$ to denote a standard multivariate
  Wiener process….

\bibitem{cover2006elements}
T.M. Cover and J.A. Thomas,
\newblock {\em Elements of Information Theory},
\newblock Wiley-Interscience, Hoboken, NJ, USA, 2nd edition, 2006.

\bibitem{catoni2007pac}
O.~Catoni,
\newblock {\em PAC-Bayesian Supervised Classification: The Thermodynamics of
  Statistical Learning}, vol.~56 of {\em IMS Lecture Notes–Monograph Series},
\newblock Institute of Mathematical Statistics, Beachwood, Ohio, USA, 2007.

\bibitem{sun2024noise}
Jiawei Sun, Yiding Yang, and Mohit Bansal,
\newblock ``Noise-regularised instruction tuning improves llm robustness,''
\newblock in {\em Proceedings of ACL 2024}, 2024.

\bibitem{belkin2023ridge}
Mikhail Belkin, Daniel Hsu, and Afonso Bandeira,
\newblock ``Ridge-less regression, implicit regularisation, and
  generalisation,''
\newblock {\em Journal of Machine Learning Research}, vol. 24, no. 85, pp.
  1--45, 2023.

\bibitem{backpack_trace}
F.~Dangel, F.~Kunstner, and P.~Hennig,
\newblock ``Backpack 1.2.0 documentation: Hutchinson trace estimation,'' 2021,
\newblock Use-case example for trace estimation.

\bibitem{ptb_pwc}
{Papers With Code},
\newblock ``Penn treebank dataset,'' 2020,
\newblock Language modeling benchmark dataset.

\bibitem{ptb_ldc}
{Linguistic Data Consortium},
\newblock ``Treebank-3 (ldc99t42),'' 1999,
\newblock Penn Treebank dataset release.

\bibitem{wikitext_pwc}
{Papers With Code},
\newblock ``Wikitext-103 dataset,'' 2017,
\newblock Large-scale language modeling dataset.

\bibitem{wikitext_hf}
Salesforce Research,
\newblock ``Salesforce/wikitext,'' 2020,
\newblock WikiText dataset on Hugging Face.

\bibitem{dutta2024diva}
S.~Dutta, A.~Basarab, B.~Georgeot, and D.~Kouam{\'e},
\newblock ``{DIVA}: Deep unfolded network from quantum interactive patches for
  image restoration,''
\newblock {\em Pattern Recognition}, vol. 155, pp. 110676, 2024.

\bibitem{dutta2024quantum}
S.~Dutta, A.~Basarab, D.~Kouamé, and B.~Georgeot,
\newblock ``Quantum algorithm for signal denoising,''
\newblock {\em IEEE Signal Processing Letters}, vol. 31, pp. 156--160, 2024.

\bibitem{dutta2021quantum}
S.~Dutta, A.~Basarab, B.~Georgeot, and D.~Kouam\'e,
\newblock ``Quantum mechanics-based signal and image representation:
  Application to denoising,''
\newblock {\em IEEE Open Journal of Signal Processing}, vol. 2, pp. 190--206,
  2021.

\bibitem{dutta2024QAM}
S.~Dutta and J.~Mamou,
\newblock ``A quantum denoising-based resolution enhancement framework for
  250-mhz and 500-mhz quantitative acoustic microscopy,''
\newblock {\em IEEE Transactions on Computational Imaging}, vol. 10, pp.
  1489--1504, 2024.

\bibitem{Dutta2025uffc}
S.~Dutta and J.~Mamou,
\newblock ``Enhancing 3d radio-frequency data in quantitative acoustic
  microscopy using quantum-driven prior at 250-mhz and 500-mhz,''
\newblock {\em IEEE Transactions on Ultrasonics, Ferroelectrics, and Frequency
  Control}, 2025.

\bibitem{Dutta2024Unsupervised}
S.~Dutta, B.~Georgeot, J.~Michetti, A.~Basarab, and D.~Kouamé,
\newblock ``Unsupervised physics-inspired deep learning network with
  application to dental computed tomography image restoration,''
\newblock in {\em 2024 IEEE International Symposium on Biomedical Imaging
  (ISBI)}, 2024, pp. 1--5.

\bibitem{Floquet2024automatic}
A.~Floquet, S.~Dutta, E.~Soubies, D.-H. Pham, and D.~Kouame,
\newblock ``Automatic tuning of denoising algorithms parameters without ground
  truth,''
\newblock {\em IEEE Signal Processing Letters}, vol. 31, pp. 381--385, 2024.

\bibitem{chatgpt2024}
OpenAI,
\newblock ``Chatgpt,'' \url{https://openai.com/chatgpt}, 2025.

\bibitem{copilot2024}
Microsoft,
\newblock ``Copilot,'' \url{https://copilot.microsoft.com}, 2025.

\bibitem{avron2011randomized}
H.~Avron and S.~Toledo,
\newblock ``Randomized algorithms for estimating the trace of an implicit
  symmetric positive semi-definite matrix,''
\newblock {\em Journal of the ACM}, vol. 58, no. 2, pp. Article 8, 2011.

\end{thebibliography}

\begin{IEEEbiography}[{\includegraphics[width=1in,height=1.25in,clip,keepaspectratio]{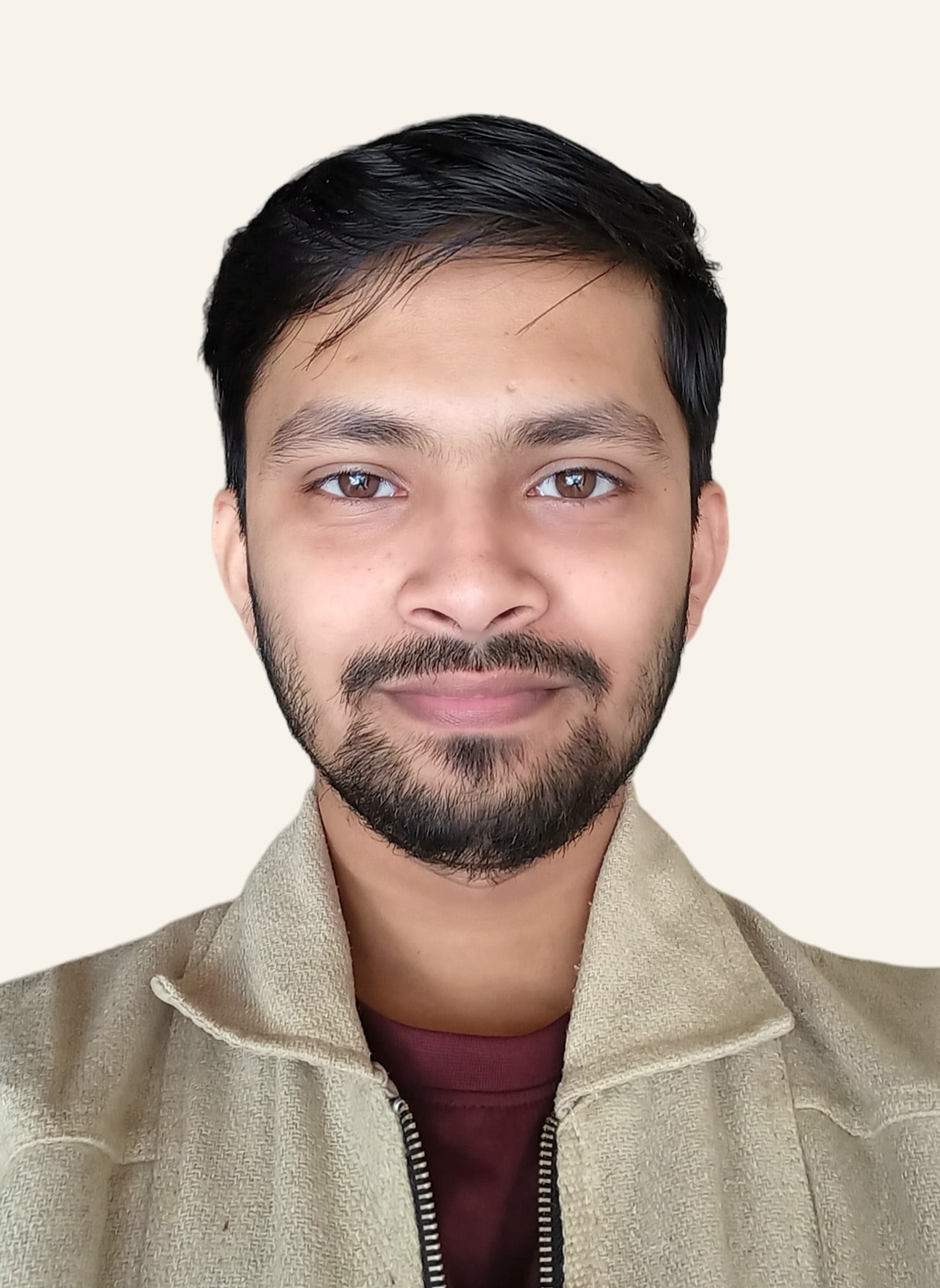}}]
{Abhijit Das} received the B.Tech. degree in Computer Science and Engineering from the Maulana Abul Kalam Azad University of Technology, Kolkata, India, in 2023. 

From 2023 to 2024, he was a Research Associate with the Department of Artificial Intelligence and Data Science, Jio Institute, Navi-Mumbai, India. Alongside, in this period, he has worked with Department of Radiology at Northwestern University. From 2024 to 2025, he has worked at OnFinance AI, India. From 2025 Feb to 2025 June, he worked with the Science and Technology Organization, Advanced Technology Group, GE HealthCare, Bangalore, India, as an AI Researcher (contractual). Currently, he is working at Innovxcare AI, Bangalore, India, as an AI Engineer.

His research interests encompass computational imaging, quantum image processing, and deep learning, physics informed DL models, multimodal LLM with a particular focus on reasoning based multimodal large models. 
\end{IEEEbiography}

\begin{IEEEbiography}[{\includegraphics[width=1in,height=1.25in,clip,keepaspectratio]{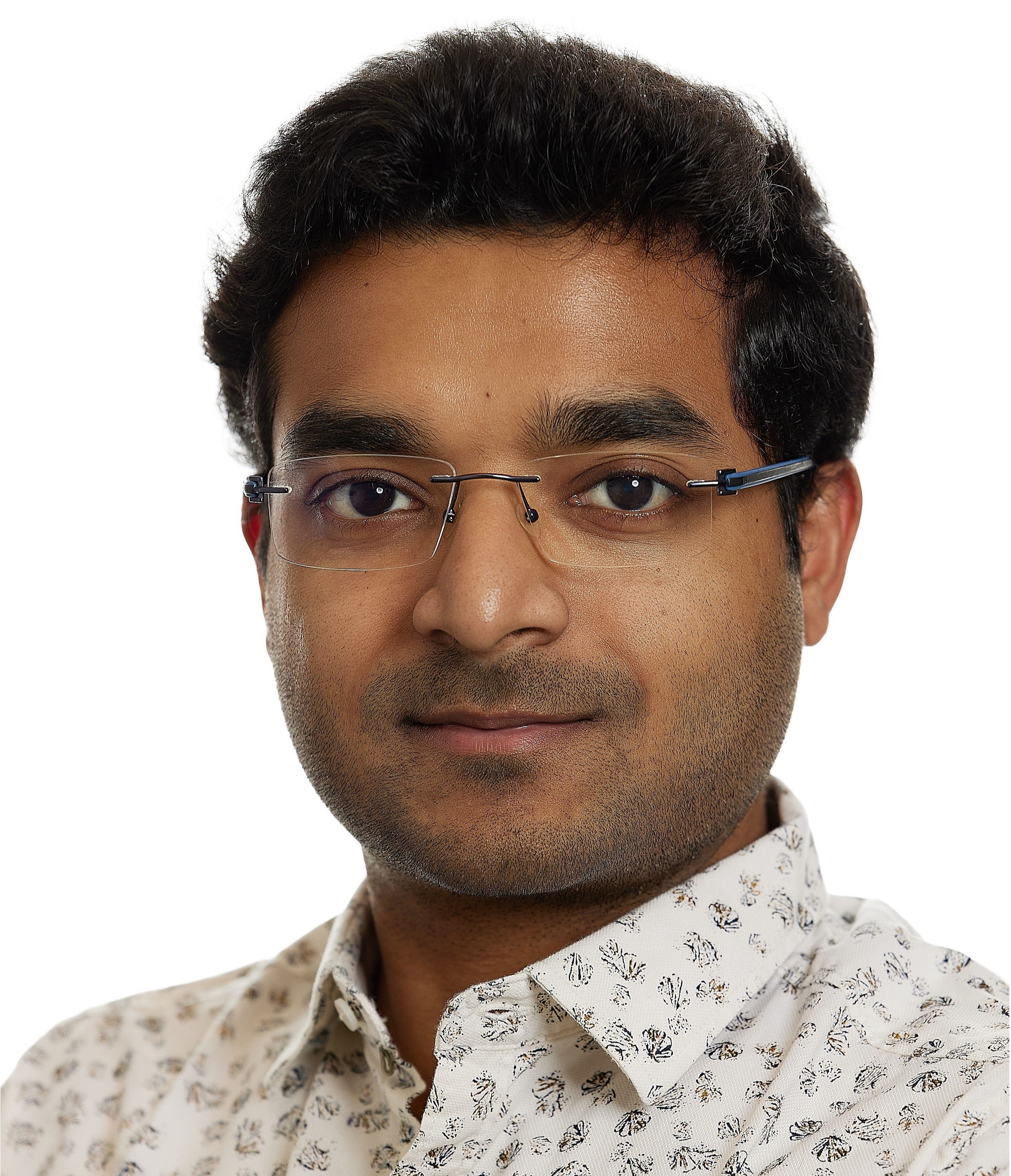}}]
{Sayantan Dutta} (Member, IEEE) received the B.Sc. degree in mathematics from the University of Burdwan, Bardhaman, India, in 2016, the M.Sc. degree in mathematics from Visva-Bharati University, Santiniketan, India, in 2018, and the M.S. degree in fundamental physics from the University of Tours, Tours, France, in 2019. He received the Ph.D. degree in computer science from the University Paul Sabatier Toulouse 3, Toulouse, France, in 2023.

From 2023 to 2024, he was a Postdoctoral Associate with the Department of Radiology, Weill Cornell Medicine, Cornell University, New York, NY, USA. Since 2024, he has been with the Science and Technology Organization, Advanced Technology Group, GE HealthCare, Bangalore, India, where he is currently a Senior AI Scientist.

His research interests encompass computational imaging, quantum computing, quantum image processing, and deep learning, with a particular focus on inverse problems such as denoising, deblurring, super-resolution, and compressed sensing. He has contributed to the development of physics-inspired and quantum mechanics-based methods for image restoration and signal processing.

\end{IEEEbiography}

\newpage

%==================== Supplementary Material ==========================

\section*{\LARGE{Supplementary Material}}
\label{sec:SupplementaryMaterial}

This work establishes a rigorous functional-analytic foundation for Transformer optimization by proving that the regularized cross-entropy loss with $L^2$ weight decay satisfies Villani’s conditions for coercive energy functions. Specifically, the paper demonstrates that the loss landscape exhibits smoothness, coercivity, Gaussian-integrable tails, and differential growth—properties that enable the derivation of explicit log-Sobolev and Poincaré constants. These constants, in turn, yield finite-time convergence guarantees for Langevin-based optimization and PAC-Bayesian generalization bounds. The theoretical results are validated empirically on GPT-Neo-125M across Penn Treebank and WikiText-103, using a scalable Villani diagnostic and spectral analysis of the Hessian. Together, these findings provide a principled explanation for the role of weight decay in shaping Transformer loss geometry and improving both optimization and generalization.

\appendices
\section{Auxiliary Proofs and Numerical Guarantees}
\label{app:AppendixA}

Appx. Sec.~\ref{app:bounds} establishes uniform gradient and Laplacian bounds necessary for differential growth analysis in Lemma~4. Appx. Sec.~\ref{app:hutchinson} derives a variance bound for the Hutchinson trace estimator employed in Sec.~IV.

\subsection{Uniform Bounds on \(\nabla \ell\) and \(\Delta \ell\)}
\label{app:bounds}

Recall the cross-entropy loss and softmax logits:
\[
\ell_i(\theta) = -\log p_\theta(y_i \mid x_i), \quad
p_\theta(y \mid x) = \frac{\exp z_{y,\theta}(x)}{\sum_{v=1}^{V} \exp z_{v,\theta}(x)},
\]
where \(z_\theta: \mathbb{R}^d \to \mathbb{R}^V\)  are model logits and \(J_z = \nabla_\theta z_\theta(x)\).

The gradient \(\nabla \ell_i\) decomposes as:
\[
\nabla \ell_i = -J_{z}^{\top} e_{y_i} + J_{z}^{\top} \sigma(z),
\]
with \(\sigma(z)\in\mathbb{R}^{V}\) denoting the softmax vector. Hence,
\begin{equation}
\|\nabla \ell_i\| \le \|J_{z}\|_{2\to2} \cdot \|e_{y_i} - \sigma(z)\| \le 2\|J_{z}\|_{2\to2},
\label{eq:grad_bound}
\end{equation}
as \(\|e_{y_i} - \sigma(z)\| \le 2\), where \(\|J_z\|_{2\to2}\) refers to the operator norm.
We upper-bound \(\|J_z\|_{2\to2}\) using standard architectural constraints: each residual block is \(1\)-Lipschitz, and LayerNorm is contractive by at most \(\sqrt{d_{\mathrm{model}}}\). Assuming bounded input embeddings (\(B\)) and spectral norm control via weight decay, we arrive at:
\begin{equation}
\|J_{z}\|_{2\to2} \le B\,W_{\max}^{\star}\bigl(1 + L\,W_{\max}^{\star}\sqrt{d_{\mathrm{model}}}\bigr),
\label{eq:Jz_bound}
\end{equation}
with weight decay keeps \(W_{\max}\le \lambda^{-1/2}\,\|\theta\|\) and therefore the product is bounded by a \emph{dataset-dependent constant}
\begin{equation}
\label{eq:A2_C1_def}
C_{1} 
= 2\,B\,W_{\max}^{\star}\!\bigl(1 + L\,W_{\max}^{\star}\,\sqrt{d_{\text{model}}}\bigr),
\end{equation}
where \(W_{\max}^{\star}\) denotes the maximal operator norm attained during training. This yields a uniform gradient bound constant \(C_1\) as used in Lemma~4. Empirically \(W_{\max}^{\star}\approx 0.9\) for GPT-Neo-125M with \(\lambda=10^{-3}\), giving \(C_{1}\approx7.6\).

The Laplacian \(\Delta \ell_i\) (trace of the Hessian) admits a decomposition:
\begin{align}
& \Delta \ell_i = \operatorname{Tr}\big(J_z^{\top}[\mathrm{diag}(\sigma) - \sigma \sigma^\top]J_z\big)  \nonumber \\
&\quad\quad\quad\quad\quad\quad\quad\quad + \sum_{v=1}^{V} \operatorname{Tr}\left[(\nabla^2 z_v)(\sigma_v - \mathbf{1}_{v = y_i})\right]. \nonumber 
\end{align}
where \(\mathrm{diag}(\sigma)\) is the diagonal matrix of softmax scores and \(\nabla^2 z_v\) denotes the block-diagonal Gauss–Newton term contributed by feed-forward weights. Applying norm inequalities and simplifying with architectural constraints similar to those above, we define a dataset-dependent constant \(C_2\) satisfying
\begin{equation}
\bigl|\Delta \ell_i\bigr| \;\le\; 2\,\|J_{z}\|_{F}^{2}  + \sum_{v=1}^{V} \|\nabla^{2} z_{v}\|_{F} \;\le\; C_{2},
\end{equation}
where $C_2 = 2V B^2 W_{\max}^{\star 2}(1 + o(1)) + L d_{\mathrm{model}}^2 W_{\max}^{\star 2}$, which guarantees bounded curvature contributions in Lemma~4 under the stated assumptions.

\subsection{Variance Bound for Hutchinson Trace Estimation}
\label{app:hutchinson}

Sec.~IV relies on Hutchinson’s estimator to approximate \(\Delta \mathcal{F}(\theta) = \operatorname{Tr}[\nabla^2 \mathcal{F}(\theta)]\) via Monte Carlo sampling. Given \(M\) Rademacher probes \(v^{(m)} \in \{-1,1\}^d\), we estimate:
\[
\widehat{\Delta \mathcal{F}}^{(M)} = \frac{1}{M} \sum_{m=1}^{M} \big\langle v^{(m)},\,\nabla^2 \mathcal{F}(\theta)\,v^{(m)} \big\rangle.
\]
This estimator is unbiased. Its variance is given by:
\begin{equation}
\mathrm{Var}\big[\widehat{\Delta \mathcal{F}}^{(M)}\big] = \frac{2}{M} \sum_{i<j} (\lambda_i - \lambda_j)^2,
\label{eq:hutchinson_var}
\end{equation}
where \(\lambda_i\) denote the eigenvalues of \(\nabla^2 \mathcal{F}(\theta)\) \cite{avron2011randomized}. For our model, the spectrum exhibits a narrow bulk and a few dominant eigenvalues due to regularization. Substituting \(\lambda_{\max} \sim \lambda \|\theta\|\), we estimate $\text{MSE} \le \sqrt{2d/M}\,\lambda\,\|\theta\|$. 
With \(M = 64\), \(d \approx 1.25 \times 10^8\), and \(\lambda = 10^{-3}\), the estimator yields \(<3\%\) absolute error once \(\|\theta\| \gtrsim 10^4\), matching the regime in Sec.~IV.

\begin{table}[t!]
\centering
\begin{scriptsize}
\caption{Aggregate statistics of Hutchinson trace estimates. Results averaged over 64 probe vectors at each checkpoint, across 60 (PTB) or 12 (WT-103) checkpoints per run.}
\label{tab:B1_statistics}

\begin{tabular}{@{}c c c c c@{}}
\toprule
\(\lambda\) & Dataset & Mean (\(\times 10^8\)) & Rel. Error & 95\% CI \\
\midrule
0        & PTB     & 1.74 & 0.109 & 0.048 \\
\(10^{-3}\) & PTB     & 6.42 & 0.048 & 0.078 \\
\(10^{-2}\) & PTB     & 58.4 & 0.050 & 0.73 \\
0        & WT-103  & 1.69 & 0.124 & 0.054 \\
\(10^{-3}\) & WT-103  & 6.31 & 0.051 & 0.080 \\
\(10^{-2}\) & WT-103  & 57.9 & 0.052 & 0.76 \\
\bottomrule
\end{tabular}
\end{scriptsize}
\end{table}

\section{Empirical Trace Variance Statistics}
\label{app:AppendixB}

Table~\ref{tab:B1_statistics} summarizes Hutchinson trace variance statistics across all training checkpoints and random seeds, including mean, relative error, and 95\% confidence intervals for each dataset and weight decay setting. See Sec.~VI-H for discussion of variance behavior during training.

All values reflect statistics over instantaneous trace estimates \(\widehat{\Delta\mathcal{F}}^{(1)} = \langle v, \nabla^{2}\mathcal{F}\,v \rangle\), computed using Rademacher vectors with \(M = 64\) samples per checkpoint. Trace estimates were computed using the same mixed-precision setup described in Sec.~VI. For the 125-million-parameter GPT-Neo model, this setup yields a stable relative error regime of approximately 5\% (Table~\ref{tab:B1_statistics}), confirming both the theoretical predictions from Appx.~\ref{app:AppendixA} and the empirical observations across checkpoints.

\end{document}